\documentclass{article}
\usepackage{iclr2026_conference,times}

\usepackage{amsmath,amssymb,amsfonts,amsthm,mathtools}
\usepackage{bm}
\usepackage{graphicx}
\usepackage{tikz}
\usetikzlibrary{arrows.meta}
\usepackage{hyperref}
\usepackage{url}
\usepackage{booktabs}
\usepackage{subcaption}
\usepackage{microtype}

\theoremstyle{plain}
\newtheorem{theorem}{Theorem}
\newtheorem{lemma}[theorem]{Lemma}
\newtheorem{proposition}[theorem]{Proposition}
\newtheorem{corollary}[theorem]{Corollary}
\theoremstyle{definition}
\newtheorem{definition}[theorem]{Definition}

\DeclareMathOperator{\Lip}{Lip}
\DeclareMathOperator{\dist}{dist}

\DeclareMathOperator{\conv}{conv}
\DeclareMathOperator{\Cov}{Cov}
\DeclareMathOperator{\diag}{diag}

\DeclareMathOperator{\tr}{tr}
\DeclareMathOperator{\mmse}{mmse}

\DeclareMathOperator*{\argmax}{arg\,max}
\DeclareMathOperator*{\argmin}{arg\,min}
\DeclareMathOperator*{\esssup}{ess\,sup}
\newcommand{\R}{\mathbb{R}}
\newcommand{\E}{\mathbb{E}}
\newcommand{\Prob}{\mathbb{P}}
\newcommand{\N}{\mathcal{N}}
\newcommand{\gauss}{\gamma}
\newcommand{\eps}{\varepsilon}
\newcommand{\push}{{}_\#}
\newcommand{\norm}[1]{\left\lVert #1 \right\rVert}
\newcommand{\abs}[1]{\left\lvert #1 \right\rvert}
\newcommand{\inner}[2]{\langle #1, #2 \rangle}
\newcommand{\Jint}{\mathfrak{J}}

\newcommand{\DABI}{\mathrm{DABI}}
\newcommand{\CCI}{\mathrm{CCI}}

\title{Why Do Few-Step Text Latents Fail When Image Latents Work?\\
Non-Commitment at Sharp Categorical Readouts}

\author{Zhongyao Wang\\
Fudan University \quad Shanghai Innovation Institute\\
\texttt{25113050137@m.fudan.edu.cn}}

\iclrfinalcopy  

\begin{document}
\maketitle

\begin{abstract}
Deterministic few-step generation succeeds on continuous image latents but collapses to incoherent text on continuous text latents, and we show the cause is geometric rather than a training or scaling deficiency: a smooth, regularity-limited deterministic map cannot resolve a discrete branch choice before a sharp categorical readout, so few-step failure is governed by decoder sharpness, not transport accuracy.
In the \emph{overlapping} regime of real text autoencoders, we prove (Theorem~\ref{thm:noncommit}) that the posterior-mean terminal step flips tokens at the rate of the latent mass in an $O(s(t))$ tube around decision boundaries; the rate is set by decoder sharpness, not transport accuracy.
Two diagnostics, DABI (readout sharpness) and CCI (categorical commitment), measured on published checkpoints show that four independently built continuous-text decoders amplify a boundary-aligned perturbation far beyond a norm-matched isotropic one ($\DABI$ from $5{\times}10^2$ to ${>}10^5$), while image decoders have $\DABI \approx 1$.
Two mechanisms escape the continuous bound: categorical commitment (autoregressive decoders succeed despite \emph{sharper} readouts; removing it collapses generation) and stochastic re-injection (deterministic ODE at $K{=}4$ gives PPL $294$ versus SDE $50$ on the same model).
In the idealized \emph{separated} regime we prove matching sharp transport laws, including a dimension phase diagram: the deterministic stiffness needed to separate $M$ modes grows as $\Theta(\sqrt{\log M})$ once the latent dimension is $\Omega(\log M)$ (and as $M^{1/n}$ in fixed dimension), with a depth-$B$ hierarchy giving a $\sqrt{B}$-smaller per-step peak (Theorems~\ref{thm:gamma}--\ref{thm:hier}); a coarea identity links these to the overlapping tube (Theorem~\ref{thm:bridge}).
The result is an accuracy--depth--stiffness tradeoff: within the deterministic-continuous class the cost is irreducible, and both escapes step outside it.
\end{abstract}

\begin{figure}[t]
\centering
\resizebox{\textwidth}{!}{%
\begin{tikzpicture}[>={Latex[length=2mm]}, font=\small]
\node[font=\small\bfseries] at (2.1,6.20) {image latent};
\node[font=\small\bfseries] at (10.8,6.20) {text latent};
\draw[gray!35] (8.35,1.85) -- (8.35,5.95);
\shade[inner color=blue!22, outer color=blue!2] (2.1,4.4) circle (1.72);
\draw[blue!28, thin] (2.0,4.45) ellipse (1.50 and 1.26);
\draw[blue!34, thin] (2.15,4.40) ellipse (1.05 and 0.92);
\draw[blue!40, thin] (2.10,4.42) ellipse (0.62 and 0.55);
\shade[inner color=gray!70, outer color=gray!8] (2.10,4.42) circle (0.16);
\draw[densely dashed, gray!55] (2.10,4.42) circle (0.30);
\draw[->, gray!70] (3.95,4.4) -- (4.55,4.4);
\node[draw=gray!55, rounded corners, fill=gray!5, minimum width=1.3cm, minimum height=0.72cm, font=\footnotesize] (decA) at (5.25,4.4) {decoder};
\draw[->, gray!70] (5.95,4.4) -- (6.55,4.4);
\node[draw=gray!50, rounded corners, fill=blue!3, minimum width=1.55cm, minimum height=1.45cm] (imgA) at (7.35,4.4) {};
\begin{scope}
  \clip[rounded corners] (6.575,3.675) rectangle (8.125,5.125);
  \fill[blue!4] (6.575,3.675) rectangle (8.125,5.125);
  \fill[orange!35] (6.95,4.78) circle (0.13);
  \fill[blue!30] (6.70,3.95) -- (7.10,4.70) -- (7.50,3.95) -- cycle;
  \fill[blue!42] (7.25,3.95) -- (7.68,4.90) -- (8.10,3.95) -- cycle;
  \draw[blue!30] (6.575,3.95) -- (8.125,3.95);
\end{scope}
\draw[green!55!black, line width=1.1pt] (4.55,2.30) -- (4.70,2.13) -- (5.00,2.50);
\node[anchor=west] at (5.10,2.32) {$\DABI \approx 1$};
\fill[red!7]    (9.00,2.80) -- (10.20,2.80) -- (10.05,3.95) -- (9.00,3.95) -- cycle;
\fill[orange!9] (10.20,2.80) -- (11.40,2.80) -- (11.55,3.70) -- (10.05,3.95) -- cycle;
\fill[red!11]   (11.40,2.80) -- (12.60,2.80) -- (12.60,3.75) -- (11.55,3.70) -- cycle;
\fill[red!12]   (9.00,3.95) -- (10.05,3.95) -- (10.35,5.05) -- (9.00,5.00) -- cycle;
\fill[red!5]    (10.05,3.95) -- (11.55,3.70) -- (11.25,4.85) -- (10.35,5.05) -- cycle;
\fill[orange!11](11.55,3.70) -- (12.60,3.75) -- (12.60,4.90) -- (11.25,4.85) -- cycle;
\fill[orange!8] (9.00,5.00) -- (10.35,5.05) -- (10.15,6.00) -- (9.00,6.00) -- cycle;
\fill[red!9]    (10.35,5.05) -- (11.25,4.85) -- (11.50,6.00) -- (10.15,6.00) -- cycle;
\fill[red!13]   (11.25,4.85) -- (12.60,4.90) -- (12.60,6.00) -- (11.50,6.00) -- cycle;
\draw[gray!60] (9.00,2.80) rectangle (12.60,6.00);
\draw[gray!60] (9.00,3.95) -- (10.05,3.95) -- (11.55,3.70) -- (12.60,3.75);
\draw[gray!60] (9.00,5.00) -- (10.35,5.05) -- (11.25,4.85) -- (12.60,4.90);
\draw[gray!60] (10.20,2.80) -- (10.05,3.95) -- (10.35,5.05) -- (10.15,6.00);
\draw[gray!60] (11.40,2.80) -- (11.55,3.70) -- (11.25,4.85) -- (11.50,6.00);
\foreach \x/\y/\w in {9.55/3.35/know, 10.78/3.28/good, 12.02/3.24/time,
  9.52/4.42/work, 11.05/4.22/love, 11.98/4.30/need,
  9.55/5.52/make, 10.86/5.50/world, 12.02/5.46/here}
  \node[font=\scriptsize, text=gray!55] at (\x,\y) {\w};
\shade[inner color=gray!70, outer color=gray!8] (10.20,4.50) circle (0.16);
\draw[densely dashed, gray!55] (10.20,4.50) circle (0.30);
\draw[->, red!75!black, line width=1pt] (10.62,4.46) -- (9.92,4.55);
\node[red!75!black, font=\scriptsize] at (10.25,4.86) {flip};
\draw[->, gray!70] (12.80,4.4) -- (13.40,4.4);
\node[draw=gray!55, rounded corners, fill=gray!5, minimum width=1.3cm, minimum height=0.72cm, font=\footnotesize] (decB) at (14.10,4.4) {decoder};
\draw[->, gray!70] (14.80,4.4) -- (15.40,4.4);
\node[draw=gray!45, rounded corners, fill=gray!3, minimum width=1.75cm, minimum height=1.0cm, font=\ttfamily\scriptsize, text=gray!60, align=center] (txtB) at (16.30,4.4) {w\#rk ?o\\ v7e th\$};
\draw[red!75!black, line width=1.1pt] (13.55,2.50) -- (13.85,2.13);
\draw[red!75!black, line width=1.1pt] (13.85,2.50) -- (13.55,2.13);
\node[anchor=west] at (13.98,2.32) {$\DABI \gg 1$};
\node[font=\small, text=gray!55] at (8.6,1.30) {same smooth few-step map: a sharp readout flips text, not images};
\end{tikzpicture}%
}
\caption{\textbf{Why deterministic few-step generation works for images but not text.} A smooth few-step map delivers each latent only to within an $O(s(t))$ posterior-mean blur (the fuzzy disk). \textbf{(left)} An image decoder is smooth and has no categorical readout, so the blur is absorbed (decoder amplification $\DABI\approx1$) and the output stays correct. \textbf{(right)} A text decoder reads out by $\argmax_y w_y^\top z$, partitioning the latent into sharp argmax cells; the \emph{same} blur, landing on a decision boundary, flips the token ($\DABI\gg1$) and the text becomes incoherent. Failure is set by the sharpness of the readout, not by transport accuracy (Theorem~\ref{thm:noncommit}). Two mechanisms leave the deterministic-continuous class and escape the bound: categorical commitment (autoregressive/masked decoders) and stochastic re-injection (SDE); the $\DABI{\times}\CCI$ taxonomy (Section~\ref{sec:taxonomy}) predicts which systems fail.}
\label{fig:teaser}
\end{figure}

\section{Introduction}\label{sec:intro}

Deterministic few-step generation of images has advanced rapidly.
Consistency models~\citep{song2023consistency}, rectified flow~\citep{liu2023rectified}, and progressive distillation~\citep{salimans2022progressive} produce high-quality images in $1$--$4$ network evaluations by learning smooth transport maps from noise to data.
The same approach, applied to continuous text latents, fails (Figure~\ref{fig:teaser}).
A deterministic few-step generator of the latent space of a text autoencoder (such as ELF~\citep{he2026elf}) produces incoherent text, with repeated tokens and multilingual fragments, at $K \le 16$ steps, becoming usable only at $K \ge 32$--$64$.
Our evidence indicates this failure is not primarily explained by training, architecture, or scheduling, but is structural: when the decoder's readout boundaries are sharp relative to the transport residual, a smooth deterministic map cannot resolve the discrete branch choice within a few steps.
We formalize this for the posterior-mean terminal step and verify the prediction on published checkpoints; the bound applies to deterministic transport, which stochastic and categorical generators escape.

\paragraph{The failure mode.}
A text autoencoder encodes discrete tokens into a continuous latent $z = E(x)$ and decodes via $D_W(z) = \argmax_y w_y^\top z$.
A deterministic few-step generator is a composition of smooth maps with bounded Lipschitz constant; the readout amplifies small structured displacements near decision boundaries into token errors.
This failure mode appears in two regimes.
In the \emph{overlapping} regime (real text autoencoders), we prove (Theorem~\ref{thm:noncommit}) that the posterior-mean terminal step flips tokens at the readout-calibrated non-commitment rate: the latent mass in an $O(s(t))$ tube around decision boundaries. Failure is therefore set by decoder sharpness, not transport accuracy.
In the idealized \emph{separated} regime, transport to $M$ well-separated modes costs interface-energy stiffness (Theorems~\ref{thm:gamma}--\ref{thm:hier}).
The separated-mode $\Jint$ does \emph{not} predict real ELF residuals, yet a coarea identity (Theorem~\ref{thm:bridge}) links the two as contractions of one composite-interface geometry.

\paragraph{Main results.}
\emph{(1) Non-commitment theorem.}
The flip rate equals the readout-calibrated non-commitment $\ge$ Bayes non-commitment; asymptotically, $\Prob(\hat{Y}_t \ne Y) \asymp \Prob[\delta^*(X,Y) \lesssim s(t)]$.
An oracle roll-in on ELF (we noise a clean latent and integrate the deterministic ODE back to $t{=}1$ from each step; Figure~\ref{fig:rollin}) confirms this: $12$--$41\%$ terminal flip compounds to ${\approx}90\%$, all $K$ collapse onto one $s(t)$ curve, and the residual is structured ($10$--$33\times$ isotropic) with a ${\approx}1\%$ boundary-normal fraction.
\emph{(2) Image/text dichotomy.}
We define DABI (Decoder Amplification of Boundary-aligned Inputs): $\DABI = \Delta\mathrm{CE}_{\mathrm{struct}}/\Delta\mathrm{CE}_{\mathrm{rand}}$ at matched perturbation norm.
Under one margin-normal probe applied identically to all four text codecs, a boundary-aligned perturbation flips $47$--$77\%$ of tokens while a norm-matched isotropic one flips ${\approx}0\%$ (ELF $\DABI = 508\times$ $[446,580]$; LangFlow~\citep{langflow2026}, CoLa-DLM~\citep{cola2026}, Cosmos~\citep{cosmos2025} have isotropic response at the floor, so $\DABI \gg 10^{4}$; Table~\ref{tab:dabi_roster}, $95\%$ bootstrap CIs).
For the two flow models with a deterministic generator, the realized terminal residual already triggers this sharpness: it flips $21\times$ (ELF) and ${\approx}2{,}600\times$ (LangFlow) more tokens than an isotropic residual of the same norm.
By contrast, $4$ published image VAEs $\times$ $5$ NFE have $\DABI \in [0.85, 1.94]$: an image/text gap of orders of magnitude with no overlap.
\emph{(3) Escape mechanisms.}
\emph{Stochastic:} on the same ELF-B teacher~\citep{he2026elf}, ODE $K{=}4$ PPL${}=294$ vs SDE PPL${}=50$; even the distilled PD student~\citep{he2026elfpd} shows a persistent ODE-to-SDE gap ($15$--$38\%$ for $K \ge 2$).
\emph{Categorical:} AR decoders have $\DABI = 915$--$19{,}805\times$ (sharper than ELF) yet succeed because categorical commitment makes the generator discontinuous (Lemma~\ref{lem:escape}); removing commitment collapses generation (Dream: $5\times$ PPL jump).
\emph{(4) Sharp transport laws.}
$\lim_{\Lambda\to\infty}\Lambda\inf W_p^p = \Jint_{p,\sigma}$ (Theorem~\ref{thm:gamma}, hard-gradient $\Gamma$-limit); $\mathcal{P}_{n,M} \asymp \sqrt{\log M}$ for $n \ge C\log M$ (Theorem~\ref{thm:dim}); depth-$B$ hierarchy achieves peak ratio $\sqrt{B}$ (Theorem~\ref{thm:hier}).

\paragraph{Scope.}
All lower bounds apply to \emph{deterministic} transport.
This is an accuracy--depth--stiffness tradeoff, not an impossibility: two escapes (categorical commitment, stochastic re-injection) leave the continuous-deterministic class.

\paragraph{Related work.}
Theorem~\ref{thm:gamma} is the hard-Lipschitz counterpart of the Baldo/Fonseca--Tartar multiwell $\Gamma$-limit \citep{baldo1990,fonseca1989gradient}.
The Gaussian-width identity connects to generic chaining \citep{talagrand2005chaining} and Milman--Neeman \citep{milman2022gaussian}.
\citet{salmona2022lipschitz} showed Lipschitz pushforwards are limited on multimodal targets; our contribution is the dimension-sensitive interface law and the categorical-escape identification.
FMLM \citep{fmlm2026} is an instantiation of categorical escape (Section~\ref{sec:recent}).
ELF+PD \citep{he2026elfpd} confirms the ODE-vs-SDE gap on the model we study.
Extended related work is in Appendix~\ref{app:related}.

\section{Setup and Definitions}\label{sec:setup}

\paragraph{Source and target.}
The source is a probability measure $\sigma = \rho\,dx$ on $\R^n$ with strictly positive $C^1$ density (the standard Gaussian $\gauss_n$ being the leading example).
The target is concentrated near $M$ codebook points $\mathcal{C} = \{c_1, \ldots, c_M\} \subset \R^d$ with weights $\pi_i > 0$; we write $\mu = \sum_i \pi_i \delta_{c_i}$.
In the separated regime, $\min_{i \ne j} \norm{c_i - c_j} \ge \Delta > 0$.
In the overlapping regime (real text autoencoders), per-token posteriors overlap and $\Delta$ is not a meaningful parameter.

\paragraph{Linear readout and normalized margin.}
A linear readout decodes $z \in \R^d$ to a token: $D_W(z) = \argmax_y w_y^\top z$, where $W = [w_1, \ldots, w_V]$ is the readout matrix with vocabulary size $V$.
The normalized margin of the correct token $y$ against competitor $j$ is
\begin{equation}\label{eq:margin}
\delta^*_{yj}(z) = \frac{(w_y - w_j)^\top z}{\norm{w_y - w_j}},
\qquad
\delta^*(z, y) = \min_{j \ne y} \delta^*_{yj}(z).
\end{equation}
The decision boundaries are the hyperplanes $\Sigma_{ij} = \{z : w_i^\top z = w_j^\top z\}$.

\paragraph{Flow interpolant and posterior mean.}
Let $X$ denote the clean latent and $U_t = tX + (1-t)s_0 \eps$, $\eps \sim \gauss_n$, the flow interpolant at time $t$ with noise scale $s_0$.
The population-optimal $x$-prediction (terminal Euler step) is the posterior mean $m_t(u) = \E[X \mid U_t = u]$.
The effective noise scale at time $t$ is $s(t) = s_0(1-t)/t$; small $t$ means large noise, large $t$ means small noise.

\paragraph{Barrier metric and interface energy.}
For wells $\mathcal{C}$, put $V(y) = \dist(y, \mathcal{C})^p$ and define the barrier metric
\begin{equation}\label{eq:barrier}
\kappa_p(i,j) = \inf_{\xi: c_i \to c_j} \int_0^1 V(\xi(s)) \norm{\dot{\xi}(s)}\,ds,
\end{equation}
the least $V$-weighted path length between two wells.
The weighted interface energy and its prescribed-mass profile are
\begin{equation}\label{eq:Jdef}
\mathcal{E}_{p,\sigma}(\mathcal{B}; \mathcal{C}) = \sum_{i < j} \kappa_p(i,j)\, P_\sigma(B_i, B_j),
\qquad
\Jint_{p,\sigma}(\pi; \mathcal{C}) = \inf_{\sigma(B_i) = \pi_i} \mathcal{E}_{p,\sigma}(\mathcal{B}; \mathcal{C}),
\end{equation}
where $P_\sigma(B_i, B_j) = \int_{\partial^* B_i \cap \partial^* B_j} \rho\,d\mathcal{H}^{n-1}$ is the weighted perimeter and the infimum is over Caccioppoli partitions with prescribed masses.

\begin{definition}[DABI: decoder amplification of boundary-aligned inputs]\label{def:dabi}
Given a decoder $D$, let $r_{\mathrm{struct}}$ be a unit \emph{boundary-aligned} direction and $r_{\mathrm{rand}}$ a norm-matched isotropic Gaussian perturbation.
At perturbation fraction $f$:
\[
\DABI(D) = \frac{\Delta \mathrm{CE}(f \cdot r_{\mathrm{struct}})}{\Delta \mathrm{CE}(f \cdot r_{\mathrm{rand}})}\bigg|_{f=1}.
\]
We use two boundary-aligned directions.
The \emph{margin-normal} direction points toward the nearest decision boundary (the readout normal $w_y - w_j$, or its local gradient for a nonlinear decoder) and reaches that boundary at $f = 1$.
It is the canonical worst-case readout-sharpness probe; because it is defined for \emph{any} categorical readout, it gives one apples-to-apples number across all codecs.
The \emph{realized-residual} direction is the actual posterior-mean residual of a deterministic generator, available only when the system has one; it measures how much that readout sharpness affects real generation.
Text decoders with hard $\argmax$ boundaries have $\DABI \gg 1$ under both; smooth image decoders (no categorical readout) have $\DABI \approx 1$ on the realized residual.
For text, $\Delta\mathrm{CE}$ is the cross-entropy increase; for images, it is the pixel-$L^2$ increase.
Each is the natural output-space loss for its domain.
The ratio $\DABI$ is dimensionless within each domain: it measures how much more damaging a structured perturbation is than an isotropic one at the same norm, which is well-defined regardless of loss scale.
Cross-domain comparison is meaningful because $\DABI \approx 1$ indicates isotropic sensitivity (no boundary structure) while $\DABI \gg 1$ indicates that boundaries concentrate the sensitivity.
\end{definition}

\begin{definition}[CCI: categorical commitment index]\label{def:cci}
The CCI of a generator at step $k$ is the fraction of positions where an \emph{irreversible} discrete selection (sampling from a categorical distribution followed by fixing the token for all later steps) has been made by step $k$.
At termination: autoregressive generation has $\CCI = 1$ (every position committed); continuous flow decoding has $\CCI = 0$ (no discrete selection ever occurs).
Masked diffusion with $K$ steps commits a growing subset of positions per step, reaching $\CCI = 1$ at termination; the generation-time average $\overline{\CCI} \in (0,1)$ reflects the schedule.
\end{definition}

\section{The Non-Commitment Mechanism}\label{sec:noncommit}

This section establishes the central result: a deterministic terminal step outputs a posterior mean whose decoded-token flip rate is determined by the readout decision-boundary geometry.

\begin{theorem}[readout-calibrated non-commitment]\label{thm:noncommit}
Assume clean decoding is exact: $Y = D_W(X) = \argmax_y w_y^\top X$ almost surely (no label noise).
Let $\hat{Y}_t = D_W(m_t(U_t))$ be the decoded token of the posterior-mean terminal step, and $q_t(y \mid u) = \Prob(Y = y \mid U_t = u)$ the posterior over tokens.
Then:
\begin{enumerate}
\item[\textup{(i)}] \textbf{Equality and Bayes bound.}
\begin{equation}\label{eq:noncommit-eq}
\Prob(\hat{Y}_t \ne Y) = \E\big[1 - q_t(D_W(m_t(U_t)) \mid U_t)\big] \;\ge\; \E\big[1 - \max_y q_t(y \mid U_t)\big],
\end{equation}
with $\E[-\log p_W(Y \mid m_t(U_t))] \ge (\log 2)\, \E[1 - \max_y q_t(y \mid U_t)]$ for the cross-entropy.

\item[\textup{(ii)}] \textbf{Active-facet tube law.} If $\rho_X$ is smooth with regular decision boundaries and no positive-mass triple intersections, then as $s(t) = s_0(1-t)/t \downarrow 0$,
\begin{equation}\label{eq:tube}
\Prob(\hat{Y}_t \ne Y) = \sqrt{\frac{2}{\pi}}\, s(t)\, \sum_{i<j} \int_{F_{ij}} \rho_X\, d\mathcal{H}^{d-1} + o(s(t)) \;\asymp\; \Prob\big[\delta^*(X,Y) \lesssim s(t)\big],
\end{equation}
where $F_{ij} = \partial\Omega_i \cap \partial\Omega_j$ is the \emph{active co-maximal facet} between the argmax cells $\Omega_i = \{z : D_W(z) = i\}$, i.e.\ the $(d{-}1)$-faces where tokens $i,j$ are co-maximal.
Only adjacent decision facets contribute: a non-adjacent pairwise hyperplane $\Sigma_{ij}$ lies interior to a third token's cell and carries no flips.
The posterior-mean readout $D_W(m_t)$ and the Bayes/MAP classifier $\argmax_y q_t(y\mid\cdot)$ share this leading constant, since their switching surfaces agree to $O(s^2)$ by Tweedie's formula (Appendix~\ref{app:noncommit}).
\end{enumerate}
\end{theorem}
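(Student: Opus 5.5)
Part (i) follows from the tower property once we notice that $\hat{Y}_t$ is a deterministic function of $U_t$. Write $g(u) = D_W(m_t(u))$. Then
\[
\Prob(\hat{Y}_t \ne Y) = \E\big[\Prob(Y \ne g(U_t) \mid U_t)\big] = \E\big[1 - q_t(g(U_t)\mid U_t)\big],
\]
which is the equality in \eqref{eq:noncommit-eq}. The inequality holds pointwise, since $q_t(g(u)\mid u) \le \max_y q_t(y\mid u)$; this is the Bayes optimality of MAP among all $\sigma(U_t)$-measurable predictors. The cross-entropy bound also works pointwise in $u$. Let $p = p_W(\cdot\mid m_t(u))$. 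By Gibbs' inequality,
\[
\E[-\log p(Y)\mid U_t=u] \ge H(q_t(\cdot\mid u)).
\]
For any distribution $q$, the entropy satisfies $H(q) \ge -\log \max_y q(y)$, and in turn $-\log \max_y q(y) \ge (\log 2)(1-\max_y q(y))$ by concavity. The last step uses the chord of $-\log x$ on $[1/2,1]$. For $\max_y q < 1/2$, use $H(q) \ge \log 2 \ge (\log 2)(1-\max q)$. Taking expectations finishes (i).

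For part (ii) I use the exact-decoding assumption, which gives $Y = i$ if and only if $X \in \Omega_i$. Conditionally on $X = x$, we have $U_t/t = x + s(t)\eps$ (up to the scaling convention $s_0\eps$). By Tweedie's formula,
\[
m_t(u) = u/t + s^2 \nabla \log \rho_{s}(u/t),
\]
where $\rho_s = \rho_X * \gauss_s$, so $m_t(U_t) = x + s\eps + O(s^2)$ uniformly on compact sets away from cusps. Hence a flip occurs only when $x + s\eps$ crosses a facet, up to an $O(s^2)$ displacement of the switching surface. I would localize by a partition of unity near each facet $F_{ij}$. Near a point of $F_{ij}$ I use coordinates $(\xi, h)$, where $h$ is the signed distance to the hyperplane $\Sigma_{ij}$; the density is $\rho_X(\xi,h) = \rho_X(\xi,0) + O(h)$. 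The flip probability from this facet is then
\[
\int_{F_{ij}} \int_{\R} \rho_X(\xi,h)\,\Prob\big(\operatorname{sign}(h + s\eps_1 + O(s^2)) \ne \operatorname{sign}(h)\big)\, dh \, d\mathcal{H}^{d-1}(\xi).
\]
The inner integral equals $2 s\,\E[\eps_1^+]\rho_X(\xi,0) + o(s)$, and $2\E[\eps_1^+] = \sqrt{2/\pi}$. Summing over facets gives \eqref{eq:tube}. The tail contribution from $|h| \gg s$ is Gaussian-small. The $(d-2)$-dimensional intersections contribute $O(s^2)$ mass by the no-triple-intersection assumption. Non-adjacent hyperplanes $\Sigma_{ij}$ do not appear, because crossing them inside a third cell does not change the argmax.

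The comparison $\asymp \Prob[\delta^*(X,Y)\lesssim s]$ comes from the same computation. By the coarea formula, the mass of the tube $\{\delta^* \le cs\}$ equals $cs\sum_{i<j}\int_{F_{ij}}\rho_X + o(s)$. For the MAP classifier I would show that its switching surface $\{q_t(i\mid u) = q_t(j\mid u)\}$ is the facet shifted by $O(s^2)$, by expanding the posterior log-odds with the same Tweedie and Laplace expansion. An $O(s^2)$ shift changes the flip mass by only $O(s^2)$, so the two classifiers share the leading constant.

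The main obstacle is making the Tweedie expansion uniform near the facets. There $\rho_X$ is smooth, but $m_t$ is not simply $u/t$: the score $\nabla\log\rho_s$ may be large where $\rho_X$ is small. I would first treat the case where $\rho_X$ is bounded below on a neighborhood of $\bigcup F_{ij}$. In that case the score correction is $O(s^2)$ uniformly, and a truncation handles the low-density regions, whose contribution is $o(s)$ because the integrand is bounded by the tube mass.
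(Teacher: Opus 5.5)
Your proposal is correct. For part (ii) it is organized differently from the paper's proof.

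\textbf{How the two routes differ in (ii).} The paper never computes the posterior-mean flip rate directly. It first evaluates the Bayes non-commitment exactly, $B_s=\int(\sum_k a_{k,s}-\max_k a_{k,s})\,dz$ with $a_{k,s}=(\rho_X\mathbf 1_{\Omega_k})*\varphi_s$, and extracts $\sqrt{2/\pi}\,sA_W$ from the half-space constant $\int\min\{\Phi(a),\Phi(-a)\}\,da$. It then uses the exact sandwich $B_s\le\Prob(\hat Y\ne Y)\le B_s+\Prob(d^{\mathrm{mean}}\ne d^{\mathrm{MAP}})$. The last term is killed by showing that the MAP surface (from a first-order expansion of $a_{i,s}-a_{j,s}$) and the mean surface (from Tweedie) both sit at normal coordinate $a=-s\,\partial_n\log\rho_X+O(s^2)$.

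You instead condition on $X$ and use Tweedie to reduce $D_W(m_s(Z_s))$ to the plug-in readout $D_W(Z_s)$ with an $O(s^2)$ surface shift. You then integrate the one-dimensional Gaussian crossing probability and treat MAP as an add-on.

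\textbf{What each route buys.} Your route is shorter. It is also exactly the computation the paper later reuses for structured residuals and learned denoisers ((C.32), Proposition~\ref{prop:approx-denoiser}), with $s\eps_1$ replaced by the realized normal residual. The paper's route yields the asymptotics of $B_s$ as an intrinsic lower bound for every readout of $Z_s$. That bound comes from the blurred class densities alone, with no control of the score. It also isolates MAP/mean agreement (C.12) as the single extra hypothesis. This is the form invoked in (B4) of the bridge theorem, and it still leaves $\Prob(\hat Y\ne Y)\ge B_s$ when agreement fails.

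\textbf{Part (i).} Your Gibbs route gives a readout-agnostic bound, $\E[-\log p_W(Y\mid m_t)]\ge\E[H(q_t)]\ge(\log 2)\,\E[1-\max_y q_t]$. The paper instead uses that $d$ maximizes the logits at $m_t$, so $-\log p_W(y\mid m_t)\ge\log 2$ for every $y\ne d$. That ties the cross-entropy to the realized flip rate, $(\log 2)\Prob(\hat Y_t\ne Y)$, rather than to the Bayes rate.

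\textbf{Small repairs, none structural.}
\begin{enumerate}
\item The chord of the convex function $-\log x$ lies above its graph, so it gives the wrong inequality. Use the tangent at $1$: $-\log x\ge 1-x\ge(\log 2)(1-x)$ on $(0,1]$. This also removes the case split.
\item The set $\{\delta^*\le cs\}$ is a two-sided tube of mass $2csA_W+o(s)$, as in the paper's (C.25). This is harmless for $\asymp$, but the factor matters once the margin density is used as a calibrated predictor.
\item Your $O(s^2)$ bound on the skeleton contribution needs a tube-mass hypothesis like (R3), $\Prob[\dist(X,\mathcal T)\le r]=O(r^2)$. Null triple sets alone are not enough.
\item The readout has no bias, so the facets are unbounded cones. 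No integrable $\rho_X$ can be bounded below near all of $\bigcup F_{ij}$, so your ``first case'' is vacuous and the truncation is the real argument. Outside the compact piece, $s^2\nabla\log\rho_s$ is no longer uniformly $O(s^2)$. There you need an integrability condition like (R4), not the tube mass alone.
\end{enumerate}
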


\begin{proof}[Proof sketch (full proof in Appendix~\ref{app:noncommit})]
\emph{Equality.} Condition on $U_t = u$: the deterministic output $m_t(u)$ and hence the decoded token $d = D_W(m_t(u))$ are fixed, while the true token $Y$ is distributed as $q_t(\cdot \mid u)$.
Thus $\Prob(\hat{Y}_t \ne Y \mid U_t = u) = 1 - q_t(d \mid u)$; integrating gives the equality.
The Bayes bound follows because $q_t(d \mid u) \le \max_y q_t(y \mid u)$.

\emph{CE bound.} For any $y \ne d$, $-\log p_W(y \mid m_t) \ge \log(1 + e^{(w_d - w_y)^\top m_t}) \ge \log 2$ because $d$ is the $\argmax$.
Weighting by $q_t(y \mid u)$ and summing over $y \ne d$ gives the stated bound.

\emph{Tube law.} Rescale: $U_t/t = X + s(t)\eps$ with $s(t) = s_0(1-t)/t$, so $\max_y q_t(y \mid U_t)$ is the posterior of $Y$ given $X + s(t)\eps$.
The Bayes non-commitment at noise scale $s$ is the $\gauss$-blurred boundary mass.
A one-dimensional Laplace expansion across each active facet $F_{ij}$, using smoothness of $\rho_X$ and regularity of the boundaries, gives the leading term; non-adjacent hyperplanes contribute only $o(s)$.
The equivalence $\asymp \Prob[\delta^* \lesssim s(t)]$ follows because the normalized distance to the nearest boundary is $\delta^*$.
\end{proof}

\paragraph{Relevance to learned generators.}
Theorem~\ref{thm:noncommit} is stated for the population posterior mean.
A trained denoiser with squared-loss targets $m_t$; at convergence, its terminal step approximates the posterior-mean step.
The oracle roll-in (Section~\ref{sec:rollin}) confirms that the \emph{published ELF teacher's learned ODE} tracks the posterior-mean predictions: all $K$ collapse onto the $s(t)$ curve, and structured-vs-random ratios ($10$--$33\times$) match DABI.

\paragraph{Two features matching the data.}
\emph{(i) Only the row space matters.}
A flip requires $(w_Y - w_j)^\top (X - m_t) / \norm{w_Y - w_j} \ge \delta^*_{Yj}(X)$: only the projection of the residual $X - m_t$ onto the boundary normals $w_Y - w_j$ enters.
On ELF, the terminal residual is $97$--$99\%$ in the decoder null space, yet the small (${\sim}1\%$) structured component along boundary normals suffices to flip tokens.

\emph{(ii) Sharpness is decisive.}
Small normalized margins $\delta^*$ make a tiny structured displacement enough to flip the token.
A smooth image decoder, lacking a hard $\argmax$ boundary, tolerates the same posterior-mean blur that flips tokens under a sharp text readout.
This is the mechanism behind the image/text dichotomy.

\paragraph{Two contractions of one interface (the bridge).}
Theorem~\ref{thm:noncommit} (overlapping regime) and Theorems~\ref{thm:gamma}--\ref{thm:dim} (separated regime) are not a single scalar functional: $\Jint$ does \emph{not} predict real ELF residuals.
They are, however, two contractions of \emph{one} geometric object.
Consider the composite source-space classifier $F = D_W \circ T$.
A coarea identity (Theorem~\ref{thm:bridge}, Appendix~\ref{app:bridge}) writes the readout boundary mass $A_W = \sum_{i<j}\int_{F_{ij}}\rho_X\,d\mathcal H^{d-1}$ as the source-interface perimeter, discounted pointwise by the readout-normal stretch $\norm{DT^\top n_{ij}}$.
The separated energy $\Jint$ is the barrier-weighted stretch moment of that same source interface.
Composing with the tube law gives the rigorous accuracy--depth--stiffness product law: for a $K$-step generator $T = T_K \circ \cdots \circ T_1$,
\begin{equation}\label{eq:bridge-main}
\liminf_{s\downarrow0}\frac{\prod_{k}\Lip(T_k)}{s}\,\Prob(\hat Y_s \ne Y)\;\ge\;\sqrt{\tfrac{2}{\pi}}\,\frac{\Jint_{p,\sigma}(\pi;C)}{\kappa_{\max}(C)}.
\end{equation}
The separated $\Jint$ lower-bounds the deterministic stiffness budget; the overlapping tube law converts the leftover noise into token error.
Equation~\eqref{eq:bridge-main} holds under three per-system-checkable preconditions on $F = D_W\circ T$: (i) \emph{transversality} ($DT$ full rank, no tangential degeneracy at the interface); (ii) \emph{alignment} (the source interface is exhausted by $T^{-1}(F_{ij})$); (iii) \emph{regularity} (the finite-perimeter and density-trace conditions of Appendix~\ref{app:bridge}). On ELF the roll-in results (Section~\ref{sec:rollin}) are consistent with these premises at the probed resolution; we apply \eqref{eq:bridge-main} as a \emph{conditional} product law whose only alignment failure mode weakens the bound by an explicit, measurable off-facet energy $E_{\mathrm{off}}$ (Appendix~\ref{app:bridge}), not as a global certificate.
No $W$-blind scalar functional reproduces both constants (Proposition~\ref{prop:dichotomy}): the overlapping cost depends on the readout $W$ while the separated cost depends on the codebook scale, so they are genuinely distinct contractions of the shared interface, not a single functional.

\section{Empirical Evidence}\label{sec:empirics}

All experiments use published checkpoints: the ELF-B text autoencoder \citep{he2026elf} (latent dimension $d = 512$, sequence length $S = 128$), the released image-VAE decoders of Lumina-Next \citep{luminanext2024}, SANA-1.5 \citep{sana2024}, Z-Image, and FLUX.1 (AutoencoderKL / AutoencoderDC variants, used at their default configurations), and published AR models (Llama-2~\citep{touvron2023llama2}, Qwen-2.5~\citep{qwen2025}, Qwen-3).
Exact checkpoint identifiers and preprocessing are listed in the released code.
Decoder sensitivity uses $n = 5{,}000$ token positions (OpenWebText validation); roll-in uses $n = 512$ paired latents; the $\gamma$-sweep covers $60$ cells with $256$ samples per cell.
All results are deterministic given the checkpoint and data split.

\subsection{Decoder sensitivity}\label{sec:decsens}

We characterize the ELF-B decoder's response to structured versus random perturbations on $n = 5{,}000$ token positions (Figure~\ref{fig:decsens}).

\emph{Basin width and structured anisotropy.}
The ELF-B encoder achieves $96.7\%$ clean token recovery ($\eta_0 = 3.3\%$ label noise; the offset is additive, Appendix~\ref{app:noncommit}), unchanged under $\sigma = 1.0$ isotropic noise, consistent with a median normalized margin $\delta^* = 5.6$ (Eq.~\ref{eq:margin}; raw logit gap $21.5$), since isotropic noise has a unit-variance boundary-normal component and rarely crosses a $5.6\sigma$ margin.
The actual $K{=}1$ student residual at matched norm, however, flips $70.4\%$ of tokens (vs $3.3\%$ random), a $21.2\times$ ratio, despite a row-space component of only $2.49\%$; and the MSE gradient is orthogonal to the CE gradient (cosine $0.001$).

\emph{Continuous response and DABI.}
Sweeping the perturbation fraction $f \in [0, 1.5]$: at $f = 1.0$ (full student residual), the structured cross-entropy increase is $\Delta\mathrm{CE}_{\mathrm{struct}} = 6.07$ versus $\Delta\mathrm{CE}_{\mathrm{rand}} = 0.13$, giving $\DABI = 45.7\times$.
The structured response is superlinear with onset at $f \approx 0.8$, while the random response remains sublinear throughout.

\emph{Decoder-intervention probe.}
A natural fix is to blunt the readout. We test the cheapest decoder intervention, test-time logit smoothing ($D_\sigma(z) = \argmax_y \E_\xi[\,\text{logits}(z + \sigma\,\mathrm{rms}(z)\,\xi)\,]$, which averages the readout over an isotropic neighborhood), sweeping $\sigma \in [0, 0.8]$.
It does not lower DABI: $\DABI$ stays at $48$--$58$ and the structured flip rate at $0.86$ across the whole sweep, even as clean recovery begins to fall ($1.00 \to 0.99$).
Isotropic smoothing averages over the ${\approx}97\%$ null-space component and barely touches the ${\approx}1\%$ boundary-normal subspace where the sensitivity lives, so it cannot reduce DABI.
The implication is causal: a fix must reshape the boundary geometry, not average isotropically.
We test this by retraining the ELF readout head directly, transport held byte-identical (Appendix~\ref{app:causal}): neither max-margin retraining (which \emph{raises} $\DABI$, $453 \to 1{,}469\times$) nor retraining the readout to decode the few-step latents (even while collapsing its margins, $\delta^*\!: 5.4 \to 3.0$) helps, the latter recovering only $1.2$ of the $41$ flipped points at $K{=}4$ ($59.3 \to 60.5\%$ held-out accuracy).
No readout change recovers the flipped tokens: once the deterministic posterior mean averages over the branch ambiguity, the correct token is no longer a function of the latent.
Decoder sharpness amplifies a transport-side failure; it is not an independently fixable cause.

\begin{figure}[t]
\centering
\includegraphics[width=0.8\textwidth]{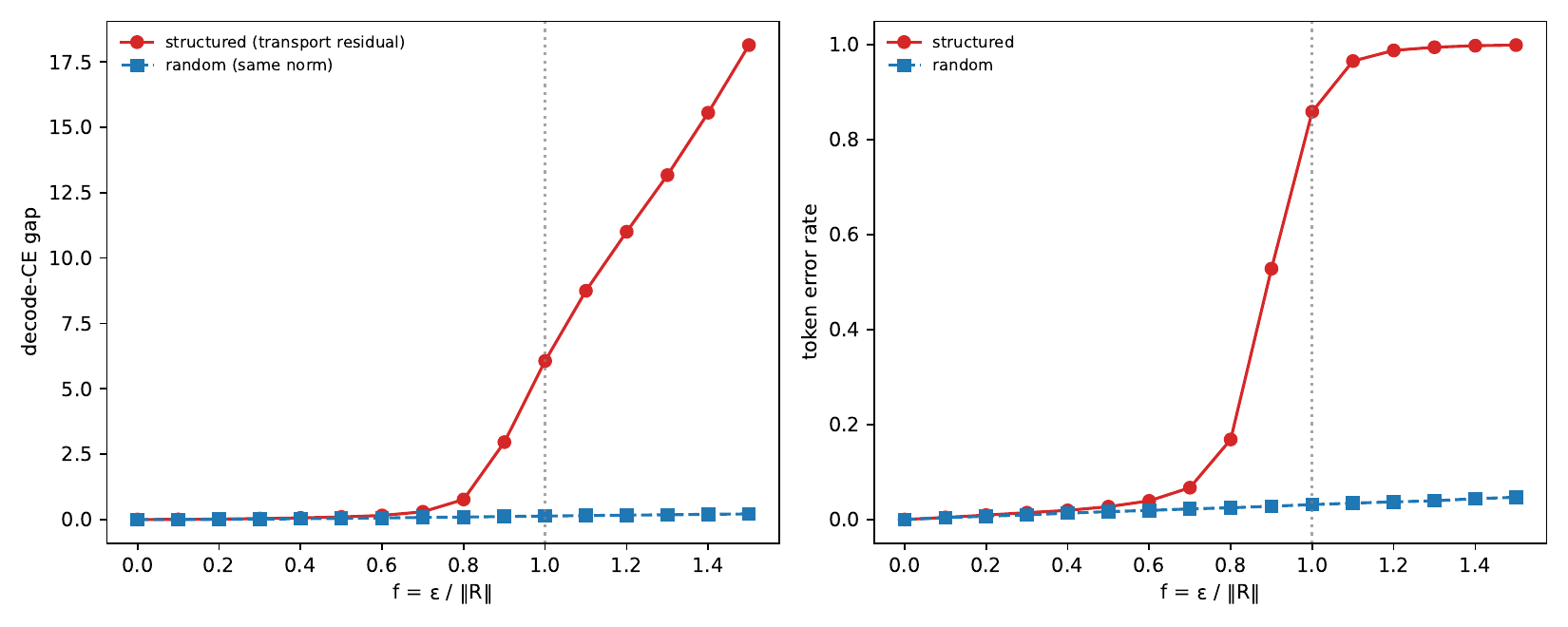}
\caption{\textbf{Decoder sensitivity.} ELF-B ($d = 512$, $S = 128$).
\textit{Left:} Structured versus isotropic CE response; $\DABI = 45.7\times$ at $f = 1$.
\textit{Right:} Flip rate; student residual at matched norm flips $21.2\times$ more tokens.}
\label{fig:decsens}
\end{figure}

\subsection{Image/text dichotomy}\label{sec:dabi}

\paragraph{One probe, four text codecs.}
To compare codecs apples-to-apples, we apply the \emph{same} margin-normal probe (Definition~\ref{def:dabi}: perturb each clean latent toward its nearest decision boundary, reaching the boundary at $f = 1$, against a norm-matched isotropic control) to all four published continuous-text decoders, with $95\%$ bootstrap CIs over positions (Table~\ref{tab:dabi_roster}; visualized in Appendix Figure~\ref{fig:dabi_scatter}).
The boundary-aligned perturbation flips $47$--$77\%$ of tokens; the isotropic one flips ${\approx}0\%$.
On ELF the isotropic response is small but nonzero, giving a well-conditioned $\DABI = 508\times$ $[446, 580]$ ($32{,}768$ positions); on LangFlow~\citep{langflow2026}, CoLa-DLM~\citep{cola2026}, and Cosmos~\citep{cosmos2025} the isotropic response sits at the floor, so $\DABI$ exceeds $10^{4}$.
Built very differently (frozen-T5~\citep{raffel2020t5} encoding, embedding rounding, stochastic VAE, non-autoregressive autoencoder), the four codecs share the same sharp readout.

\paragraph{Readout sharpness affects real generation.}
The margin-normal probe measures worst-case sharpness; for the two systems with a deterministic generator we can also use the \emph{realized} terminal residual.
The realized column is defined only here: CoLa-DLM (SDE${+}$AR) and Cosmos ship no few-step deterministic generator, so neither has a posterior-mean residual to probe.
On ELF it flips $21\times$ more tokens than a norm-matched isotropic residual ($\DABI = 45.7\times$ in CE); on LangFlow ($d = 768$, $99.99\%$ clean decoding) it flips ${\approx}2{,}600\times$ more ($95\%$ CI $[1{,}800, 3{,}800]$ over $24{,}960$ positions; $\Delta\mathrm{CE}_{\mathrm{struct}} = 40.5\,[39.8, 41.1]$ vs isotropic $3.9\times10^{-4}$).
LangFlow's deterministic few-step generation is correspondingly degenerate (Appendix~\ref{app:extended}).
The realized residual is ${\approx}99\%$ in the decoder null space, yet its small boundary-aligned component flips tokens (Theorem~\ref{thm:noncommit}).

\paragraph{Image decoders absorb the same perturbation.}
On four families of published image VAE decoders (Lumina-Next~\citep{luminanext2024}, SANA-1.5~\citep{sana2024}, Z-Image, FLUX.1; $5$ NFE each, Appendix Figure~\ref{fig:dabi_scatter}), the realized residual gives $\DABI \in [0.85, 1.94]$: lacking a categorical readout, an image decoder has no boundary to align with and absorbs the same posterior-mean blur that fails a text decoder. The image/text gap is more than $20\times$ with no overlap.
The ${\approx}1$ persists under the learned LPIPS perceptual metric (ratios $0.7$--$1.2$, Appendix~\ref{app:causal}), so it is not a pixel-$L^2$ artifact.
The dichotomy tracks the readout, not the domain: a \emph{categorical} (vector-quantized) image readout recovers the text signature, a boundary-aligned perturbation flipping the MoVQGAN code $167\times$ more than isotropic ($\DABI=42\times$) despite a $4$-dimensional latent with almost no null space (Appendix~\ref{app:causal}).
The conclusion rests on \emph{within}-domain ratios (image decoders near $1$, text codecs above $10^2$, no overlap), so it does not hinge on comparing cross-entropy to pixel-$L^2$ across domains: each $\DABI$ uses the natural output-space loss of its own domain (Definition~\ref{def:dabi}). AR and masked-dLM readouts are uniformly sharp under the same probe ($46$--$57\%$ vs ${\approx}0\%$; seven models, Appendix~\ref{app:recent}).

\begin{table}[t]
\centering
\caption{\textbf{Decoder amplification across systems} (margin-normal probe, $95\%$ bootstrap CIs over positions). All four continuous-text codecs flip $47$--$77\%$ of tokens vs ${\approx}0\%$ isotropic; ELF's isotropic response is nonzero (finite $\DABI$), the others are at the floor ($\DABI$ a lower bound). \emph{realized} column: realized terminal residual. Published checkpoints.}
\label{tab:dabi_roster}
\small
\setlength{\tabcolsep}{4.5pt}
\begin{tabular}{llccc}
\toprule
System & Latent (dim) & flip\,@$\kappa{=}1$ & $\DABI$ (mn) & realized \\
\midrule
ELF~\citep{he2026elf} & frozen-T5 ($512$) & $47\%$ & $508\times$ & $45.7\times$ \\
LangFlow~\citep{langflow2026} & embed-flow ($768$) & $76\%$ & ${\gg}10^{4}$ & ${\approx}2{,}600\times$ \\
CoLa-DLM~\citep{cola2026} & VAE${+}$AR ($16$) & $77\%$ & ${\approx}4{\times}10^{5}$ & --- \\
Cosmos~\citep{cosmos2025} & non-AR ($16{\times}768$) & $49\%$ & ${\approx}4{\times}10^{4}$ & --- \\
\midrule
Lumina/SANA/Z-Image/FLUX & image VAE & --- & n/a & $[0.85, 1.94]$ \\
\bottomrule
\end{tabular}
\end{table}

\subsection{Oracle roll-in}\label{sec:rollin}

We test Theorem~\ref{thm:noncommit} on the real ELF sampler (logit-normal grid, $p_{\mathrm{mean}} = -1.5$, $p_{\mathrm{std}} = 0.8$; Figure~\ref{fig:rollin}): for each grid time $t_i$ we build the exact forward marginal $U_{t_i} = t_i X + (1-t_i) s_0 \eps$ from paired clean data, roll the deterministic ODE suffix to $1$, and decode.
The terminal flip is mild and decreasing in $K$ ($41\%, 21\%, 12.1\%$ for $K = 4, 8, 16$, as the grid places the final step at later times, lower $s(t)$), but rolling in from earlier, more ambiguous states the flip climbs to ${\approx}90\%$; all $K$ collapse onto a single curve in $s(t)$, the per-step driver predicted by the theorem.
At every point the residual flips $10$--$33\times$ more tokens than a norm-matched isotropic control (the $45.7\times$ DABI of Section~\ref{sec:decsens}) with a constant ${\approx}1\%$ boundary-normal fraction: the mechanism is the composition of structured, readout-aligned posterior-mean errors, not a null-space effect.

\begin{figure}[t]
\centering
\includegraphics[width=0.82\textwidth]{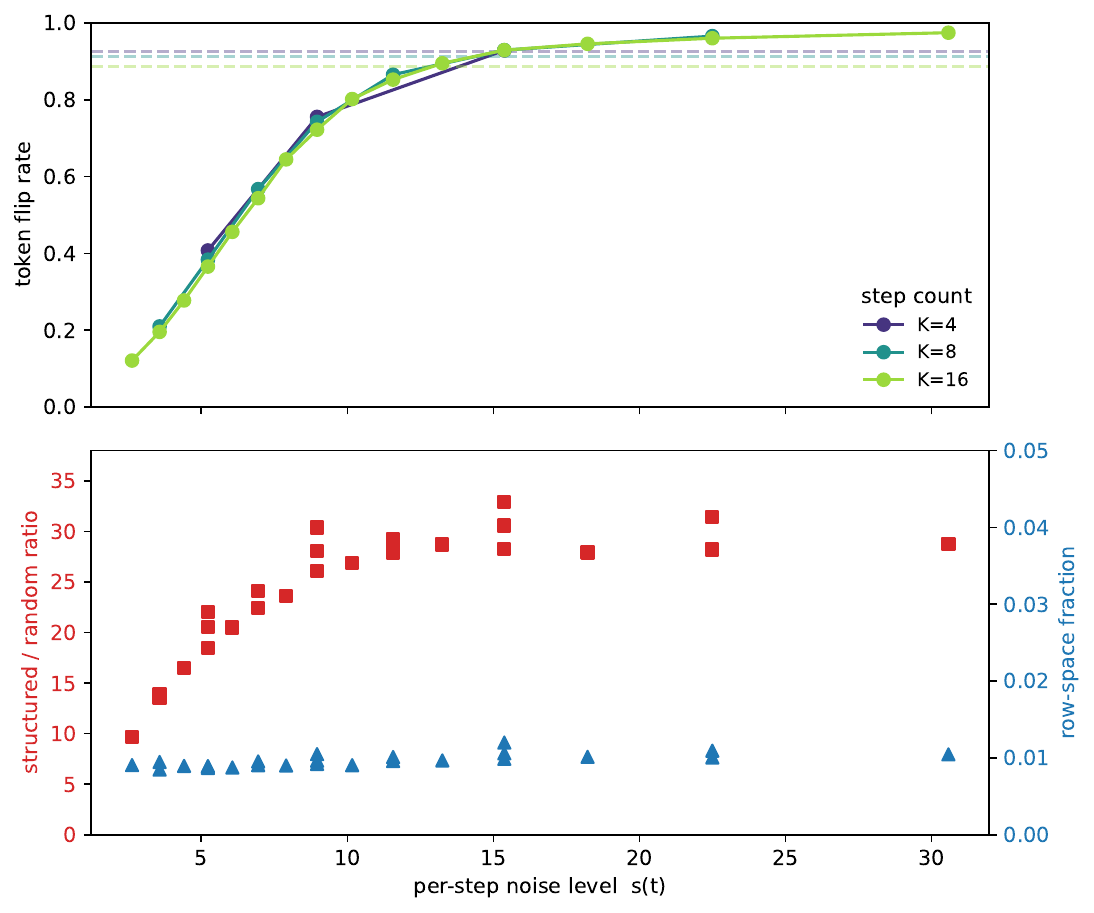}
\caption{\textbf{Oracle roll-in.}
ELF sampler, $n = 512$, $K \in \{4, 8, 16\}$.
\textit{Top:} flip rate versus $s(t)$; all $K$ collapse onto one curve; terminal flip $12$--$41\%$, cumulative ${\approx}90\%$.
\textit{Bottom:} structured-vs-random ratio ($10$--$33\times$) and row-space fraction (${\approx}1\%$).}
\label{fig:rollin}
\end{figure}

\paragraph{Quantitative tube-law test.}
The tube law predicts terminal flip from margin geometry, $\Prob(\hat Y_t \ne Y) \approx \E\,\Phi(-\delta^*/s(t))$ ($\hat f_{\delta^*}(0^+) \approx 0.064$ on ELF).
Its \emph{scaling} (flip linear in $s(t)$, all $K$ on one curve; Figure~\ref{fig:rollin}) holds on the learned ELF ODE; the \emph{absolute constant} exceeds the isotropic prediction by a stable factor (observed/predicted $2.5, 2.3, 2.2$ for $K = 4, 8, 16$; Figure~\ref{fig:p1}).
The idealized tube law assumes an isotropic residual; the learned residual is structured, so $\sqrt{2/\pi}$ is replaced by the realized active-facet normal moment, an anisotropy factor $c_{\mathrm{aniso}}$ (isotropic theorem $=1$; Appendix~\ref{app:noncommit}).
It is not a fitted intercept but the ratio $c_{\mathrm{aniso}} = \mathrm{obs}/(\sqrt{2/\pi}\,s\,A_W) \approx 2.3$ of the observed flip to a prediction whose boundary density $A_W$ is measured \emph{independently} from the clean margin distribution; it is \emph{distinct} from the CE amplification $\DABI{=}45.7\times$ (flip saturates at the first crossing, cross-entropy does not, so $\DABI \gg c_{\mathrm{aniso}}$).
The roll-in confirms the $s(t)$-scaling and this constant.

\subsection{Deterministic ODE versus stochastic SDE}\label{sec:gamma}

On the published ELF-B teacher and its progressive-distillation (PD) student~\citep{he2026elfpd} (same decoder and eval; only sampler randomness varies), we sweep $K \in \{1, \dots, 32\}$ and $\gamma \in \{0, \dots, 2\}$ ($60$ cells, Figure~\ref{fig:gamma}).
The teacher's deterministic ODE fails where the SDE succeeds: at $K = 4$, ODE PPL $294$ vs SDE $50$, the gap persisting at $K = 32$ ($68$ vs $20.5$); its $K = 1$ ODE reaches PPL $3.07$ only by mode collapse (entropy $2.01$).
Distillation does not remove the penalty: the student's ODE-to-SDE gap persists at every $K \ge 2$ ($15$--$38\%$; e.g.\ $K = 4$, $47.3 \to 34.8$), and its $K = 32$ SDE PPL $21.4$ reproduces the authors' reported $21.32$.
The SDE injects fresh noise per step, leaving the deterministic class of Theorem~\ref{thm:noncommit} (the lower bounds do not apply); but it is not free, since teacher SDE at $K = 4$ still produces multilingual fragments, so stochasticity needs enough steps.

\section{Escape Mechanisms}\label{sec:escape}

\subsection{Categorical commitment}\label{sec:cat}

Autoregressive and masked diffusion language models succeed at text generation with readouts far sharper than ELF's.
Their $\DABI$ values are higher, not lower: Llama-2~\citep{touvron2023llama2} (7B) has $\DABI = 19{,}805\times$; Qwen-2.5~\citep{qwen2025} (7B) has $\DABI = 915\times$ (base) and $1{,}350\times$ (instruct).
The difference is categorical commitment: at each step, a hard discrete token is selected and all subsequent computation conditions on that selection.

\begin{lemma}[categorical escape]\label{lem:escape}
Let $F : \R^N \to \mathcal{C}$ be the exact categorical generator $F(z) = c_{V_B(z)}$ outputting at least two distinct atoms from $\mathcal{C}$.
Then $F$ is discontinuous on the connected domain $\R^N$ and hence not Lipschitz.
The continuous lower bounds (Theorems~\ref{thm:gamma}--\ref{thm:dim}) do not apply.
\end{lemma}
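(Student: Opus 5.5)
The plan is to use the topological fact that a continuous map from a connected space into a discrete set is constant. The codebook $\mathcal{C}=\{c_1,\dots,c_M\}$ is a finite subset of $\R^d$. With the subspace topology it is discrete: each atom $c_i$ is isolated, because the open ball of radius $\tfrac12\min_{j\ne i}\norm{c_i-c_j}>0$ around $c_i$ meets $\mathcal{C}$ only in $c_i$. This requires only that the atoms be distinct, not the separated-regime bound $\Delta$.

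Suppose, for contradiction, that $F=c_{V_B(\cdot)}$ is continuous on $\R^N$. Pick an atom $c_i$ in the image of $F$, and consider the two preimages $A=F^{-1}(\{c_i\})$ and $A^c=F^{-1}(\mathcal{C}\setminus\{c_i\})$.
\begin{itemize}
\item Both are preimages of relatively open subsets of the discrete set $\mathcal{C}$, so both are open in $\R^N$.
\item They are disjoint and cover $\R^N$.
\item Both are nonempty, since $F$ outputs at least two distinct atoms.
\end{itemize}
This splits $\R^N$ into two disjoint nonempty open sets, contradicting the connectedness of $\R^N$. Connectedness holds because $\R^N$ is path-connected, being convex. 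Hence $F$ is discontinuous. Since every Lipschitz map is continuous, $F$ is not Lipschitz either.

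The non-applicability clause then follows by inspecting hypotheses. Theorems~\ref{thm:gamma}--\ref{thm:dim} and the product law \eqref{eq:bridge-main} are stated for deterministic transport maps with finite Lipschitz constant, or finite stiffness $\Lambda$ or per-step $\Lip(T_k)$. Their infima range over classes that $F$ does not belong to, so their lower bounds put no constraint on $F$.

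A more quantitative view makes the same point. Near any point where two cells $V_B^{-1}(i)$ and $V_B^{-1}(j)$ meet, which exists by the same connectedness argument, the difference quotient of $F$ is at least $\norm{c_i-c_j}/\eps$ for every $\eps>0$. So $F$ behaves as the $\Lambda\to\infty$ limit, where the stiffness budget is unbounded.

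I expect no serious obstacle. The one point needing care is that $V_B$ is defined everywhere on $\R^N$, including on cell boundaries, so that $F$ is a genuine everywhere-defined function and the preimage argument applies. If $V_B$ is only specified almost everywhere, I would fix any measurable tie-breaking rule and note that the conclusion holds for every such version.
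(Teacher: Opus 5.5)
Your proof is correct and takes the same approach as the paper. The paper's proof is the one-line observation that a continuous image of the connected space $\R^N$ is connected, whereas a finite set with at least two points is not, and your clopen splitting into $F^{-1}(\{c_i\})$ and $F^{-1}(\mathcal{C}\setminus\{c_i\})$ just unpacks that fact. Your extra remarks (the bounds range only over finite-$\Lip$ maps, and the tie-breaking on cell boundaries) are sound.
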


\begin{proof}
A continuous image of a connected set is connected; a finite set with $\ge 2$ points is not.
\end{proof}

\paragraph{Commit ablation.}
Removing hard selection at matched readout sharpness collapses generation across three published systems spanning both the autoregressive (Llama-2-7B) and masked-diffusion (LLaDA-8B, Dream-7B) families: AR soft-carry diversity drops $d_2\!: 0.541 \to 0.135$ and LLaDA soft-refresh drops $d_2\!: 0.82 \to 0.47$ (Table~\ref{tab:commit}).
The cleanest control is a matched pair on Dream that holds the sampling distribution fixed and varies only commitment: hard-sample (commit \emph{and} sample) versus soft-refresh-sample (the same sampling, with only the in-loop commit removed).
Removing the commit step alone raises PPL from $7.3$ to $38.6$ ($5\times$) and collapses effective length from $99$ to $22$.
Because nothing but the commit step changes, this isolates commitment more tightly than the norm-matched controls, which also collapse (Appendix~\ref{app:extended}).
Categorical commitment, not readout smoothness or sampling temperature, is the operative escape.

\subsection{Stochastic re-injection}\label{sec:stochastic}

The SDE sampler injects fresh noise per step, leaving the deterministic trajectory class.
Theorem~\ref{thm:noncommit} does not apply.
The $\gamma$-sweep (Section~\ref{sec:gamma}) confirms: SDE improves over ODE at every tested $K \ge 2$, though the escape is not free (teacher SDE $K{=}4$ still produces multilingual fragments at PPL${}=50$).

\subsection{DABI$\times$CCI taxonomy}\label{sec:taxonomy}

The two diagnostics, $\DABI$ and $\CCI$, define a $2$D taxonomy that classifies all tested systems (Figure~\ref{fig:taxonomy}).
The two axes are orthogonal: $\DABI$ measures readout sharpness (a property of the decoder), while $\CCI$ measures categorical commitment (a property of the generator).

\emph{Low $\DABI$, $\CCI = 0$ (image VAEs).}
Deterministic few-step generation succeeds because the smooth decoder tolerates the posterior-mean blur.

\emph{High $\DABI$, $\CCI = 0$ (ELF text flow).}
Deterministic few-step generation fails: the sharp readout converts the small structured residual into token errors.
This is the failure regime.

\emph{High $\DABI$, high $\CCI$ (AR, masked dLM).}
Generation succeeds despite the sharp readout because categorical commitment makes the generator discontinuous, evading the continuous lower bound.

The escape arrows in Figure~\ref{fig:taxonomy}, from high-$\DABI$/CCI${}=0$ (fails) to high-$\DABI$/high-$\CCI$ (works), are grounded in the commit ablations of Section~\ref{sec:cat}: fixing $\DABI$ and varying $\CCI$ from $1$ to $0$ collapses generation.

\subsection{Recent continuous few-step text models}\label{sec:recent}

The taxonomy predicts that any system generating fluent open-ended text in few ($K \le 16$) \emph{deterministic} steps with a smooth continuous latent must leave the continuous-deterministic class.
Every 2025--2026 continuous-latent text generator we surveyed (public checkpoint or documented sampler; we found no counterexample) is consistent (Table~\ref{tab:recent}, Appendix~\ref{app:recent}): every deterministic-ODE system needs ${\ge}32$ steps~\citep{langflow2026,cosmos2025}, and every few-step system uses SDE sampling~\citep{he2026elf,cola2026} or categorical commitment~\citep{fastdiss2026,diladiff2026,codar2026}.
The sharpest apparent counterexample, FMLM~\citep{fmlm2026} (deterministic one-step), is categorical escape: a \emph{one-hot} simplex target with $\argmax$ decoding (Lemma~\ref{lem:escape}), not smooth transport.
Training-time robustness mechanisms do not move the classification: FastDiSS's SCP/MANS perturbations~\citep{fastdiss2026} harden the network but keep a categorical readout, and Loopholing's deterministic latent pathway~\citep{loopholing2026} \emph{complements} rather than replaces a stochastic one-hot step (Appendix~\ref{app:recent}).

\section{Separated-Mode Transport Theory}\label{sec:separated}

In the idealized regime of well-separated modes, we prove sharp Lipschitz transport laws (full statements and proofs in Appendices~\ref{app:thm1}--\ref{app:thm3}; synthetic verification in Appendix~\ref{app:synth}).
Here $\mathcal{P}_{n,M} = \inf \sum_{i<j}\norm{c_i - c_j}\,P_{\gauss_n}(B_i, B_j)$ is the minimal unit-barrier interface profile of an $M$-atom equal-mass codebook in $\R^n$ (the prescribed-mass profile $\Jint$ of \eqref{eq:Jdef} with $\kappa_p(i,j) \equiv \norm{c_i - c_j}$), and quantifies the deterministic stiffness $\Lambda$ a smooth map must pay to separate $M$ modes.
The three theorems are one statement at three resolutions: stiffness $\Lambda \gtrsim \Jint/W_p^p$, growing as $\sqrt{\log M}$ once the dimension is $\Omega(\log M)$, with a depth-$B$ factorization giving a $\sqrt{B}$-smaller per-step peak. All three are instantiated on explicit codebooks in Appendix~\ref{app:synth}.
Intuitively, $\Jint$ is the minimal barrier-weighted wall area a smooth map must build to separate $M$ modes; a \emph{finite-perimeter} (Caccioppoli) partition is just basins with rectifiable, finite-area walls.

\begin{theorem}[interface $\Gamma$-limit; informal]\label{thm:gamma}
$\lim_{\Lambda\to\infty} \Lambda \inf_{\Lip(T)\le\Lambda} W_p^p(T\push\sigma, \mu) = \Jint_{p,\sigma}(\pi;\mathcal{C})$, a weighted Caccioppoli interface energy~\citep{baldo1990,fonseca1989gradient}.
\end{theorem}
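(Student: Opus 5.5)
The plan has two halves. The lower bound is a liminf over $\Lambda$ plus compactness, obtained by reading the cost of a $\Lambda$-Lipschitz map on the source side. The upper bound is a limsup by recovery sequence, built from an optimal Caccioppoli partition whose interfaces are smoothed at scale $1/\Lambda$.

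\emph{Lower bound.} Fix $T_\Lambda$ with $\Lip(T_\Lambda)\le\Lambda$ and near-optimal cost. Because the target is atomic, $W_p^p(T_\Lambda\push\sigma,\mu)$ is at least $\int V(T_\Lambda)\,d\sigma$ up to a mass-mismatch correction; I will first reduce to that integral by a standard coupling argument. If $\Lambda W_p^p$ stays bounded, then $\int V(T_\Lambda)\,d\sigma=O(1/\Lambda)$, so $T_\Lambda$ lies near $\mathcal{C}$ off a set of vanishing measure. Composing with the quasi-potential $\phi_i(y)=\kappa_p$-distance from $y$ to $c_i$ gives Modica--Mortola-type functions $u_\Lambda=\phi\circ T_\Lambda$. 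The key estimate is the pointwise chain rule
\[
|\nabla(\phi_i\circ T)|\le V(T)\,|DT|\le \Lambda\,V(T),
\]
so the total variation of $\phi_i\circ T_\Lambda$ weighted by $\rho$ is bounded by $\Lambda\int V(T_\Lambda)\,\rho$. This is where the hard gradient constraint replaces the Young inequality $\tfrac1\eps W+\eps|\nabla u|^2\ge2\sqrt W|\nabla u|$ of Baldo/Fonseca--Tartar. Near each well $T$ is forced to pay $V$ along the normal direction while moving at speed at most $\Lambda$, so the cost of crossing between wells $i$ and $j$ per unit area is at least $\kappa_p(i,j)/\Lambda$. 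BV compactness (weighted by $\rho>0$, $C^1$, locally bounded below) extracts a limit partition $\mathcal{B}$ with $\sigma(B_i)=\pi_i$, using tightness of $\sigma$ to keep the masses. Lower semicontinuity, in Baldo's form using the metric $\kappa_p$ and the $\phi_i$ as calibrations, then gives $\liminf\Lambda W_p^p\ge\mathcal{E}_{p,\sigma}(\mathcal{B})\ge\Jint$.

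\emph{Upper bound.} By density, which I expect to be a technical step, I may assume a partition with polyhedral interfaces, a negligible $(n-2)$-skeleton, and prescribed masses. Near each facet between $B_i$ and $B_j$ I fix a near-geodesic $\xi$ for $\kappa_p(i,j)$ and reparametrize it by $\Lambda$-arclength. I then set $T(x)=\xi(\text{signed distance}\cdot\Lambda)$ in a layer of width $O(\ell(\xi)/\Lambda)$, and $T\equiv c_i$ elsewhere in $B_i$. This map is $\Lambda(1+o(1))$-Lipschitz, and its cost per unit area is $\int V(\xi)\,|\dot\xi|/\Lambda$. Integrating $\rho$ over the layer gives $\mathcal{E}/\Lambda+o(1/\Lambda)$, with the triple junctions contributing $O(\Lambda^{-2})$. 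The pushforward masses are off by $O(1/\Lambda)$; I fix this by shifting the interfaces by $O(1/\Lambda)$, which changes the energy by $o(1)$.

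The main obstacle is the lower bound's identification of the sharp constant. One must show that the transport cost, not merely $\int V(T)$, controls the interface energy. Mass can be transported from $T_\Lambda\push\sigma$ to the nearest atom, which gives $W_p^p\le\int V(T)$. For the reverse inequality I would exploit that the coupling must also match the masses $\pi_i$; a slack in the masses costs an extra $\Omega(1)$ times the mismatch, which is absorbed after passing to the limit. A second difficulty is that $\kappa_p$ may fail to be a metric of the form required for lower semicontinuity when geodesics pass through third wells. I would handle this as Baldo does, by replacing $\kappa_p$ with its metric hull, which equals $\kappa_p$ itself since $\kappa_p$ is already a path infimum.
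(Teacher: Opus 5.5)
Your architecture matches the paper's. The lower bound uses calibrations $\phi_i\circ T$ with $|\nabla(\phi_i\circ T)|\le\Lambda V(T)$ in place of Young's inequality, $\rho$-weighted BV compactness, and a jump of exactly $\kappa_p(i,j)$ across $\partial^*B_i\cap\partial^*B_j$. The upper bound uses polyhedral density, $1/\Lambda$-rescaled near-geodesic profiles, and an $O(1/\Lambda)$ mass correction.

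\textbf{Gap 1: the recovery map near the codimension-two skeleton.} This is exactly where the hard constraint departs from Baldo/Fonseca--Tartar. There the layers of different faces overlap, so $T(x)=\xi(\Lambda\cdot\text{signed distance})$ is not well defined. Your claim that the pasted map is $\Lambda(1+o(1))$-Lipschitz is therefore unsupported. The $O(\Lambda^{-2})$ volume of the junction region controls its energy, not its pointwise slope, and large gradients on small sets are precisely what $\Lip(T)\le\Lambda$ forbids. The paper repairs this in three steps:
\begin{enumerate}
\item It trims each face by $R/\Lambda$ and deletes the $R/\Lambda$-neighbourhood $H$ of the skeleton. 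Outside $H$, layers of distinct faces are $\gtrsim R/\Lambda$ apart while all values lie in a bounded set, so for $R$ large the pasted map is Lipschitz with the profile slope.
\item It extends across $H$ by Kirszbraun's theorem, which preserves the constant for maps $\R^n\to\R^d$. A coordinatewise McShane extension would lose a factor $\sqrt d$ and destroy the sharp constant.
\item It projects onto $\conv\mathcal C$, which does not increase $V$. The hole then costs only $\Lambda\cdot O(\Lambda^{-2})=O(\Lambda^{-1})$ in $\Lambda W_p^p$.
\end{enumerate}

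\textbf{Gap 2: your coupling inequalities point the wrong way.} Sending each point to its nearest atom is a coupling with $\mu$ only if $\sigma(T^{-1}(Q_i))=\pi_i$ for the Voronoi cells $Q_i$. So $W_p^p\le\int V(T)\,d\sigma$ fails in general (take $T\equiv c_1$). By contrast, $W_p^p(T\push\sigma,\mu)\ge\int V(T)\,d\sigma$ holds for every $T$, because $|Y-Z|\ge\dist(Y,\mathcal C)$ whenever $Z\in\mathcal C$. The masses therefore enter the two bounds differently:
\begin{itemize}
\item In the lower bound they are needed only to show $\sigma(B_i)=\pi_i$ for the limit partition. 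For an optimal coupling, $\Prob(q(Y)\ne q(Z))\le(2/\Delta)^p\,\E|Y-Z|^p=O(1/\Lambda)$, with $q$ the nearest-well label and $\Delta=\min_{i\ne j}|c_i-c_j|$. This is your ``slack'' remark made precise.
\item In the upper bound they enter through the identity $W_p^p=\int V(T)\,d\sigma$, which holds only under exact phase masses. So the correction must be exact, not $O(1/\Lambda)$-approximate: a residual mismatch of order $1/\Lambda$ can add a $\Theta(1)$ term to $\Lambda W_p^p$.
\end{itemize}
Your interface-shift correction can be made to work. It needs $M-1$ independent controls (one face per edge of a spanning tree of the adjacency graph), an inverse-function argument, and running the construction at slope $\Lambda/(1+o(1))$. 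The paper instead composes with a near-identity source diffeomorphism with $\Lip(\Phi^{-1})\le1+C/\Lambda$, absorbed by building the profiles at slope $(1-\Lambda^{-1/2})\Lambda$.
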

\vspace{-0.5em}
\begin{theorem}[dimension phase diagram; informal]\label{thm:dim}
$\mathcal{P}_{n,M} \asymp \sqrt{\log M}$ for $n \ge C\log M$; $\Theta(M^{1/n})$ for fixed $n$.
Combined: $\Lambda \gtrsim \Delta^{p+1}\sqrt{\log M}/\eta^p$.
\end{theorem}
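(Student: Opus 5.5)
The dimension phase diagram splits into three claims, and I will prove them in order: the high-dimensional law $\mathcal{P}_{n,M} \asymp \sqrt{\log M}$ for $n \ge C\log M$, the fixed-dimension law $\Theta(M^{1/n})$, and the combination with Theorem~\ref{thm:gamma}.

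\textbf{High-dimensional regime.} I treat the two directions separately.

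\emph{Lower bound.} Each cell has Gaussian mass $1/M$, so by the Gaussian isoperimetric inequality its perimeter satisfies
\[
P_{\gauss_n}(B_i) \ge I(1/M) \asymp \tfrac{1}{M}\sqrt{2\log M},
\]
where $I = \varphi\circ\Phi^{-1}$. Summing over $i$ gives total interface area $\gtrsim \sqrt{\log M}$. I must then control the barrier weights $\norm{c_i - c_j}$ from below on the interfaces that actually carry mass. Normalize the codebook so that $\min_{i\ne j}\norm{c_i - c_j} \ge \Delta$, or equivalently work with unit-barrier $\kappa \ge \Delta$. Then $\mathcal{E} \ge \Delta \sum_{i<j} P(B_i,B_j) = \tfrac{\Delta}{2}\sum_i P(B_i)$, which gives $\gtrsim \Delta\sqrt{\log M}$.

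\emph{Upper bound.} I place the codebook $c_i = \Delta\, g_i/\norm{g_i}$ with $g_i$ i.i.d.\ Gaussian, or take a spherical code of $M$ nearly orthogonal unit vectors; the condition $n \ge C\log M$ guarantees pairwise distances $\asymp \Delta$ uniformly. I use the Voronoi/argmax partition $B_i = \{x : \inner{v_i}{x} \text{ maximal}\}$ with random directions $v_i$, then rebalance to equal masses. The total Gaussian surface measure of the argmax partition is $\E\, \norm{\nabla \max_i \inner{v_i}{x}}$-type, and by the coarea formula it is controlled by the derivative in $t$ of $\E \max_i \inner{v_i}{x+t\,\cdot}$. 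This is the Gaussian width of the code, which is $\asymp \sqrt{2\log M}$ for near-orthogonal $v_i$ (Sudakov/Slepian for the lower side, the union bound for the upper side). This is where the remark connecting to Talagrand/Milman--Neeman enters; Milman--Neeman's Gaussian multi-bubble bounds also give a cleaner route for the simplex-like partition.

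\textbf{Fixed dimension.} For the lower bound, $M$ equal-mass cells in fixed $n$ force typical cells inside the bulk of $\gauss_n$ to have diameter $\lesssim M^{-1/n}$. The relative isoperimetric inequality on a ball then gives $P(B_i) \gtrsim (1/M)^{(n-1)/n}$, and summing gives a total $\gtrsim M^{1/n}$. The pairwise barrier is bounded below by the separation $\Delta$ once the atoms are separated. For the upper bound I use a cubical partition of a ball of radius $O(1)$ into $M$ cells of equal Gaussian mass (mass-adjusted grid), whose total wall area is $\asymp M^{1/n}$.

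\textbf{Combination.} Theorem~\ref{thm:gamma} gives $\Lambda\, W_p^p \gtrsim \Jint_{p,\gauss_n}$ for large $\Lambda$. The barrier metric satisfies $\kappa_p(i,j) \gtrsim \Delta^{p+1}$, because the path must cross the midplane region where $\dist(\cdot,\mathcal{C})\ge \Delta/2$ over a length $\gtrsim\Delta$. This yields $\Jint \gtrsim \Delta^{p}\,\mathcal{P}_{n,M} \gtrsim \Delta^{p+1}\sqrt{\log M}$ after rescaling $\mathcal{P}$ by $\Delta$. Imposing an accuracy budget $W_p^p\le \eta^p$ then gives $\Lambda \gtrsim \Delta^{p+1}\sqrt{\log M}/\eta^p$.

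\textbf{Main obstacle.} I expect the hardest step to be the matching upper bound in the high-dimensional regime with exactly equal masses. Argmax cells of a random code have masses fluctuating around $1/M$, so I would first try to fix this by Gaussian-width-preserving translations of the thresholds ($\max_i(\inner{v_i}{x}-b_i)$) chosen by a fixed-point/degree argument. I would then bound the extra perimeter induced by the $b_i$ by $O(\max|b_i|)$ times the density, which is negligible.
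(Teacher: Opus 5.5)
Your lower bounds (Gaussian isoperimetry with $M\varphi(\Phi^{-1}(1/M))\sim\sqrt{2\log M}$, and relative isoperimetry in a ball for fixed $n$) and your grid-type construction in fixed dimension are essentially the paper's route. The gap is in the high-dimensional upper bound, which is the step that makes the law two-sided.

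The mechanism you describe does not establish the needed bound. $\E\norm{\nabla\max_i\inner{v_i}{x}}$ is just $\E\norm{v_{I(G)}}$, not a surface measure. The coarea formula also does not tie the interfaces to the width, since the interfaces are the jump set of $\nabla f$, not level sets of $f$. The lemma you need is the paper's Gaussian-width identity. Take $f(x)=\max_i\inner{x}{c_i}$ with normal-fan cells $B_i$. The distributional Laplacian of $f$ is a measure carried by the interfaces, with density on $\partial^*B_i\cap\partial^*B_j$ equal to the jump $\norm{c_i-c_j}$ of $\nabla f$. Gaussian integration by parts and Euler's identity $\inner{x}{\nabla f(x)}=f(x)$ then give
\[
\sum_{i<j}\norm{c_i-c_j}\,P_{\gamma_n}(B_i,B_j)=\int\varphi_n\,d(\Delta f)=\E\inner{G}{\nabla f(G)}=\E\max_i\inner{G}{c_i}.
\]
This requires the cells to be the normal fan of the codebook itself ($v_i\propto c_i$), which your sketch does not make explicit. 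It then controls the barrier-weighted profile that $\mathcal{P}_{n,M}$ measures.

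Equal masses are left open, and your proposed fix is unsupported. You give no bound on $\max_i|b_i|$ for a random code, so calling the induced cost negligible is an assertion, not an argument.

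The paper avoids the issue with a transitive code. The cyclic Fourier code with $r=\lceil 16\log M\rceil$ random frequencies is the orbit of a single orthogonal rotation. Its normal-fan cells are therefore rotated copies of one another and have Gaussian mass exactly $1/M$. Hoeffding plus a union bound gives pairwise separation $\ge 1$ once $n\ge 35\log M$.

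Your route can also be closed without controlling $b$ at all. Let $f_b(x)=\max_i(\inner{v_i}{x}-b_i)$ with maximizing index $I_b$. The same integration by parts gives
\[
\sum_{i<j}\norm{v_i-v_j}P_{\gamma_n}(B_i,B_j)=\E\inner{G}{v_{I_b(G)}}\le\E\max_i\inner{G}{v_i}
\]
for every $b$. Equalizing thresholds exist by semi-discrete optimal transport (Brenier's map from $\gamma_n$ to $\frac{1}{M}\sum_i\delta_{v_i}$). This handles an arbitrary separated code at the cost of an existence theorem, whereas the symmetric code gives exact equality with the width explicitly.

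Smaller repairs:
\begin{itemize}
\item Equal-mass cells need not have diameter $\lesssim M^{-1/n}$ (think of slabs). Argue instead from $\gamma_n(B_i\cap B_R)\le 1/M$ and $x^{(n-1)/n}\ge M^{1/n}x$ for $x\le 1/M$.
\item A partition of a ball must be completed to a partition of $\R^n$. Use Gaussian quantile slabs with unbounded outer slabs and a grid codebook so adjacent barriers are $1$, plus a balanced recursive version when $M\ne k^n$.
\item The bound $\kappa_p\gtrsim\Delta^{p+1}$ should come from the radial estimate $\int_0^{\Delta/2}r^p\,dr$ inside $B(c_i,\Delta/2)$. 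Your midplane argument fails because the set where $\dist(\cdot,\mathcal{C})\ge\Delta/2$ can be a single point of the path, already for two atoms.
\item With that bound, $\Jint\ge\kappa_{\min}\sum_{i<j}P_{\gamma_n}(B_i,B_j)\ge\kappa_{\min}\tfrac{M}{2}\varphi(\Phi^{-1}(1/M))$ directly. Your intermediate $\Jint\gtrsim\Delta^p\mathcal{P}_{n,M}$ does not follow from a uniform lower bound on $\kappa_p$.
\end{itemize}
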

\vspace{-0.5em}
\begin{theorem}[hierarchical separation; informal]\label{thm:hier}
A depth-$B$ categorical hierarchy reduces one $M$-way problem to $B$ local $m$-way ones (peak ratio $\sqrt{B}$); the factorized generator is discontinuous (Lemma~\ref{lem:escape}), so continuous bounds do not apply.
\end{theorem}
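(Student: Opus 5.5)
The informal claim of Theorem~\ref{thm:hier} has two parts: a quantitative comparison of per-step interface peaks, and a structural statement that the factorized generator is discontinuous. I would prove them separately.

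\emph{Setup.} Take $M = m^B$ equal-mass atoms. Organize them as the leaves of a depth-$B$ tree with branching factor $m$, so that each level is a local $m$-way codebook. Concretely, I would use the explicit product-type codebooks of Appendix~\ref{app:synth}: $c = \sum_{b=1}^B r_b\, e^{(b)}_{i_b}$, where the $b$-th level atoms $e^{(b)}_{i}$ live in mutually orthogonal subspaces of dimension at least $C\log m$. The "peak" of a step is the unit-barrier interface profile that the step must realize. For the flat generator this is $\mathcal{P}_{n,M}$. For level $b$ of the hierarchy it is $\mathcal{P}_{n_b,m}$, computed for the conditional problem given the earlier branch choices.

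\emph{Peak ratio.} I invoke Theorem~\ref{thm:dim} in its high-dimensional branch for both the flat and the per-level problem. This gives $\mathcal{P}_{n,M} \asymp \sqrt{\log M} = \sqrt{B\log m}$ and $\mathcal{P}_{n_b,m} \asymp \sqrt{\log m}$, so the ratio is $\asymp \sqrt{B}$. The upper bound for the per-level step comes from the Gaussian partition construction underlying Theorem~\ref{thm:dim}, applied in each level's subspace. The lower bound for the flat problem is exactly the $\sqrt{\log M}$ lower bound.

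To turn this into a statement about stiffness, I combine it with Theorem~\ref{thm:gamma}. A single Lipschitz map reaching $W_p^p \le \eta^p$ needs $\Lambda \gtrsim \Jint/\eta^p \propto \sqrt{B\log m}$. Each level map needs only $\propto \sqrt{\log m}$, using barrier weights at the scale $r_b$ of that level.

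\emph{Discontinuity.} The factorized generator selects a branch index at each level, via $V_B(z)$ as in Lemma~\ref{lem:escape}, and outputs at least two atoms. It is therefore a non-constant map from the connected set $\R^N$ into a finite set. By Lemma~\ref{lem:escape} it is discontinuous, and so the hypotheses $\Lip(T)\le\Lambda$ of Theorems~\ref{thm:gamma}--\ref{thm:dim} fail and those bounds do not constrain it.

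\emph{Main obstacle.} The hard step is justifying that the level-$b$ conditional problem really is an $m$-way problem with constant $\sqrt{\log m}$, uniformly in $B$. Conditioning on earlier branches restricts the source to a cell, so the reference measure is no longer Gaussian, and Theorem~\ref{thm:dim} does not apply verbatim. I would first try to avoid this by making each level draw fresh independent Gaussian coordinates, so that the conditional source is again $\gauss_{n_b}$. If the paper instead intends a single shared source, I would fall back on weighted perimeter comparison using the strictly positive $C^1$ density assumption. That route loses only constants if the cell masses stay bounded below, which holds here since every cell has mass at least $1/M$.
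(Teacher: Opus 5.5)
Your discontinuity step and the ratio $\sqrt{\log M}/\sqrt{\log m}=\sqrt{B}$ for $M=m^B$ match the paper. The paper's bound (iii) charges any single $\Lambda$-Lipschitz map $\Jint_{\mathrm{global}}\gtrsim\Delta^{p+1}\sqrt{\log M}$ through Theorems~\ref{thm:gamma}--\ref{thm:dim}. Your first fix for the conditioning problem, fresh independent source blocks, is exactly condition~(2) of Definition~\ref{def:adm}.

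The gap is that you never establish the reduction itself, and that is where the paper's proof does its work. First, you do not show that the committed level maps generate $\mu$. The paper proves by induction on depth that $\Prob(V_b=u)=\Prob(V_{b-1}=v)\,\sigma_b\bigl(T_v^{-1}(Q_{v,u})\bigr)=\pi_v\pi_{u|v}=\pi_u$. This needs independence of $Z_b$ from the prefix \emph{and} exact conditional phase masses, which the exact-mass recovery sequence of Theorem~\ref{thm:gamma-expanded} supplies. You use independence only to make each conditional source Gaussian, and you never impose exact phases. So the branch law is right only up to an $O(1/\Lambda_b)$ total-variation error per level (cf.\ (A.40)), and this accumulates along the path. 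Second, you never tie per-level errors to the global error. The paper assumes an additively sensitive decoder, $\|D_{\bm{v}}(Y_1,\ldots,Y_B)-c_{V_B}\|\le\sum_b S_{v_{b-1}}\dist(Y_b,\mathcal A_{v_{b-1}})$. It then applies Minkowski over rounds to get $W_p(\hat\mu,\mu)\le\sum_b(A_b/\Lambda_b)^{1/p}$ with $A_b=\sum_{|v|=b-1}\pi_vS_v^p\Jint_v$, each round charged by its local $\Jint_v$ via Theorem~\ref{thm:gamma}.

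Without this round-sum bound, your stiffness paragraph compares unlike quantities. The flat map is charged at global accuracy $\eta$, while the level maps are charged at an unspecified per-level accuracy, even though the hierarchy's error accumulates over $B$ rounds. Your own product codebook shows this matters. Take levels in orthogonal subspaces, disjoint source blocks, prefix-independent level maps, and the natural decoder $D(Y)=\sum_b r_bY_b$. Then $z\mapsto\sum_b r_bT_b(z_b)$ is a \emph{continuous} map with $\Lip\le\max_b r_b\Lip(T_b)$, and its output law is exactly the hierarchy's soft output $\hat\mu$. So in that example commitment buys no per-step stiffness saving at matched accuracy. The $\sqrt{B}$ can only be asserted as a ratio of per-step interface profiles ($\Jint_{\mathrm{global}}$ versus $\Jint_v$), with the accuracy accounting left to the round-sum bound.

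Drop the shared-source fallback. The conditioned density vanishes off the cell, which violates the positivity hypothesis of Theorem~\ref{thm:gamma}. A mass floor of $1/M=m^{-B}$ also gives constants that are not uniform in $B$, which is exactly the uniformity you wanted. Appendix~\ref{app:synth} contains no product codebook; the paper works with the abstract admissible hierarchy of Definition~\ref{def:adm}.
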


These results do \emph{not} predict real ELF residuals, but the overlapping and separated regimes are coarea-linked contractions of one composite interface (Theorem~\ref{thm:bridge}, Appendix~\ref{app:bridge}; schematic Figure~\ref{fig:interface}).

\section{Discussion and Limitations}\label{sec:discussion}

We have identified a structural reason deterministic few-step generation of continuous text latents produces incoherent output: the posterior-mean terminal step cannot resolve discrete branch choices before a sharp categorical readout (Theorem~\ref{thm:noncommit}), and the $\DABI{\times}\CCI$ taxonomy predicts which systems fail and which escape.

\paragraph{Limitations.}
Our formal results analyze deterministic posterior-mean transport under smooth densities and exact clean decoding, which learned generators only approximate; the bounds constrain deterministic samplers, not the stochastic and categorical generators that escape them.
The first-order law is robust to these idealizations (Appendix~\ref{app:noncommit}): label noise, finite-perimeter, or codimension-${\ge}2$ irregularities change the constant but not the linear-in-$s$ scaling, and a learned denoiser is covered whenever its terminal active-normal residual is stable and $O(s)$ (Proposition~\ref{prop:approx-denoiser}).
The open theory direction is a within-core $W_p$ theory for overlapping latents; practically, deterministic few-step text needs SDE or categorical generation (Appendix~\ref{app:causal}).

\bibliography{references}
\bibliographystyle{iclr2026_conference}

\newpage
\appendix

\section{Notation}\label{app:notation}
We use a single symbol for the interface energy throughout: $\Jint$ (the prescribed-mass weighted Caccioppoli interface energy of Theorem~\ref{thm:gamma}). $\mathcal{P}_{n,M}$ is \emph{not} a separate functional but the unit-barrier \emph{profile} of $\Jint$, i.e.\ $\Jint$ with $\kappa_p(i,j)\equiv\|c_i-c_j\|$ for an equal-mass $M$-atom codebook in $\R^n$. The symbol $\mathcal{I}_{\mathrm{crit}}$ denotes the critical-SNR time \emph{interval}, not an energy. Table~\ref{tab:notation} collects the recurring symbols.
\begin{table}[h]
\centering
\caption{Recurring notation.}
\label{tab:notation}
\small
\begin{tabular}{ll}
\toprule
Symbol & Meaning \\
\midrule
$D_W(z)=\argmax_y w_y^\top z$ & categorical readout (decoder); $w_y$ the row for token $y$ \\
$\delta^*(X,Y)$ & normalized margin to the nearest decision boundary (Eq.~\ref{eq:margin}) \\
$s(t)=s_0(1-t)/t$ & effective terminal noise scale at roll-in time $t$ \\
$F_{ij}$ & active co-maximal facet between tokens $i,j$ \\
$A_W=\sum_{i<j}\int_{F_{ij}}\rho_X\,d\mathcal H^{d-1}$ & active boundary mass (overlapping regime) \\
$c_{\mathrm{aniso}}$ & anisotropy factor of the realized residual ($=1$ if isotropic) \\
$\DABI=\Delta\mathrm{CE}_{\mathrm{struct}}/\Delta\mathrm{CE}_{\mathrm{rand}}$ & decoder amplification of boundary-aligned inputs \\
$\CCI$ & categorical commitment index (in-loop hard-selection fraction) \\
$\eta_{ij}=\|DT^\top n_{ij}\|$ & readout-normal stretch / transversality; $L_\perp=\esssup\eta_{ij}$ \\
$\Jint_{p,\sigma}(\pi;C)$ & separated-regime interface energy (Theorem~\ref{thm:gamma}) \\
$\mathcal{P}_{n,M}$ & unit-barrier profile of $\Jint$ (special case, see above) \\
$\kappa_p(i,j)$ & barrier metric between codebook atoms; $\kappa_{\max}=\max_{i<j}\kappa_p$ \\
$E_{\mathrm{off}}$ & off-facet (mis-aligned) interface energy in the bridge bound \\
$\Lambda$ & deterministic transport stiffness (Lipschitz budget) \\
\bottomrule
\end{tabular}
\end{table}

\section{Proof of Theorem~\ref{thm:gamma}: the hard-gradient multiwell interface limit}
\label{app:thm1}

We write this appendix in a self-contained form. Let
\[
        \sigma=\rho\,dx
        \qquad\text{on }\mathbb R^n,
        \qquad
        \rho\in C^1(\mathbb R^n;(0,\infty)),
        \qquad
        \int_{\mathbb R^n}\rho\,dx=1.
\]
Let \(C=\{c_1,\ldots,c_M\}\subset\mathbb R^d\) be a finite set of distinct Euclidean atoms,
let \(p\in[1,\infty)\), and let
\[
        V(y):=\operatorname{dist}(y,C)^p.
\]
For an absolutely continuous curve \(\xi:[0,1]\to\mathbb R^d\), define the \(V\)-length
\[
        \ell_V(\xi):=\int_0^1 V(\xi(s))\,|\xi'(s)|\,ds,
\]
and define the induced path pseudometric
\[
        d_V(y,z)
        :=
        \inf\{\ell_V(\xi):\xi(0)=y,\ \xi(1)=z,\ \xi\in AC([0,1];\mathbb R^d)\}.
\]
The well-to-well barrier is
\[
        \kappa_{ij}=\kappa_p(i,j):=d_V(c_i,c_j).
\]
For a Caccioppoli partition \(B=(B_1,\ldots,B_M)\) of \(\mathbb R^n\), define
\[
        P_\sigma(B_i,B_j)
        :=
        \int_{\partial^*B_i\cap\partial^*B_j}\rho\,d\mathcal H^{n-1},
\]
and
\[
        E_{p,\sigma}(B;C)
        :=
        \sum_{i<j}\kappa_{ij}P_\sigma(B_i,B_j).
\]
For prescribed masses \(\pi_i>0\), \(\sum_i\pi_i=1\), define
\[
        J_{p,\sigma}(\pi;C)
        :=
        \inf_{\sigma(B_i)=\pi_i} E_{p,\sigma}(B;C),
\]
where the infimum is over Caccioppoli partitions with those masses. Finally,
\[
        \mathcal F_\varepsilon(T)
        :=
        \begin{cases}
        \displaystyle
        \varepsilon^{-1}\int_{\mathbb R^n}V(T(x))\,d\sigma(x),
        & \operatorname{Lip}(T)\le \varepsilon^{-1},\\[1.2ex]
        +\infty,&\text{otherwise.}
        \end{cases}
\]
We use the \(L^1(\sigma;\mathbb R^d)\) topology.

Let \(Q_i\) denote the Euclidean Voronoi cell of \(c_i\), with a fixed lexicographic tie-breaking rule, and let
\[
        q(y)=i
        \quad\Longleftrightarrow\quad
        y\in Q_i.
\]
For a partition \(B\), put
\[
        u_B:=\sum_{i=1}^M c_i\mathbf 1_{B_i}.
\]

We use standard facts on \(BV\), reduced boundaries, traces of \(BV\) functions on Caccioppoli interfaces, and the structure theorem for finite partitions; see Ambrosio--Fusco--Pallara \citep{AmbrosioFuscoPallara2000}. For the density of regular polyhedral partitions with volume constraints we use the Braides--Conti--Garroni polyhedral density theorem \citep{BraidesContiGarroni2017}. The soft-gradient analogue is the Baldo/Fonseca--Tartar multiwell Modica--Mortola theory \citep{baldo1990,fonseca1989gradient}.

\begin{theorem}[Expanded form of Theorem~\ref{thm:gamma}]
\label{thm:gamma-expanded}
Assume \(\rho\in C^1(\mathbb R^n;(0,\infty))\), \(C=\{c_1,\ldots,c_M\}\subset\mathbb R^d\) is finite and consists of distinct points, \(p\in[1,\infty)\), and \(\pi_i>0\) with \(\sum_i\pi_i=1\). Then
\[
        \mathcal F_\varepsilon
        \xrightarrow{\Gamma}
        \mathcal F_0
        \qquad\text{in }L^1(\sigma;\mathbb R^d),
\]
where
\[
        \mathcal F_0(u)
        =
        \begin{cases}
        E_{p,\sigma}(B;C),
        & u=u_B=\sum_i c_i\mathbf 1_{B_i}
          \text{ for a Caccioppoli partition }B,\\
        +\infty,&\text{otherwise.}
        \end{cases}
\]
Moreover, if \(B\) has prescribed masses \(\sigma(B_i)=\pi_i\), then the recovery sequence may be chosen with exact nearest-well phase masses
\[
        \sigma\big(T_\varepsilon^{-1}(Q_i)\big)=\pi_i
        \qquad\text{for every }i.
\]
Consequently,
\[
        \lim_{\varepsilon\downarrow0}
        \inf_{\substack{\operatorname{Lip}(T)\le \varepsilon^{-1}\\
                        \sigma(T^{-1}(Q_i))=\pi_i}}
        \varepsilon^{-1}\int_{\mathbb R^n}V(T)\,d\sigma
        =
        J_{p,\sigma}(\pi;C).
        \tag{A.1}
\]
Equivalently, for \(\mu=\sum_i\pi_i\delta_{c_i}\),
\[
        \boxed{
        \lim_{\Lambda\to\infty}
        \Lambda\inf_{\operatorname{Lip}(T)\le\Lambda}
        W_p^p(T_\#\sigma,\mu)
        =
        J_{p,\sigma}(\pi;C).
        }
        \tag{A.2}
\]
\end{theorem}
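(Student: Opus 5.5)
The plan has three parts: a compactness step and a liminf inequality, which together give the lower bound, and a limsup recovery construction, which gives the matching upper bound. The constrained limit (A.1) and the Wasserstein form (A.2) then follow from these by the usual $\Gamma$-convergence argument.

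\emph{Compactness and liminf.} Take a sequence $T_\varepsilon\to u$ in $L^1(\sigma)$ with $\sup_\varepsilon\mathcal F_\varepsilon(T_\varepsilon)<\infty$. Then $\int V(T_\varepsilon)\,d\sigma\le C\varepsilon\to0$, so $u$ takes values in $C$ almost everywhere. Write $u=u_B$ with $B_i=u^{-1}(c_i)$.

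The key device is a Modica--Mortola-type composition. For each $i$, let $\phi_i(y)=d_V(c_i,y)$. This function satisfies $|\nabla\phi_i|\le V$ almost everywhere, and the $d_V$ triangle inequality controls $|\phi_i(c_j)-\phi_i(c_k)|$ by $\kappa_{jk}$. Set $w^i_\varepsilon=\phi_i\circ T_\varepsilon$. The chain rule and the Lipschitz bound $|DT_\varepsilon|\le\varepsilon^{-1}$ give
\[
|\nabla w^i_\varepsilon|\le V(T_\varepsilon)\,\varepsilon^{-1}.
\]
Hence $\int\rho\,|\nabla w^i_\varepsilon|\,dx\le\mathcal F_\varepsilon(T_\varepsilon)$. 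This is exactly where the hard gradient constraint replaces Young's inequality in the Baldo/Fonseca--Tartar proof.

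Weighted $BV$ lower semicontinuity ($\rho$ is continuous and positive, so we may localize to bounded open sets) then gives
\[
\int\rho\,d|D(\phi_i\circ u_B)|\le\liminf_{\varepsilon}\mathcal F_\varepsilon(T_\varepsilon).
\]
In particular each $B_i$ has locally finite perimeter. On the interface $\partial^*B_j\cap\partial^*B_k$, the function $\phi_i\circ u_B$ jumps by $|\phi_i(c_j)-\phi_i(c_k)|$, which is at most $\kappa_{jk}$ and equals $\kappa_{jk}$ when $i=j$. A single $\phi_i$ therefore only captures the interfaces touching $B_i$. To assemble the full energy I will localize: use the structure theorem for Caccioppoli partitions, blow up at $\mathcal H^{n-1}$-almost every point of $\partial^*B_j\cap\partial^*B_k$, and apply the one-dimensional estimate with $\phi_j$ there. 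The contributions are then combined by a measure-theoretic supremum over disjoint open sets, using the lemma that the supremum of a family of measures is a measure (De Giorgi--Letta). This yields $\liminf\mathcal F_\varepsilon\ge E_{p,\sigma}(B;C)$.

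\emph{Recovery sequence.} By the Braides--Conti--Garroni density theorem, it suffices to treat regular polyhedral partitions with the prescribed masses; a diagonal argument, together with continuity of $E_{p,\sigma}$ along that approximation, handles the general case. For each pair $(j,k)$ fix a near-optimal curve $\xi_{jk}$ for $\kappa_{jk}$, reparametrized so that $V(\xi_{jk})|\xi'_{jk}|$ is constant, of total length $L_{jk}$. Define $T_\varepsilon$ to equal $c_j$ deep inside $B_j$, and in an $O(\varepsilon L_{jk})$-neighbourhood of each facet set
\[
T_\varepsilon(x)=\xi_{jk}\big(\mathrm{dist}_{\pm}(x,F)/(\varepsilon L_{jk})\big),
\]
where $\mathrm{dist}_\pm$ is the signed distance to the facet. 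Then $\operatorname{Lip}(T_\varepsilon)\le\varepsilon^{-1}$, and the energy across each facet is $\kappa_{jk}$ times the weighted facet area plus $o(1)$, where the error uses the $C^1$ regularity of $\rho$.

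Two points need care. The first is that near arc-length-zero points of $V$ the curve cannot be parametrized with speed bounded by $1/\varepsilon$ while keeping the product $V|\xi'|$ constant. I will truncate at $V\ge\eta$ and send $\eta\to0$ slowly. The second is the codimension-two junctions of the polyhedral partition. There I will glue the pieces with a Lipschitz extension (Kirszbraun), at cost $O(\varepsilon^{2-1}\cdot\varepsilon^{-1}\cdot\varepsilon^{2})\cdot\mathcal H^{n-2}$, which is $o(1)$ after multiplying by $\varepsilon^{-1}$. Finally, to get exact nearest-well masses $\sigma(T_\varepsilon^{-1}(Q_i))=\pi_i$, I will translate the facets by $O(\varepsilon)$ amounts, using a transversal family of small volume-adjusting deformations; this changes the energy only by $o(1)$.

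\emph{Consequences.} Equation (A.1) follows from $\Gamma$-convergence together with equicoercivity, the latter coming from the same $BV$ bounds. For (A.2), observe that for any $T$,
\[
W_p^p(T_\#\sigma,\mu)\ge\int V(T)\,d\sigma+(\text{mass-mismatch term}),
\]
and conversely the recovery maps with exact Voronoi masses satisfy $W_p^p=\int V(T)\,d\sigma$. For the lower bound, nearest-point projection does not match masses, so I expect to bound $W_p^p(T_\#\sigma,\mu)$ below by the constrained infimum with masses that are only approximately $\pi$, and then use continuity of $J_{p,\sigma}$ in $\pi$.

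The main obstacle is the liminf step. Assembling the pairwise barriers $\kappa_{jk}$, rather than only the one-well-based bounds, requires the blow-up argument, and it must be checked that the Lipschitz constraint survives rescaling so that the one-dimensional bound $\ge\kappa_{jk}$ applies on slices.
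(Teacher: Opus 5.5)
\paragraph{Gap: the recovery profile.} Your recovery profile is parametrized the wrong way for a hard-gradient problem, and as written this step fails. Reparametrizing $\xi_{jk}$ so that $V(\xi_{jk})\abs{\xi_{jk}'}$ is constant is the soft Modica--Mortola equipartition heuristic. Here the energy charges only $V$, and the constraint only caps the speed.
\begin{itemize}
\item If $T=\zeta(r/\varepsilon)$ with $\Lip(\zeta)\le1$, the face energy is $\int V(\zeta(\tau))\,d\tau$. Any arc traversed at speed $\abs{\zeta'}<1$ is charged $V/\abs{\zeta'}>V$ per unit Euclidean arclength. So the optimum must saturate the cap everywhere on $\{V(\zeta)>0\}$, and it then costs exactly $\ell_V(\zeta)$.
\item Under your parametrization the speed $\ell_V(\xi)/V(\xi)$ is non-constant and unbounded at both endpoints, where $V=0$. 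Hence $x\mapsto\xi_{jk}(\dist_{\pm}(x,F)/(\varepsilon L_{jk}))$ is not $\varepsilon^{-1}$-Lipschitz.
\item If you rescale so that the maximal speed is $\varepsilon^{-1}$, the retained arc costs $\int V^2\,d\tau/\min V\ge\int V\,d\tau$. This is strictly more than $\kappa_{jk}$, and it blows up as your truncation level $\eta\downarrow0$.
\end{itemize}
So the obstacle you see near the wells is an artifact of this choice, and truncation makes it worse rather than repairing it.

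What works, and what the paper does, is the following. Project a near-optimal curve onto $\operatorname{conv}C$, which increases neither speed nor $V$. Reparametrize it by Euclidean arclength and extend it by the constants $c_j,c_k$. Then set $T=\zeta_{jk}(s_\varepsilon r_F/\varepsilon)$. Its face energy is $\ell_V(\zeta_{jk})/s_\varepsilon\le(\kappa_{jk}+\delta)/s_\varepsilon$ for a $\delta$-optimal curve, with no difficulty at the wells.

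The slack $s_\varepsilon=1-\sqrt\varepsilon$ also matters for exact masses. It lets the paper compose with a near-identity diffeomorphism $\Phi_\varepsilon$, $\Lip(\Phi_\varepsilon^{-1})\le1+C\varepsilon$, without violating $\Lip\le\varepsilon^{-1}$. Your facet-translation step needs either such slack, or a rebuild of the map that depends continuously on the translation parameters. Only then can an inverse-function argument over a spanning tree of the adjacency graph hit $\pi$ exactly.

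\paragraph{Other steps.} The rest of your architecture matches the paper: the calibrations $\phi_i=d_V(c_i,\cdot)$ composed with $T$, Braides--Conti--Garroni density, Kirszbraun across the codimension-two skeleton, and the identity $W_p^p(T\push\sigma,\mu)=\int\dist(T,C)^p\,d\sigma$ under exact Voronoi masses. Three steps, however, are heavier or vaguer than needed.
\begin{itemize}
\item \emph{Liminf.} No blow-up, slicing, or De Giorgi--Letta argument is required. The weak-$*$ limit $\mu$ of $\varepsilon_k^{-1}V(T_k)\rho\,dx$ dominates $\rho\,\abs{D(\overline\phi_i\circ u_B)}$ for every $i$ at once. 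Across $\partial^*B_i\cap\partial^*B_j$, the function $\overline\phi_i\circ u_B$ jumps by exactly $\kappa_{ij}$, and the pairwise interfaces are $\mathcal H^{n-1}$-disjoint. Restricting $\mu$ to each interface and summing therefore gives $E_{p,\sigma}(B;C)$. The calibration has already turned the Lipschitz cap into a BV bound, so nothing needs to survive rescaling. In the chain rule, use the everywhere bound $\operatorname{lip}\phi_i\le V$ rather than an a.e.\ one, and truncate $\phi_i$ at $\max\kappa$.
\item \emph{Equicoercivity.} BV bounds on the $\phi_i\circ T_\varepsilon$ do not by themselves make $T_\varepsilon$ precompact. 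You also need that the label vectors $a_j=(\kappa_{1j},\dots,\kappa_{Mj})$ are pairwise distinct (since $\kappa_{jj}=0<\kappa_{ij}$), so that the limit of $(\overline\phi_i\circ T_\varepsilon)_i$ selects a phase, together with $\dist(T_\varepsilon,C)\to0$ in measure.
\item \emph{Lower bound in (A.2).} This does not require continuity of $J_{p,\sigma}$ in $\pi$. Bounded $\Lambda W_p^p$ gives bounded $\mathcal F_\varepsilon$, hence a limit partition $B$. Under an optimal coupling, $\Prob(q(Y_k)\ne q(Z_k))\le(2/\Delta_C)^p\,\E\abs{Y_k-Z_k}^p=O(\varepsilon_k)$, with $\Delta_C=\min_{i\ne j}\abs{c_i-c_j}$. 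So the Voronoi masses tend to $\pi$, whence $\sigma(B_i)=\pi_i$, and the unconstrained liminf yields $E_{p,\sigma}(B;C)\ge J_{p,\sigma}(\pi;C)$.
\end{itemize}
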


\subsection{Barrier calibration}

\begin{lemma}[Barrier metric and scalar calibration]
\label{lem:barrier-calibration}
The function \(d_V\) is a pseudometric on \(\mathbb R^d\). For \(i\ne j\),
\[
        \kappa_{ij}>0.
\]
For
\[
        \phi_i(y):=d_V(c_i,y)
\]
one has the pointwise local Lipschitz estimate
\[
        \operatorname{lip}\phi_i(y)\le V(y).
        \tag{A.3}
\]
Let
\[
        K_\kappa:=\max_{a,b}\kappa_{ab},
        \qquad
        \overline\phi_i(y):=\min\{\phi_i(y),K_\kappa\}.
\]
If \(T\in\operatorname{Lip}_{\mathrm{loc}}(\mathbb R^n;\mathbb R^d)\), then
\[
        g_i:=\overline\phi_i\circ T\in W^{1,\infty}_{\mathrm{loc}}(\mathbb R^n),
\]
and a.e.
\[
        |\nabla g_i|
        \le
        V(T)\,\|DT\|_{\mathrm{op}}.
        \tag{A.4}
\]
In particular, if \(\operatorname{Lip}(T)\le\varepsilon^{-1}\), then
\[
        |\nabla g_i|
        \le
        \varepsilon^{-1}V(T)
        \qquad\text{a.e.}
        \tag{A.5}
\]
\end{lemma}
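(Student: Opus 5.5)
The plan is to prove the four claims of Lemma~\ref{lem:barrier-calibration} in turn: the pseudometric property, positivity of $\kappa_{ij}$, the local Lipschitz bound (A.3), and the chain-rule bound (A.4)--(A.5).

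\emph{Pseudometric.} Nonnegativity and symmetry are immediate: reversing a curve does not change $\ell_V$. For the triangle inequality, concatenate two curves and reparametrize to $[0,1]$. The integral $\int V(\xi)|\xi'|$ is invariant under absolutely continuous monotone reparametrization, so $\ell_V$ is additive under concatenation. Finiteness holds because straight segments have finite $V$-length, since $V$ is continuous. Hence $d_V(y,y)=0$ and $d_V$ is a finite pseudometric.

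\emph{Positivity of the barrier.} Let $r:=\tfrac12\min_{a\ne b}|c_a-c_b|>0$. Any curve from $c_i$ to $c_j$ must leave $\overline{B}(c_i,r)$. Fix the annulus $A=\{r/2\le |y-c_i|\le r\}$; on $A$ we have $\dist(y,C)\ge r/2$, because other atoms lie at distance at least $2r-r=r$. So $V\ge (r/2)^p$ on $A$. Any such curve has Euclidean length at least $r/2$ inside $A$. The radial function $|\xi(s)-c_i|$ is absolutely continuous with derivative bounded by $|\xi'|$, so the coarea/length argument gives $\ell_V(\xi)\ge (r/2)^{p+1}$. Taking the infimum yields $\kappa_{ij}\ge (r/2)^{p+1}>0$.

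\emph{Local Lipschitz bound (A.3).} By the triangle inequality, $|\phi_i(y)-\phi_i(z)|\le d_V(y,z)$. The segment from $y$ to $z$ gives $d_V(y,z)\le |y-z|\sup_{[y,z]}V$. Dividing by $|y-z|$ and letting $z\to y$, continuity of $V$ gives $\operatorname{lip}\phi_i(y)\le V(y)$.

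\emph{Chain rule (A.4)--(A.5).} First, $\phi_i$ is locally Lipschitz with constant $\sup V$ on convex compacts. Truncation by $K_\kappa$ preserves this and does not increase $\operatorname{lip}$, so $\operatorname{lip}\overline\phi_i\le V$ everywhere. Hence $g_i=\overline\phi_i\circ T$ is locally Lipschitz, i.e.\ in $W^{1,\infty}_{\mathrm{loc}}$, and Rademacher gives a.e.\ differentiability of both $g_i$ and $T$.

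This step is the main obstacle: $\overline\phi_i$ need not be differentiable at $T(x)$, and $T$ may map a positive-measure set into the nondifferentiability set of $\overline\phi_i$, so the classical chain rule cannot be used. The plan is to avoid differentiating $\overline\phi_i$ altogether. At a point $x$ where both $g_i$ and $T$ are differentiable, take a unit vector $e$ and estimate
\[
|\nabla g_i(x)\cdot e|=\lim_{h\downarrow0}\frac{|\overline\phi_i(T(x+he))-\overline\phi_i(T(x))|}{h}.
\]
This is bounded by $\operatorname{lip}\overline\phi_i(T(x))$ times $\limsup_h |T(x+he)-T(x)|/h = |DT(x)e|$. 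To make this rigorous, write $T(x+he)=T(x)+h\,DT(x)e+o(h)$ and use the local Lipschitz constant of $\overline\phi_i$ on the ball $B(T(x),\eta)$, which tends to $\operatorname{lip}$ as $\eta\to0$. The bound $\sup_{B(T(x),\eta)}V$ suffices here and tends to $V(T(x))$ by continuity.

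Taking the supremum over $e$ gives $|\nabla g_i|\le V(T)\|DT\|_{\mathrm{op}}$ a.e., which is (A.4). Finally, (A.5) follows from $\|DT\|_{\mathrm{op}}\le \operatorname{Lip}(T)\le\varepsilon^{-1}$ a.e.
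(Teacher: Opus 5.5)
Your proof is correct and follows the paper's route: concatenation of curves for the triangle inequality, localization near $c_i$ for $\kappa_{ij}>0$, the straight-segment bound for (A.3), and Rademacher's theorem plus a chain-rule argument for (A.4)--(A.5). The differences are minor. For positivity, the paper uses that $V=|y-c_i|^p$ exactly inside $B(c_i,\Delta_C/2)$ and integrates $\int_0^{\Delta_C/2} r^p\,dr$ along the radial coordinate, which gives $(\Delta_C/2)^{p+1}/(p+1)$ rather than your annulus bound $(\Delta_C/4)^{p+1}$. Your directional-derivative argument with the shrinking-ball constant $\sup_{B(T(x),\eta)}V$ is exactly the ``metric chain rule'' that the paper cites without proof.
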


\begin{proof}
Nonnegativity, symmetry, and \(d_V(y,y)=0\) are immediate from the definition and reversal of curves. The triangle inequality follows by concatenating curves. Thus \(d_V\) is a pseudometric.

Let
\[
        \Delta_C:=\min_{a\ne b}|c_a-c_b|>0.
\]
Fix \(i\ne j\) and let \(\xi\) be any absolutely continuous curve from \(c_i\) to \(c_j\). Let
\[
        \tau:=\inf\{s:|\xi(s)-c_i|=\Delta_C/2\}.
\]
By continuity, \(\tau\) exists. On the ball \(B(c_i,\Delta_C/2)\), the closest codebook point is \(c_i\), hence
\[
        V(\xi(s))=|\xi(s)-c_i|^p
        \qquad\text{for }0\le s\le\tau.
\]
Set \(r(s):=|\xi(s)-c_i|\). Since \(|r'(s)|\le|\xi'(s)|\) a.e.,
\[
\begin{aligned}
        \ell_V(\xi)
        &\ge
        \int_0^\tau r(s)^p|\xi'(s)|\,ds                                      \\
        &\ge
        \int_0^\tau r(s)^p|r'(s)|\,ds                                           \\
        &\ge
        \int_0^{\Delta_C/2} r^p\,dr
        =
        \frac{(\Delta_C/2)^{p+1}}{p+1}.
\end{aligned}
\]
Taking the infimum over \(\xi\) gives \(\kappa_{ij}>0\).

Next, by the triangle inequality for \(d_V\),
\[
        |\phi_i(y)-\phi_i(z)|
        \le d_V(y,z).
\]
For \(z\) near \(y\), the straight segment \(\xi(\theta)=y+\theta(z-y)\) gives
\[
        d_V(y,z)
        \le
        |z-y|\int_0^1 V(y+\theta(z-y))\,d\theta.
\]
Dividing by \(|z-y|\) and sending \(z\to y\), using continuity of \(V\), yields
\[
        \operatorname{lip}\phi_i(y)\le V(y).
\]
The truncation \(r\mapsto\min\{r,K_\kappa\}\) is \(1\)-Lipschitz, so the same pointwise bound holds for \(\overline\phi_i\).

Since \(T\) is locally Lipschitz and \(\overline\phi_i\) is locally Lipschitz, \(g_i=\overline\phi_i\circ T\) lies in \(W^{1,\infty}_{\mathrm{loc}}\). By Rademacher's theorem and the metric chain rule,
\[
        |\nabla g_i(x)|
        \le
        \operatorname{lip}\overline\phi_i(T(x))\,\|DT(x)\|_{\mathrm{op}}
        \le
        V(T(x))\,\|DT(x)\|_{\mathrm{op}}
\]
for a.e. \(x\). This proves the lemma.
\end{proof}

\subsection{Compactness}

\begin{proposition}[Compactness of bounded-energy sequences]
\label{prop:gamma-compactness}
Let \(\varepsilon_k\downarrow0\) and suppose
\[
        \sup_k\mathcal F_{\varepsilon_k}(T_k)<\infty.
\]
Then, after passing to a subsequence, there exists a Caccioppoli partition
\[
        B=(B_1,\ldots,B_M)
\]
such that
\[
        T_k\to u_B:=\sum_i c_i\mathbf 1_{B_i}
        \qquad\text{in }L^1(\sigma;\mathbb R^d).
\]
\end{proposition}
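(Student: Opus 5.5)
The plan is to follow the Modica--Mortola/Baldo compactness scheme, with the hard Lipschitz constraint taking the place of the gradient term through the calibration of Lemma~\ref{lem:barrier-calibration}. Write $K:=\sup_k\mathcal F_{\varepsilon_k}(T_k)<\infty$. Two consequences of the energy bound will be used. First, the bulk term is small:
\[
\int V(T_k)\,d\sigma\le K\varepsilon_k\to0 .
\]
Second, by (A.5), for every $i$ the calibrated functions $g_i^k:=\overline\phi_i\circ T_k$ satisfy
\[
\int|\nabla g_i^k|\,d\sigma\le \varepsilon_k^{-1}\int V(T_k)\,d\sigma\le K .
\]
So each $g_i^k$ has weighted total variation bounded by $K$ and takes values in $[0,K_\kappa]$. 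Since $\rho>0$ is $C^1$, these are bounded in $BV_{\mathrm{loc}}(\mathbb R^n)$ on every ball, and also bounded in $L^\infty$.

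\textbf{Step 1 (compactness of the calibrated functions).} I apply Rellich compactness for $BV$ on an exhausting sequence of balls and take a diagonal subsequence. This gives $g_i^k\to g_i$ in $L^1_{\mathrm{loc}}$ and a.e. The uniform $L^\infty$ bound and $\sigma(\mathbb R^n)=1$ upgrade this to convergence in $L^1(\sigma)$ by dominated convergence. By lower semicontinuity, $g_i\in BV_{\mathrm{loc}}$ with $\int\rho\,d|Dg_i|\le K$.

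\textbf{Step 2 (identifying the limit).} Since $V(T_k)\to0$ in $L^1(\sigma)$, after passing to a further subsequence $\operatorname{dist}(T_k(x),C)\to0$ for $\sigma$-a.e.\ $x$. Fix such an $x$. Near the wells, $\overline\phi_i$ is continuous with $\overline\phi_i(c_j)=\min(\kappa_{ij},K_\kappa)=\kappa_{ij}$. Hence $g_i^k(x)$ accumulates only on the finite set $\{\kappa_{ij}\}_j$. The vector $(g_1,\dots,g_M)(x)$ therefore lies in the finite set $\{(\kappa_{1j},\dots,\kappa_{Mj}):j\}$. These vectors are pairwise distinct, because the $j$-th one has a zero exactly in coordinate $j$, and $\kappa_{ij}>0$ for $i\neq j$ by Lemma~\ref{lem:barrier-calibration}. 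So the limit determines a label $j(x)$.

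Set $B_j:=\{g_j=0\}$, which equals $\{x : g(x)=(\kappa_{\cdot j})\}$. To see that the partition is Caccioppoli, note that $\mathbf 1_{B_j}=h_j(g_j)$ for a function $h_j$ that is Lipschitz on the finite range of $g_j$ (for instance $h_j(r)=(1-r/\min_{i\ne j}\kappa_{ij})_+$). The $BV$ chain rule then gives $\mathbf 1_{B_j}\in BV_{\mathrm{loc}}$ with finite weighted perimeter, bounded by $K/\min_{i\neq j}\kappa_{ij}$.

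\textbf{Step 3 (convergence of $T_k$ itself).} At a.e.\ $x$ the label is eventually stable: $g_{j(x)}^k(x)\to0$ while $g_i^k(x)\to\kappa_{i j(x)}>0$ for $i\ne j(x)$. Together with $\operatorname{dist}(T_k(x),C)\to0$, this forces $T_k(x)\to c_{j(x)}$. The reason is that if $T_k(x)$ were near $c_{j'}$ with $j'\neq j(x)$ along a subsequence, then $g_{j(x)}^k(x)$ would approach $\kappa_{j(x)j'}>0$, a contradiction. Thus $T_k\to u_B$ $\sigma$-a.e.

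For $L^1(\sigma)$ convergence I need uniform integrability of $|T_k|$. The main worry is that $T_k$ could escape to infinity on small sets, since $V$ grows like $|y|^p$ only far away. I would bound $|T_k|\le \max|c_i|+\operatorname{dist}(T_k,C)$ and use $\int\operatorname{dist}(T_k,C)^p\,d\sigma\le K\varepsilon_k$. For $p\ge1$, H\"older/Jensen gives $\int\operatorname{dist}(T_k,C)\,d\sigma\le (K\varepsilon_k)^{1/p}\to0$. So $T_k-\pi_C(T_k)\to0$ in $L^1(\sigma)$, where $\pi_C$ denotes a nearest-point selection. Since $\pi_C(T_k)$ is bounded and converges a.e.\ to $u_B$, dominated convergence finishes the argument.

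I expect the main obstacle to be the a.e.\ label identification in Step 2: establishing that the limits of the truncated calibrations separate wells requires $\kappa_{ij}\le K_\kappa$, which makes the truncation harmless at the wells, and strict positivity of $\kappa_{ij}$. Both are supplied by Lemma~\ref{lem:barrier-calibration}.
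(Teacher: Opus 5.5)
Your proof is correct and follows the paper's argument. You calibrate with the truncated barrier distances $\overline\phi_i\circ T_k$, use (A.5) for a locally uniform $BV$ bound, identify the limit with one of the separated label vectors $(\kappa_{1j},\dots,\kappa_{Mj})$, obtain $\mathbf 1_{B_j}\in BV_{\mathrm{loc}}$ by a Lipschitz chain rule, and split $T_k-u_B$ into $\operatorname{dist}(T_k,C)$ plus a bounded nearest-well term. The differences are cosmetic: you extract a.e.-convergent subsequences where the paper argues via convergence in $\sigma$-measure, though in Step 2 you should derive $\mathbf g(x)\in\{a_j\}$ jointly, from $|\mathbf g^k(x)-a_{q(T_k(x))}|\to0$ (the whole vector sits near a single $a_j$), rather than from the coordinatewise accumulation sets.
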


\begin{proof}
The energy bound gives
\[
        \int_{\mathbb R^n}\operatorname{dist}(T_k,C)^p\,d\sigma
        \le
        C\varepsilon_k.
        \tag{A.6}
\]
Hence, for every \(r>0\),
\[
        \sigma\{\operatorname{dist}(T_k,C)\ge r\}
        \le
        C r^{-p}\varepsilon_k.
        \tag{A.7}
\]
In particular,
\[
        \operatorname{dist}(T_k,C)\to0
        \qquad\text{in }L^p(\sigma)
        \quad\text{and in }L^1(\sigma).
        \tag{A.8}
\]

For \(i=1,\ldots,M\), define
\[
        g_{i,k}:=\overline\phi_i\circ T_k.
\]
Let \(K\subset\mathbb R^n\) be compact. Since \(\rho>0\) and continuous,
\[
        \rho_K:=\min_K\rho>0.
\]
By Lemma~\ref{lem:barrier-calibration},
\[
\begin{aligned}
        \int_K |\nabla g_{i,k}|\,dx
        &\le
        \rho_K^{-1}
        \int_K |\nabla g_{i,k}|\,\rho\,dx                                      \\
        &\le
        \rho_K^{-1}
        \int_K \varepsilon_k^{-1}V(T_k)\,\rho\,dx                                \\
        &\le
        C_K.
\end{aligned}
        \tag{A.9}
\]
Also \(0\le g_{i,k}\le K_\kappa\). Thus \((g_{i,k})_k\) is bounded in \(BV(K)\). By BV compactness and a diagonal extraction over \(K=B_R(0)\), there are \(g_i\in BV_{\mathrm{loc}}(\mathbb R^n)\) such that
\[
        g_{i,k}\to g_i
        \qquad\text{in }L^1_{\mathrm{loc}}(\mathbb R^n).
\]
Write
\[
        \mathbf g_k:=(g_{1,k},\ldots,g_{M,k}),
        \qquad
        \mathbf g:=(g_1,\ldots,g_M).
\]

For each well \(c_j\), define the label vector
\[
        a_j:=(\kappa_{1j},\ldots,\kappa_{Mj})\in\mathbb R^M.
\]
These label vectors are separated. Indeed, if \(i\ne j\), then the \(i\)-th coordinate of \(a_i\) is \(0\), whereas the \(i\)-th coordinate of \(a_j\) is \(\kappa_{ij}>0\). Hence
\[
        \min_{i\ne j}|a_i-a_j|>0.
        \tag{A.10}
\]

Let \(q_k(x):=q(T_k(x))\). Since \(\overline\phi_i(c_j)=\kappa_{ij}\), continuity of the finitely many functions \(\overline\phi_i\) near the finite set \(C\) implies that, for
\[
        \omega(r):=
        \sup_{\operatorname{dist}(y,C)\le r}
        \big|\,
        (\overline\phi_1(y),\ldots,\overline\phi_M(y))
        -
        a_{q(y)}
        \,\big|,
\]
one has \(\omega(r)\downarrow0\) as \(r\downarrow0\). Therefore
\[
\begin{aligned}
        \sigma\{|\mathbf g_k-a_{q_k}|>\alpha\}
        &\le
        \sigma\{\operatorname{dist}(T_k,C)>r\}
\end{aligned}
\]
whenever \(\omega(r)<\alpha\). By (A.7),
\[
        \mathbf g_k-a_{q_k}\to0
        \qquad\text{in }\sigma\text{-measure}.
        \tag{A.11}
\]
Together with \(\mathbf g_k\to\mathbf g\) locally in measure, this implies
\[
        \mathbf g(x)\in\{a_1,\ldots,a_M\}
        \qquad\text{for a.e. }x.
\]
Define the limiting phase map \(q_\infty\) by
\[
        \mathbf g(x)=a_{q_\infty(x)}
\]
and set
\[
        B_i:=\{x:q_\infty(x)=i\}.
\]
Then \(B=(B_1,\ldots,B_M)\) is a measurable partition.

It remains to show that it is a Caccioppoli partition. For fixed \(i\),
\[
        g_i
        =
        \sum_{j=1}^M\kappa_{ij}\mathbf 1_{B_j}.
        \tag{A.12}
\]
Let
\[
        \kappa_*:=\min_{i\ne j}\kappa_{ij}>0.
\]
Choose a Lipschitz function \(h_i:\mathbb R\to[0,1]\) such that
\[
        h_i(0)=1,
        \qquad
        h_i(r)=0\quad\text{for }r\ge \kappa_*/2.
\]
Since \(g_i=0\) on \(B_i\) and \(g_i\ge\kappa_*\) on \(\mathbb R^n\setminus B_i\),
\[
        \mathbf 1_{B_i}=h_i(g_i).
\]
The BV chain rule for Lipschitz scalar functions gives
\[
        \mathbf 1_{B_i}\in BV_{\mathrm{loc}}(\mathbb R^n).
\]
Thus \(B\) is a Caccioppoli partition.

Finally,
\[
\begin{aligned}
        \int_{\mathbb R^n}|T_k-u_B|\,d\sigma
        &\le
        \int_{\mathbb R^n}|T_k-c_{q_k}|\,d\sigma
        +
        \int_{\mathbb R^n}|c_{q_k}-c_{q_\infty}|\,d\sigma.
\end{aligned}
\]
The first term tends to zero by (A.8). The second tends to zero because \(q_k\to q_\infty\) locally in measure, the codebook is finite, and the \(\sigma\)-tail can be made arbitrarily small. Hence \(T_k\to u_B\) in \(L^1(\sigma)\).
\end{proof}

\subsection{Liminf inequality}

\begin{proposition}[Liminf inequality]
\label{prop:gamma-liminf}
Let \(\varepsilon_k\downarrow0\), \(T_k\to u_B\) in \(L^1(\sigma;\mathbb R^d)\), and
\[
        \sup_k\mathcal F_{\varepsilon_k}(T_k)<\infty.
\]
Then
\[
        \liminf_{k\to\infty}\mathcal F_{\varepsilon_k}(T_k)
        \ge
        E_{p,\sigma}(B;C).
\]
\end{proposition}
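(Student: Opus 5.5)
The plan is to follow the Modica--Mortola/Baldo calibration route, using the functions $g_{i,k}=\overline\phi_i\circ T_k$ of Lemma~\ref{lem:barrier-calibration} and Proposition~\ref{prop:gamma-compactness}. We may assume the $\liminf$ is a limit along a subsequence. Passing to a further subsequence as in the compactness proof, $g_{i,k}\to g_i=\sum_j\kappa_{ij}\mathbf 1_{B_j}$ in $L^1_{\mathrm{loc}}$. The limit partition is the given $B$ because $T_k\to u_B$ in $L^1(\sigma)$, which forces $q_k\to q_B$ in measure. By (A.5),
\[
\mathcal F_{\varepsilon_k}(T_k)=\varepsilon_k^{-1}\int V(T_k)\,\rho\,dx\ \ge\ \int |\nabla g_{i,k}|\,\rho\,dx
\]
for each fixed $i$, and more generally on any open set $A$ with $i$ chosen depending on $A$.

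\emph{Localization.} Define the finite Radon measures $\lambda_k:=\varepsilon_k^{-1}V(T_k)\rho\,\mathcal L^n$, which have bounded mass. After extraction, $\lambda_k\rightharpoonup^*\lambda$, with $\lambda(\mathbb R^n)\le\liminf\mathcal F_{\varepsilon_k}(T_k)$. For every open $A$ and every $i$, lower semicontinuity of the weighted total variation under $L^1_{\mathrm{loc}}$ convergence (valid since $\rho$ is continuous and positive) gives
\[
\lambda(\overline A)\ \ge\ \limsup_k \lambda_k(A)\ \ge\ \liminf_k\int_A|\nabla g_{i,k}|\rho\ \ge\ |Dg_i|_\rho(A),
\]
where $|Dg_i|_\rho:=\rho|Dg_i|$. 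By the structure theorem for Caccioppoli partitions, $|Dg_i|_\rho=\sum_{a<b}|\kappa_{ia}-\kappa_{ib}|\,\rho\,\mathcal H^{n-1}\llcorner(\partial^*B_a\cap\partial^*B_b)$, up to $\mathcal H^{n-1}$-null sets.

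\emph{Blow-up and choice of calibration.} Let $\mu_{ab}:=\rho\,\mathcal H^{n-1}\llcorner(\partial^*B_a\cap\partial^*B_b)$. These are mutually singular, and $\sum_{a<b}\mu_{ab}$ is the full interface measure. Taking $i=a$ gives $|\kappa_{aa}-\kappa_{ab}|=\kappa_{ab}$, so $|Dg_a|_\rho\ge\kappa_{ab}\mu_{ab}$ on the $ab$-interface. Choosing, near each interface point, the index $i=a$ belonging to that interface yields, through the open-set inequality above and a Besicovitch differentiation argument, that the Radon–Nikodym density satisfies $d\lambda/d\mu_{ab}\ge\kappa_{ab}$ at $\mu_{ab}$-a.e.\ point. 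Concretely, $\lambda\ge\sup_i|Dg_i|_\rho$ in the sense of measures on open sets. The supremum of the family $\{|Dg_i|_\rho\}$ is then computed by the standard lemma on suprema of measures over disjoint open sets: it dominates $\sum_{a<b}\max_i|\kappa_{ia}-\kappa_{ib}|\,\mu_{ab}\ge\sum_{a<b}\kappa_{ab}\mu_{ab}$. Summing gives $\lambda(\mathbb R^n)\ge E_{p,\sigma}(B;C)$, which is the claim.

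The main obstacle is the supremum-of-measures step, which upgrades the single-$i$ bound $\int|\nabla g_{i,k}|\rho$ to the sum over all interfaces with the correct pair weight $\kappa_{ab}$. A single calibration $g_i$ only captures $\kappa_{ab}$ on interfaces touching $B_i$; on interfaces not touching $B_i$ it gives only the weaker $|\kappa_{ia}-\kappa_{ib}|\le\kappa_{ab}$. My first attempt is to use disjoint open sets $A_1,\dots,A_N$, each nearly covering a compact piece of one $\partial^*B_a\cap\partial^*B_b$ (this uses inner regularity of $\mu_{ab}$ and their mutual singularity), apply the bound with $i=a$ on each $A_l$, and sum. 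Disjointness makes $\sum_l\lambda_k(A_l)\le\lambda_k(\mathbb R^n)$, so no energy is double-counted.
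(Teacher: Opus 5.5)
Your proposal is correct and is essentially the paper's proof. It uses the same calibrations $g_{i,k}=\overline\phi_i\circ T_k$, the same weak-* limit of the energy measures, lower semicontinuity of the $\rho$-weighted total variation, the jump $\kappa_{ab}$ of $g_a$ across $\partial^*B_a\cap\partial^*B_b$, and summation over the $\mathcal H^{n-1}$-disjoint interfaces.

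The ``main obstacle'' is lighter than you expect. Testing against nonnegative $\psi\in C_c(\mathbb R^n)$ gives $\lambda\ge\rho|Dg_i|$ as Borel measures for every $i$, which is how the paper proceeds. You can then restrict to each interface with $i=a$ and add up, with no Besicovitch or supremum-of-measures lemma needed. Also, take your step $\lambda(\overline A)\ge\limsup_k\lambda_k(A)$ only for bounded $A$, so that $\overline A$ is compact.
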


\begin{proof}
Passing to a subsequence, assume the liminf is a limit. Define Radon measures
\[
        \mu_k
        :=
        \varepsilon_k^{-1}V(T_k)\rho\,dx.
\]
The total masses \(\mu_k(\mathbb R^n)\) are bounded. Passing to a further subsequence,
\[
        \mu_k \stackrel{*}{\rightharpoonup}\mu
\]
locally as Radon measures.

Let
\[
        g_{i,k}:=\overline\phi_i\circ T_k.
\]
As in the compactness proof,
\[
        g_{i,k}\to g_i:=\sum_{j=1}^M\kappa_{ij}\mathbf 1_{B_j}
        \qquad\text{in }L^1_{\mathrm{loc}}.
        \tag{A.13}
\]
By Lemma~\ref{lem:barrier-calibration},
\[
        \rho|\nabla g_{i,k}|\,dx
        \le
        \mu_k
        \qquad\text{as measures.}
        \tag{A.14}
\]
By lower semicontinuity of total variation under \(L^1_{\mathrm{loc}}\) convergence, with continuous positive weight \(\rho\),
\[
        \rho\,|Dg_i|
        \le
        \mu
        \qquad\text{as Radon measures.}
        \tag{A.15}
\]
Indeed, for every nonnegative \(\psi\in C_c(\mathbb R^n)\),
\[
        \int \psi\,\rho\,d|Dg_i|
        \le
        \liminf_k\int \psi\,\rho\,|\nabla g_{i,k}|\,dx
        \le
        \liminf_k\int\psi\,d\mu_k
        =
        \int\psi\,d\mu.
\]

Let
\[
        \Sigma_{ij}^B:=\partial^*B_i\cap\partial^*B_j.
\]
On \(\Sigma_{ij}^B\), the approximate traces of \(g_i\) are
\[
        0
        \quad\text{from the }B_i\text{ side},
        \qquad
        \kappa_{ij}
        \quad\text{from the }B_j\text{ side}.
\]
Therefore the jump part of \(Dg_i\) gives
\[
        |Dg_i|\llcorner\Sigma_{ij}^B
        \ge
        \kappa_{ij}\,\mathcal H^{n-1}\llcorner\Sigma_{ij}^B.
        \tag{A.16}
\]
Combining (A.15) and (A.16),
\[
        \mu\llcorner\Sigma_{ij}^B
        \ge
        \kappa_{ij}\rho\,\mathcal H^{n-1}\llcorner\Sigma_{ij}^B.
        \tag{A.17}
\]
For finite Caccioppoli partitions, pairwise reduced interfaces are \(\mathcal H^{n-1}\)-a.e. disjoint: the triple-junction set has zero \((n-1)\)-dimensional measure. Hence
\[
\begin{aligned}
        \liminf_k\mathcal F_{\varepsilon_k}(T_k)
        &=
        \liminf_k\mu_k(\mathbb R^n)                                      \\
        &\ge
        \mu(\mathbb R^n)                                                   \\
        &\ge
        \sum_{i<j}
        \kappa_{ij}
        \int_{\Sigma_{ij}^B}\rho\,d\mathcal H^{n-1}                         \\
        &=
        E_{p,\sigma}(B;C).
\end{aligned}
\]
\end{proof}

\subsection{One-dimensional profiles}

\begin{lemma}[Near-optimal one-dimensional profiles]
\label{lem:one-dimensional-profile}
Let
\[
        K:=\operatorname{conv}C.
\]
For every \(i\ne j\) and every \(\eta>0\), there exists a Lipschitz curve
\[
        \zeta_{ij}:\mathbb R\to K
\]
such that
\[
        \operatorname{Lip}(\zeta_{ij})\le1,
        \qquad
        \zeta_{ij}(s)=c_i\text{ for }s\le -L_{ij},
        \qquad
        \zeta_{ij}(s)=c_j\text{ for }s\ge L_{ij},
\]
for some \(L_{ij}<\infty\), and
\[
        \int_{\mathbb R}V(\zeta_{ij}(s))\,ds
        \le
        \kappa_{ij}+\eta.
        \tag{A.18}
\]
\end{lemma}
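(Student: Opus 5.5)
The plan is to take a near-optimal path for $\kappa_{ij}=d_V(c_i,c_j)$ and reparametrize it by arc length. Two things must be arranged along the way: the curve should lie in $K=\operatorname{conv}C$, and it should have finite length, so that the constant tails beyond $\pm L_{ij}$ cost nothing.

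\emph{Step 1 (projection onto $K$).} Let $\Pi_K$ be the metric projection onto the closed convex set $K$. It is $1$-Lipschitz, and for any $y$ and any $c\in C\subset K$ we have $|\Pi_K y-c|\le|y-c|$, so $V(\Pi_K y)\le V(y)$. Given an absolutely continuous curve $\xi$ from $c_i$ to $c_j$ with $\ell_V(\xi)\le\kappa_{ij}+\eta/2$, the curve $\Pi_K\circ\xi$ is again absolutely continuous. It has the same endpoints, satisfies $|(\Pi_K\circ\xi)'|\le|\xi'|$ a.e., and therefore $\ell_V(\Pi_K\circ\xi)\le\ell_V(\xi)$. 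So we may assume $\xi$ takes values in $K$.

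\emph{Step 2 (finite length).} The integrand $V(\xi)|\xi'|$ vanishes near the wells, so a curve of finite $V$-length could in principle have infinite Euclidean length, for instance by oscillating near some $c_k$. I would first try to rule this out. Note that $V$ is bounded above by $\operatorname{diam}(K)^p$ on $K$. Off the balls $B(c_k,r)$ we have $V\ge r^p$, so the Euclidean length spent there is at most $\ell_V(\xi)/r^p$, which is finite. Inside each ball $B(c_k,r)$, replace every excursion by the straight segment between its entry and exit points, passing through or near $c_k$ as convenient. A segment of length at most $2r$ inside the ball costs at most $\int$ of $(\text{distance})^p$ along it, which is $O(r^{p+1})$. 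There are finitely many balls, and the cost of such a replacement tends to $0$ as $r\to0$. More simply, one can replace the whole portion of the curve between its first and last visits to $B(c_k,r)$ by two radial segments through $c_k$, each costing at most $r^{p+1}/(p+1)$. This also handles all returns to a ball, including returns to $c_i$ and $c_j$ themselves. Choosing $r$ small makes the added cost below $\eta/2$, and convexity keeps the curve in $K$. The resulting curve has finite Euclidean length $L$ and $V$-length at most $\kappa_{ij}+\eta$.

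\emph{Step 3 (arc-length reparametrization).} Let $\tilde\zeta:[0,L]\to K$ be the constant-speed reparametrization, with $|\tilde\zeta'|=1$ a.e. Collapsing constant intervals is harmless here. By the change-of-variables formula, $\int_0^L V(\tilde\zeta(s))\,ds=\ell_V(\xi)$. Set $L_{ij}=L/2$ and define $\zeta_{ij}(s)=\tilde\zeta(s+L/2)$ for $|s|\le L/2$, with $\zeta_{ij}=c_i$ for $s\le-L/2$ and $\zeta_{ij}=c_j$ for $s\ge L/2$. The tails contribute $V(c_i)=V(c_j)=0$ to the integral, $\operatorname{Lip}(\zeta_{ij})\le1$, and (A.18) holds.

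The main obstacle is Step 2, the finite-length surgery near the wells. This is exactly where the degeneracy of $V$ at $C$ bites. The surgery must also avoid creating a shortcut that increases $V$; this is handled because radial segments into a well only decrease the distance to $C$.
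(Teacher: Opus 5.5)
Your proof is correct and follows the paper's route: project a near-optimal curve onto $K=\operatorname{conv}C$ using that $\Pi_K$ is $1$-Lipschitz and does not increase $V$, reparametrize by arclength, and extend by the constants $c_i$ and $c_j$. Your Step~2 is valid in its first-and-last-visit form but unnecessary: $d_V$ is defined over $AC([0,1];\mathbb R^d)$, so every competitor has $\xi'\in L^1$ and hence finite Euclidean length $\int_0^1|\xi'|\,ds$ (not increased by $\Pi_K$), which is why the paper can pass directly to an arclength parametrization on a finite interval and the obstacle you anticipated does not arise.
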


\begin{proof}
Choose an absolutely continuous curve \(\xi:[0,1]\to\mathbb R^d\) from \(c_i\) to \(c_j\) with
\[
        \ell_V(\xi)\le\kappa_{ij}+\eta.
\]
Let \(\Pi_K:\mathbb R^d\to K\) be the Euclidean metric projection. Since \(K\) is closed and convex, \(\Pi_K\) is \(1\)-Lipschitz. Since every \(c_a\in K\), projection does not increase distance to any \(c_a\):
\[
        |\Pi_K y-c_a|\le |y-c_a|
        \qquad\text{for every }a.
\]
Thus
\[
        V(\Pi_K y)\le V(y).
\]
Replacing \(\xi\) by \(\Pi_K\circ\xi\) does not increase either Euclidean speed or \(V\)-length, and the projected curve remains inside \(K\).

Reparameterize the projected curve by arclength. This gives a \(1\)-Lipschitz curve \(\zeta_{ij}\) on a finite interval with endpoints \(c_i,c_j\) and
\[
        \int V(\zeta_{ij}(s))\,ds
        =
        \ell_V(\zeta_{ij})
        \le
        \kappa_{ij}+\eta.
\]
Extend it constantly by \(c_i\) to the left and \(c_j\) to the right. Since \(V(c_i)=V(c_j)=0\), the integral is unchanged.
\end{proof}

\subsection{Limsup for regular polyhedral partitions}

We first prove recovery for a regular polyhedral partition. A regular polyhedral partition means that each interface \(\partial B_i\cap\partial B_j\) is, up to \(\mathcal H^{n-1}\)-null sets, a finite union of relatively open flat \((n-1)\)-polytopes, and the non-manifold skeleton where three or more faces meet is contained in a finite union of \((n-2)\)-polytopes. The case \(n=1\) is understood with empty codimension-two skeleton.

\begin{lemma}[Hard-gradient recovery for regular partitions]
\label{lem:hard-gradient-recovery}
Let \(B=(B_1,\ldots,B_M)\) be a regular polyhedral Caccioppoli partition with finite weighted energy. Then there exist maps
\[
        G_\varepsilon:\mathbb R^n\to K:=\operatorname{conv}C
\]
such that
\[
        \operatorname{Lip}(G_\varepsilon)
        \le
        \frac{1-\sqrt\varepsilon}{\varepsilon},
        \tag{A.19}
\]
\[
        G_\varepsilon\to u_B
        \qquad\text{in }L^1(\sigma;\mathbb R^d),
        \tag{A.20}
\]
and
\[
        \limsup_{\varepsilon\downarrow0}
        \mathcal F_\varepsilon(G_\varepsilon)
        \le
        E_{p,\sigma}(B;C).
        \tag{A.21}
\]
Moreover,
\[
        \sigma\big(G_\varepsilon^{-1}(Q_i)\triangle B_i\big)=O(\varepsilon)
        \qquad\text{for each }i.
        \tag{A.22}
\]
\end{lemma}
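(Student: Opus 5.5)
The plan is the standard optical-profile construction from Modica--Mortola theory, adapted to a hard Lipschitz bound, with the one-dimensional profiles $\zeta_{ij}$ of Lemma~\ref{lem:one-dimensional-profile} placed across each interface at length scale $\varepsilon$. Fix $\eta>0$ and take $\zeta_{ij}$ with $\int V(\zeta_{ij})\le\kappa_{ij}+\eta$, $\operatorname{Lip}\zeta_{ij}\le1$, transition length $L_{ij}$; put $L:=\max L_{ij}$. For $x$ near a flat face $F\subset\partial B_i\cap\partial B_j$, let $d_F(x)$ be the signed distance to the affine hull of $F$, positive on the $B_j$ side. Define $G_\varepsilon(x)=\zeta_{ij}\big(d_F(x)/\lambda_\varepsilon\big)$ in a slab of half-width $\lambda_\varepsilon L$ around $F$. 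Set $G_\varepsilon=c_i$ on the part of $B_i$ away from all slabs. Here $\lambda_\varepsilon:=\varepsilon/(1-\sqrt\varepsilon)$, so that $\operatorname{Lip}(\zeta_{ij}(\cdot/\lambda_\varepsilon))\le\lambda_\varepsilon^{-1}=(1-\sqrt\varepsilon)/\varepsilon$; this gives (A.19) away from junctions. Since $\zeta_{ij}$ takes values in $K$, $G_\varepsilon$ maps into $K$.

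The energy estimate is a Fubini/coarea computation. Away from an $\varepsilon^{1/2}$-neighbourhood $N_\varepsilon$ of the codimension-two skeleton, the slabs are disjoint products $F'\times(-\lambda_\varepsilon L,\lambda_\varepsilon L)$, with $F'\subset F$ relatively compact. There
\[
\varepsilon^{-1}\int_{\text{slab}} V(G_\varepsilon)\rho\,dx=\frac{\lambda_\varepsilon}{\varepsilon}\int_{F'}\int_{\mathbb R}V(\zeta_{ij}(s))\,\rho(y+\lambda_\varepsilon s\nu)\,ds\,d\mathcal H^{n-1}(y).
\]
Because $\rho\in C^1$ and $\lambda_\varepsilon L\to0$, this tends to at most $(\kappa_{ij}+\eta)\int_F\rho\,d\mathcal H^{n-1}$. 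Interfaces are finite polytopes, but $\mathbb R^n$ is unbounded; if a face is unbounded, I will first truncate to a large ball, using finite weighted energy and the smallness of the $\sigma$-tail, and handle the far region by the same slab formula with $\rho$ integrable. Summing over faces and letting $\eta\downarrow0$ along a diagonal sequence gives (A.21), once the junction contribution is shown to vanish. The $L^1$ convergence (A.20) and the mass estimate (A.22) follow because $G_\varepsilon\ne u_B$ only on the slabs, where $G_\varepsilon$ is bounded. Those slabs have $\sigma$-measure $O(\lambda_\varepsilon)=O(\varepsilon)$ (total interface weight times width), plus $\sigma(N_\varepsilon)=O(\varepsilon)$ since the skeleton has codimension two and $N_\varepsilon$ has volume $O((\varepsilon^{1/2})^2)$ locally. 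Outside the slabs $G_\varepsilon=c_i$ on $B_i$, so $G_\varepsilon^{-1}(Q_i)\triangle B_i$ is contained in slabs and $N_\varepsilon$.

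The main obstacle is the junction region $N_\varepsilon$, where slabs of different faces overlap and the formula is not well defined. Inside $N_\varepsilon$ I will define $G_\varepsilon$ by a Lipschitz (McShane/Kirszbraun-type) extension into the convex set $K$. Kirszbraun keeps the Lipschitz constant, and composing with the $1$-Lipschitz projection $\Pi_K$ keeps values in $K$. The difficulty is that the slab definitions on $\partial N_\varepsilon$ must be mutually compatible with Lipschitz constant at most $(1-\sqrt\varepsilon)/\varepsilon$, and this is where the $\sqrt\varepsilon$ slack enters. On $\partial N_\varepsilon$, at distance $\sim\varepsilon^{1/2}\gg\lambda_\varepsilon L$ from the skeleton, the faces are separated by angles bounded below (finitely many polytopes). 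The slabs are therefore disjoint there, and $G_\varepsilon$ equals a constant well outside them, so the boundary data are $\lambda_\varepsilon^{-1}$-Lipschitz. If needed I will instead reserve part of the slack by using profile scale $\lambda_\varepsilon(1+O(\sqrt\varepsilon))$. The energy inside $N_\varepsilon$ is at most $\varepsilon^{-1}\max_K V\cdot\sigma(N_\varepsilon)=O(\varepsilon^{-1}\cdot\varepsilon)$, which is not yet small. So I will take the neighbourhood radius $r_\varepsilon$ with $\lambda_\varepsilon\ll r_\varepsilon$ and $r_\varepsilon^2/\varepsilon\to0$ (e.g. $r_\varepsilon=\varepsilon^{1/2}|\log\varepsilon|^{-1}$, still $\gg\varepsilon$). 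Then the junction cost is $o(1)$ and the excised slab pieces near the skeleton lose only $o(1)$ of energy.

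For $n=1$ there is no skeleton and the argument is purely one-dimensional.
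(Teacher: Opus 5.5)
Your proposal is correct and follows the paper's construction. Both use profiles $\zeta_{ij}$ at scale $\varepsilon/(1-\sqrt\varepsilon)$ in face slabs, excise a neighbourhood of the codimension-two skeleton, extend by Kirszbraun and project onto $K$, and bound the hole's cost by $\varepsilon^{-1}\max_K V$ times its $\sigma$-mass. The one real difference is the hole radius: the paper takes $R\varepsilon$ with a fixed large $R$, which gives cost $O(\varepsilon)$ but requires choosing $R$ so that distinct face tubes are separated by at least order $\varepsilon\operatorname{diam}K$, whereas your $r_\varepsilon=\varepsilon^{1/2}|\log\varepsilon|^{-1}$ gives cost $O(|\log\varepsilon|^{-2})$ but makes that separation automatic because $r_\varepsilon\gg\varepsilon$; also, the $\sqrt\varepsilon$ slack is not needed in this lemma, since the paper spends it on the mass-correcting diffeomorphism of Lemma~\ref{lem:exact-mass-fixing}.
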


\begin{proof}
Fix \(\eta>0\). For each face \(F\subset\partial B_i\cap\partial B_j\), choose a profile \(\zeta_{ij}\) from Lemma~\ref{lem:one-dimensional-profile} with
\[
        \int_{\mathbb R}V(\zeta_{ij}(s))\,ds
        \le
        \kappa_{ij}+\eta.
        \tag{A.23}
\]
Orient the unit normal \(\nu_F\) so that it points from \(B_i\) into \(B_j\), and let \(r_F\) be the signed distance to the affine hyperplane containing \(F\), positive on the \(B_j\) side. Set
\[
        s_\varepsilon:=1-\sqrt\varepsilon.
\]

Let \(S\) be the codimension-two skeleton of the polyhedral partition. Choose \(R>0\) large enough, depending only on the profiles and on \(\operatorname{diam}K\), so that different face tubes outside
\[
        H_\varepsilon:=N_{R\varepsilon}(S)
\]
are separated at distance large enough to satisfy the Lipschitz estimates below. Since \(S\) is contained in finitely many \((n-2)\)-polytopes,
\[
        |H_\varepsilon|=O(\varepsilon^2)
\]
on every bounded region, and the same estimate holds with weight \(\rho\) because \(\rho\) is continuous.

On the closed set outside \(H_\varepsilon\), define a preliminary map \(\widetilde G_\varepsilon\) as follows. In the trimmed tube around a face \(F\subset\partial B_i\cap\partial B_j\), write uniquely
\[
        x=y+r\nu_F,
        \qquad
        y\in F,
        \qquad
        \operatorname{dist}(y,\partial F)>R\varepsilon.
\]
Set
\[
        \widetilde G_\varepsilon(x)
        :=
        \zeta_{ij}\!\left(\frac{s_\varepsilon}{\varepsilon}r_F(x)\right).
        \tag{A.24}
\]
Away from the union of these transition tubes, set
\[
        \widetilde G_\varepsilon(x)=c_i
        \qquad\text{on the }B_i\text{ side.}
\]
The profile is constant outside a bounded interval, so these definitions agree on the overlaps between the constant regions and the ends of a face tube.

There is no tangential derivative in (A.24): the map depends only on the signed normal coordinate \(r_F\). Since \(|\nabla r_F|=1\) and \(\operatorname{Lip}(\zeta_{ij})\le1\),
\[
        \|D\widetilde G_\varepsilon\|_{\mathrm{op}}
        \le
        \frac{s_\varepsilon}{\varepsilon}
        \qquad\text{inside each face tube.}
        \tag{A.25}
\]
On constant regions the derivative is zero. For two points lying in different face neighborhoods, the deletion of \(H_\varepsilon\) gives a separation of order \(R\varepsilon\), while all values lie in \(K\). Taking \(R\) large enough gives the global estimate
\[
        |\widetilde G_\varepsilon(x)-\widetilde G_\varepsilon(y)|
        \le
        \frac{s_\varepsilon}{\varepsilon}|x-y|
\]
on the closed set where \(\widetilde G_\varepsilon\) is defined.

By Kirszbraun's theorem, \(\widetilde G_\varepsilon\) extends to a map
\[
        \widehat G_\varepsilon:\mathbb R^n\to\mathbb R^d
\]
with the same Lipschitz constant \(s_\varepsilon/\varepsilon\). Finally define
\[
        G_\varepsilon:=\Pi_K\circ\widehat G_\varepsilon,
\]
where \(\Pi_K\) is the Euclidean projection onto \(K\). Since \(\Pi_K\) is \(1\)-Lipschitz and leaves \(\widetilde G_\varepsilon\) unchanged, \(G_\varepsilon\) still satisfies
\[
        \operatorname{Lip}(G_\varepsilon)
        \le
        \frac{s_\varepsilon}{\varepsilon}
        =
        \frac{1-\sqrt\varepsilon}{\varepsilon}.
\]
Moreover, because \(V(\Pi_K y)\le V(y)\), projection does not increase the energy.

The map equals \(u_B\) outside an \(O(\varepsilon)\)-neighborhood of the interfaces, except inside \(H_\varepsilon\). Since the profiles are bounded in \(K\), the transition region has \(\sigma\)-measure \(O(\varepsilon)\), and the skeleton hole has \(\sigma\)-measure \(O(\varepsilon^2)\), we get
\[
        G_\varepsilon\to u_B
        \qquad\text{in }L^1(\sigma).
\]
The same support estimate gives
\[
        \sigma(G_\varepsilon^{-1}(Q_i)\triangle B_i)=O(\varepsilon).
\]

It remains to compute the energy. The constant regions have zero cost. The skeleton hole satisfies
\[
        \varepsilon^{-1}\int_{H_\varepsilon}V(G_\varepsilon)\,d\sigma
        \le
        \varepsilon^{-1}\|V\|_{L^\infty(K)}\,\sigma(H_\varepsilon)
        =
        O(\varepsilon)
        \to0.
        \tag{A.26}
\]
For a face \(F\subset\partial B_i\cap\partial B_j\), the face-tube contribution is
\[
\begin{aligned}
        I_{\varepsilon,F}
        &=
        \varepsilon^{-1}
        \int_{F_\varepsilon}
        \int_{\mathbb R}
        V\!\left(
        \zeta_{ij}\!\left(\frac{s_\varepsilon}{\varepsilon}r\right)
        \right)
        \rho(y+r\nu_F)\,dr\,d\mathcal H^{n-1}(y),
\end{aligned}
\]
where \(F_\varepsilon:=\{y\in F:\operatorname{dist}(y,\partial F)>R\varepsilon\}\). With
\[
        \tau=\frac{s_\varepsilon}{\varepsilon}r,
        \qquad
        dr=\frac{\varepsilon}{s_\varepsilon}\,d\tau,
\]
we obtain
\[
\begin{aligned}
        I_{\varepsilon,F}
        &=
        \frac1{s_\varepsilon}
        \int_{F_\varepsilon}
        \int_{\mathbb R}
        V(\zeta_{ij}(\tau))
        \rho\!\left(y+\frac{\varepsilon}{s_\varepsilon}\tau\nu_F\right)
        d\tau\,d\mathcal H^{n-1}(y).
\end{aligned}
        \tag{A.27}
\]
The profile is constant outside a compact \(\tau\)-interval on which \(V(\zeta_{ij})\) is bounded. Since \(s_\varepsilon\to1\), \(F_\varepsilon\uparrow F\), and \(\rho\) is continuous, dominated convergence gives
\[
        \lim_{\varepsilon\downarrow0} I_{\varepsilon,F}
        =
        \left(\int_{\mathbb R}V(\zeta_{ij}(\tau))\,d\tau\right)
        \int_F \rho\,d\mathcal H^{n-1}.
        \tag{A.28}
\]
Using (A.23), summing over faces, and then sending \(\eta\downarrow0\), we obtain
\[
        \limsup_{\varepsilon\downarrow0}\mathcal F_\varepsilon(G_\varepsilon)
        \le
        \sum_{i<j}\kappa_{ij}
        \int_{\partial^*B_i\cap\partial^*B_j}
        \rho\,d\mathcal H^{n-1}.
\]
This proves the lemma.
\end{proof}

The preceding construction is the point at which the hard-gradient theorem differs from the soft Baldo/Fonseca--Tartar theory. In the soft Modica--Mortola functional, large gradients in a vanishingly small junction region are allowed as long as their integral cost vanishes. Here the constraint
\[
        \operatorname{Lip}(T)\le\varepsilon^{-1}
\]
is pointwise. Therefore one cannot simply paste several one-dimensional profiles at a triple junction and appeal to an integral estimate. The deletion of the \(O(\varepsilon)\)-tube around the codimension-two skeleton, the \(O(\varepsilon^2)\) volume of that hole, Kirszbraun extension, and projection back to \(\operatorname{conv}C\) are the hard-gradient replacement for the soft junction argument.

\subsection{Exact phase-volume fixing}

\begin{lemma}[Exact mass correction by a near-identity source diffeomorphism]
\label{lem:exact-mass-fixing}
Let \(B\) be a regular polyhedral partition with
\[
        \sigma(B_i)=\pi_i>0.
\]
Let \(G_\varepsilon\) be the recovery map constructed in Lemma~\ref{lem:hard-gradient-recovery}, and set
\[
        A_i^\varepsilon:=G_\varepsilon^{-1}(Q_i).
\]
Then, for all sufficiently small \(\varepsilon\), there exists a \(C^1\)-diffeomorphism
\[
        \Phi_\varepsilon:\mathbb R^n\to\mathbb R^n
\]
such that
\[
        \|\Phi_\varepsilon-\operatorname{Id}\|_{C^1}=O(\varepsilon),
        \qquad
        \operatorname{Lip}(\Phi_\varepsilon^{-1})\le1+C\varepsilon,
        \tag{A.29}
\]
and
\[
        \sigma(\Phi_\varepsilon(A_i^\varepsilon))=\pi_i
        \qquad\text{for every }i.
        \tag{A.30}
\]
Consequently
\[
        T_\varepsilon:=G_\varepsilon\circ\Phi_\varepsilon^{-1}
\]
satisfies
\[
        \sigma(T_\varepsilon^{-1}(Q_i))=\pi_i
        \qquad\text{for every }i,
        \tag{A.31}
\]
\[
        \operatorname{Lip}(T_\varepsilon)\le\varepsilon^{-1},
        \tag{A.32}
\]
and
\[
        \mathcal F_\varepsilon(T_\varepsilon)
        =
        \mathcal F_\varepsilon(G_\varepsilon)+O(\varepsilon).
        \tag{A.33}
\]
\end{lemma}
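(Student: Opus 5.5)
We construct $\Phi_\varepsilon$ as a flow of a finite-dimensional family of smooth, compactly supported vector fields that move mass between adjacent phases, and fix the parameters with the implicit function theorem. Since $\sum_i\pi_i=1$ and the $A_i^\varepsilon$ partition $\mathbb R^n$, only $M-1$ mass constraints are independent. By (A.22), the defects satisfy
\[
e_i(\varepsilon):=\sigma(A_i^\varepsilon)-\pi_i=O(\varepsilon).
\]
The interface graph of $B$ (phases as vertices, edges where $P_\sigma(B_i,B_j)>0$) is connected, because $\mathbb R^n$ is connected and every $\pi_i>0$. Choose a spanning tree with $M-1$ edges. For each tree edge $(i,j)$, pick a face $F$ of $\partial B_i\cap\partial B_j$, a point $y_{ij}$ in the relative interior of $F$ away from the skeleton $S$, a smooth bump $\psi_{ij}$ supported in a small ball around $y_{ij}$, and set
\[
X_{ij}:=\psi_{ij}\,\nu_F.
\]
Let $\Phi_\lambda$ be the time-one flow of $\sum_{(i,j)}\lambda_{ij}X_{ij}$ for $\lambda\in\mathbb R^{M-1}$. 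This gives a $C^1$ diffeomorphism with $\|\Phi_\lambda-\mathrm{Id}\|_{C^1}\le C|\lambda|$ and $\operatorname{Lip}(\Phi_\lambda^{-1})\le 1+C|\lambda|$.

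Next we compute the Jacobian. Define $m_i(\lambda,\varepsilon):=\sigma(\Phi_\lambda(A_i^\varepsilon))$. At $\varepsilon=0$, meaning $A_i=B_i$, the first variation is
\[
\partial_{\lambda_{ij}}m_k=\int_{\partial B_k}\rho\,\psi_{ij}\,\nu_F\cdot n_k\,d\mathcal H^{n-1}.
\]
This equals $\pm\int_F\rho\psi_{ij}\neq0$ for $k\in\{i,j\}$ and is zero otherwise. Restricted to $M-1$ coordinates, this matrix is the signed incidence matrix of a spanning tree with nonzero weights, so it is invertible.

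The perturbed sets $A_i^\varepsilon$ agree with $B_i$ except in an $O(\varepsilon)$ tube around the interfaces and in the hole $H_\varepsilon$. Take the bump balls disjoint from $S$. Inside them, $G_\varepsilon$ is the planar profile (A.24), so $A_i^\varepsilon$ is a half-space shifted by $O(\varepsilon)$ in the normal direction. Hence $m(\lambda,\varepsilon)$ is $C^1$ in $\lambda$, uniformly in $\varepsilon$, and $D_\lambda m(\cdot,\varepsilon)$ differs from $D_\lambda m(\cdot,0)$ by $O(\varepsilon)$. A quantitative inverse function theorem, or Brouwer applied to a contraction on a ball of radius $C\varepsilon$, then gives $\lambda_\varepsilon=O(\varepsilon)$ with $m_i(\lambda_\varepsilon,\varepsilon)=\pi_i$ for all $i$. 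This proves (A.29) and (A.30). Property (A.31) follows because
\[
T_\varepsilon^{-1}(Q_i)=\Phi_\varepsilon(A_i^\varepsilon).
\]

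For (A.32), combine (A.19) with the bound on $\Phi_\varepsilon^{-1}$:
\[
\operatorname{Lip}(T_\varepsilon)\le\frac{1-\sqrt\varepsilon}{\varepsilon}\,(1+C\varepsilon)\le\varepsilon^{-1}
\]
once $C\varepsilon\le\sqrt\varepsilon$. The $\sqrt\varepsilon$ slack in (A.19) was built in exactly for this step. For (A.33), change variables $x=\Phi_\varepsilon(z)$:
\[
\varepsilon^{-1}\int V(G_\varepsilon(z))\,\rho(\Phi_\varepsilon z)\,|\det D\Phi_\varepsilon(z)|\,dz.
\]
The factor $\rho(\Phi_\varepsilon z)|\det D\Phi_\varepsilon|/\rho(z)$ equals $1+O(\varepsilon)$ uniformly on the compact support of the vector fields, where $\rho\in C^1$ and is positive; outside that support $\Phi_\varepsilon$ is the identity. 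Since $\mathcal F_\varepsilon(G_\varepsilon)$ is bounded by (A.21), the difference is $O(\varepsilon)\cdot O(1)$.

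The main obstacle is the uniform-in-$\varepsilon$ $C^1$ control of $\lambda\mapsto m(\lambda,\varepsilon)$, because the sets $A_i^\varepsilon$ are only known up to $O(\varepsilon)$ symmetric differences. Localizing the bumps away from $S$, where $A_i^\varepsilon$ is explicitly a translated half-space, is the device for handling this. The remaining mass discrepancy elsewhere is simply the constant vector $e(\varepsilon)=O(\varepsilon)$, which the contraction absorbs.
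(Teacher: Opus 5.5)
Your proposal is correct and follows the paper's proof essentially step for step: a spanning tree of the interface graph, disjoint normal bump fields away from the skeleton, their time-one flow, the tree-incidence Jacobian inverted uniformly in $\varepsilon$, and the same $\frac{1-\sqrt\varepsilon}{\varepsilon}(1+C\varepsilon)\le\varepsilon^{-1}$ and change-of-variables estimates. One minor point: inside the bump balls, $A_i^\varepsilon$ need not be a single translated half-space, because the profile $\zeta_{ij}$ can enter and leave Voronoi cells several times. You don't need that claim anyway, since the uniform control already follows from $\partial_{\lambda_e}m_k=\int_{\Phi_\lambda(A_k^\varepsilon)}\operatorname{div}(\rho X_e)\,dx$ together with $\sigma(A_k^\varepsilon\triangle B_k)=O(\varepsilon)$.
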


\begin{proof}
By (A.22),
\[
        \sigma(A_i^\varepsilon)=\pi_i+O(\varepsilon).
        \tag{A.34}
\]
Because \(B\) is a regular polyhedral partition with all \(\pi_i>0\), its positive-interface adjacency graph is connected after the standard polyhedral approximation step. Choose a spanning tree \(\mathcal T\) of this graph. For every oriented edge \(e=(i,j)\in\mathcal T\), choose a small ball \(U_e\) meeting a single flat face \(F_{ij}\subset\partial B_i\cap\partial B_j\), away from the skeleton, and choose these balls pairwise disjoint. Let \(\nu_{ij}\) be the normal pointing from \(B_i\) into \(B_j\). Choose a vector field
\[
        X_e\in C_c^\infty(U_e;\mathbb R^n)
\]
such that
\[
        \beta_e
        :=
        \int_{F_{ij}}\rho\, X_e\cdot\nu_{ij}\,d\mathcal H^{n-1}
        >
        0.
        \tag{A.35}
\]
For \(a=(a_e)_{e\in\mathcal T}\) small, let \(\Phi_a\) be the time-one flow of
\[
        X_a:=\sum_{e\in\mathcal T}a_eX_e.
\]
Then
\[
        \|\Phi_a-\operatorname{Id}\|_{C^1}\le C|a|,
        \qquad
        \operatorname{Lip}(\Phi_a^{-1})\le1+C|a|.
        \tag{A.36}
\]

Define the phase-mass map
\[
        m^\varepsilon(a)
        :=
        \big(\sigma(\Phi_a(A_1^\varepsilon)),\ldots,
             \sigma(\Phi_a(A_M^\varepsilon))\big).
\]
The derivative at \(a=0\) converges, as \(\varepsilon\downarrow0\), to the weighted tree-incidence map
\[
        L:\mathbb R^{M-1}\to\Big\{b\in\mathbb R^M:\sum_i b_i=0\Big\}
\]
given by
\[
        (Le)_i=
        \begin{cases}
        -\beta_e,& e=(i,j),\\
        +\beta_e,& e=(j,i),\\
        0,&\text{otherwise}
        \end{cases}
\]
on each oriented tree edge \(e\). This is the usual incidence matrix of a tree, with positive edge weights \(\beta_e\). It has rank \(M-1\), and hence is invertible onto the codimension-one mass hyperplane.

Since
\[
        m^\varepsilon(0)-\pi=O(\varepsilon)
        \quad\text{and}\quad
        Dm^\varepsilon(0)\to L,
\]
the inverse function theorem, uniformly for small \(\varepsilon\), gives \(a_\varepsilon=O(\varepsilon)\) such that
\[
        m^\varepsilon(a_\varepsilon)=\pi.
\]
Set
\[
        \Phi_\varepsilon:=\Phi_{a_\varepsilon}.
\]
This proves (A.29) and (A.30). Since
\[
        T_\varepsilon^{-1}(Q_i)
        =
        \Phi_\varepsilon(A_i^\varepsilon),
\]
(A.31) follows.

The Lipschitz estimate follows from Lemma~\ref{lem:hard-gradient-recovery} and (A.29):
\[
        \operatorname{Lip}(T_\varepsilon)
        \le
        \operatorname{Lip}(G_\varepsilon)\operatorname{Lip}(\Phi_\varepsilon^{-1})
        \le
        \frac{1-\sqrt\varepsilon}{\varepsilon}(1+C\varepsilon)
        \le
        \varepsilon^{-1}
\]
for all sufficiently small \(\varepsilon\).

Finally,
\[
\begin{aligned}
        \int V(T_\varepsilon)\,d\sigma
        &=
        \int V(G_\varepsilon(y))
        \rho(\Phi_\varepsilon(y))
        |\det D\Phi_\varepsilon(y)|\,dy.
\end{aligned}
\]
Since \(\Phi_\varepsilon=\operatorname{Id}+O(\varepsilon)\) in \(C^1\), \(\rho\in C^1\), and \(G_\varepsilon\) has unscaled cost \(O(\varepsilon)\),
\[
        \int V(T_\varepsilon)\,d\sigma
        =
        \int V(G_\varepsilon)\,d\sigma
        +
        O(\varepsilon^2).
\]
After multiplying by \(\varepsilon^{-1}\), this gives (A.33).
\end{proof}

\subsection{General partitions and the limsup inequality}

\begin{proof}[Proof of the limsup inequality]
Let \(B\) be a Caccioppoli partition with finite energy. By the polyhedral density theorem for partitions \citep{BraidesContiGarroni2017}, combined with a standard truncation argument and the continuity of the weight \(\rho\), there exist regular polyhedral partitions \(B^m\) such that
\[
        \sum_i\sigma(B_i^m\triangle B_i)\to0
\]
and
\[
        E_{p,\sigma}(B^m;C)\to E_{p,\sigma}(B;C).
\]
If the masses \(\sigma(B_i)\) are prescribed and positive, the approximation may be taken with
\[
        \sigma(B_i^m)=\sigma(B_i)
\]
for every \(i\). The volume correction is obtained by the same finite-dimensional flow argument used in Lemma~\ref{lem:exact-mass-fixing}: choose a spanning tree in the adjacency graph, choose disjoint vector fields crossing one face per tree edge, and apply the inverse function theorem to the weighted phase-volume map. The correction is \(C^1\)-small and changes the weighted perimeter by \(o(1)\).

For each \(m\), Lemma~\ref{lem:hard-gradient-recovery} gives a recovery sequence for \(B^m\). If exact masses are required, Lemma~\ref{lem:exact-mass-fixing} gives a corrected sequence with exact nearest-well phase masses. A diagonal choice \(m=m(\varepsilon)\to\infty\) yields
\[
        T_\varepsilon\to u_B
        \qquad\text{in }L^1(\sigma),
\]
and
\[
        \limsup_{\varepsilon\downarrow0}
        \mathcal F_\varepsilon(T_\varepsilon)
        \le
        E_{p,\sigma}(B;C).
\]
Together with Proposition~\ref{prop:gamma-liminf}, this proves the \(\Gamma\)-convergence.
\end{proof}

\subsection{The constrained energy minimum}

\begin{proof}[Proof of (A.1)]
For the lower bound, let \(T_\varepsilon\) be any sequence with
\[
        \operatorname{Lip}(T_\varepsilon)\le\varepsilon^{-1},
        \qquad
        \sigma(T_\varepsilon^{-1}(Q_i))=\pi_i,
\]
and bounded energy. By compactness,
\[
        T_\varepsilon\to u_B
        \qquad\text{in }L^1(\sigma)
\]
for a Caccioppoli partition \(B\). The exact phase constraints pass to the limit, so
\[
        \sigma(B_i)=\pi_i.
\]
The liminf inequality gives
\[
        \liminf_{\varepsilon\downarrow0}\mathcal F_\varepsilon(T_\varepsilon)
        \ge
        E_{p,\sigma}(B;C)
        \ge
        J_{p,\sigma}(\pi;C).
\]

For the upper bound, choose a Caccioppoli partition \(B\) with
\[
        \sigma(B_i)=\pi_i
\]
and
\[
        E_{p,\sigma}(B;C)
        \le
        J_{p,\sigma}(\pi;C)+\eta.
\]
The exact-mass recovery sequence constructed above satisfies
\[
        \sigma(T_\varepsilon^{-1}(Q_i))=\pi_i,
        \qquad
        \operatorname{Lip}(T_\varepsilon)\le\varepsilon^{-1},
\]
and
\[
        \limsup_{\varepsilon\downarrow0}
        \mathcal F_\varepsilon(T_\varepsilon)
        \le
        E_{p,\sigma}(B;C)
        \le
        J_{p,\sigma}(\pi;C)+\eta.
\]
Letting \(\eta\downarrow0\) proves (A.1).
\end{proof}

\subsection{The \(W_p\) corollary}

Let
\[
        \mu=\sum_{i=1}^M\pi_i\delta_{c_i}.
\]
We first record the exact identity used for the upper bound. If
\[
        \sigma(T^{-1}(Q_i))=\pi_i
        \qquad\text{for every }i,
\]
then
\[
        W_p^p(T_\#\sigma,\mu)
        =
        \int_{\mathbb R^n}\operatorname{dist}(T(x),C)^p\,d\sigma(x).
        \tag{A.37}
\]
Indeed, for any coupling between \(Y\sim T_\#\sigma\) and \(Z\in C\) with law \(\mu\),
\[
        |Y-Z|^p\ge \operatorname{dist}(Y,C)^p,
\]
so
\[
        W_p^p(T_\#\sigma,\mu)
        \ge
        \int\operatorname{dist}(T,C)^p\,d\sigma.
\]
Conversely, because the nearest-well phase masses match \(\pi_i\), the coupling
\[
        x\mapsto \big(T(x),c_{q(T(x))}\big)
\]
has second marginal \(\mu\) and cost exactly
\[
        \int |T(x)-c_{q(T(x))}|^p\,d\sigma(x)
        =
        \int\operatorname{dist}(T(x),C)^p\,d\sigma(x).
\]
This proves (A.37).

\begin{proof}[Proof of (A.2)]
Set \(\varepsilon=\Lambda^{-1}\).

The upper bound follows from the exact-mass recovery sequence. For such a sequence,
\[
\begin{aligned}
        \Lambda W_p^p(T_\varepsilon{}_\#\sigma,\mu)
        &=
        \varepsilon^{-1}
        \int\operatorname{dist}(T_\varepsilon,C)^p\,d\sigma                 \\
        &=
        \mathcal F_\varepsilon(T_\varepsilon)
        \to
        J_{p,\sigma}(\pi;C)
\end{aligned}
\]
after optimizing over partitions.

For the lower bound, let \(\Lambda_k\to\infty\), set \(\varepsilon_k=\Lambda_k^{-1}\), and choose \(T_k\) with
\[
        \operatorname{Lip}(T_k)\le\Lambda_k
\]
such that
\[
        \sup_k
        \Lambda_k W_p^p(T_k{}_\#\sigma,\mu)<\infty.
        \tag{A.38}
\]
Let \(Y_k=T_k(X)\) with \(X\sim\sigma\), and let \(Z_k\in C\) be coupled to \(Y_k\) optimally, with \(Z_k\sim\mu\). Since
\[
        |Y_k-Z_k|^p\ge\operatorname{dist}(Y_k,C)^p,
\]
(A.38) implies
\[
        \mathcal F_{\varepsilon_k}(T_k)
        =
        \varepsilon_k^{-1}
        \int\operatorname{dist}(T_k,C)^p\,d\sigma
        \le
        \Lambda_k W_p^p(T_k{}_\#\sigma,\mu)
        \le C.
        \tag{A.39}
\]
By compactness, after passing to a subsequence,
\[
        T_k\to u_B
        \qquad\text{in }L^1(\sigma)
\]
for a Caccioppoli partition \(B\).

It remains to identify the masses of \(B\). Let
\[
        \Delta_C:=\min_{i\ne j}|c_i-c_j|>0.
\]
If \(q(Y_k)\ne q(Z_k)\), then \(Y_k\) lies in a Voronoi cell different from the atom \(Z_k\). Therefore
\[
        |Y_k-Z_k|\ge \Delta_C/2,
\]
up to the fixed tie-breaking null ambiguity. Hence
\[
        \mathbb P(q(Y_k)\ne q(Z_k))
        \le
        \left(\frac{2}{\Delta_C}\right)^p
        \mathbb E|Y_k-Z_k|^p
        =
        O(\varepsilon_k).
        \tag{A.40}
\]
The law of \(q(Z_k)\) is \(\pi\). Thus the law of \(q(Y_k)\) converges to \(\pi\) in total variation. On the other hand, compactness gives
\[
        \sigma(q(T_k)=i)\to\sigma(B_i).
\]
Therefore
\[
        \sigma(B_i)=\pi_i
        \qquad\text{for every }i.
\]
Using (A.39) and the liminf inequality,
\[
\begin{aligned}
        \liminf_k
        \Lambda_kW_p^p(T_k{}_\#\sigma,\mu)
        &\ge
        \liminf_k \mathcal F_{\varepsilon_k}(T_k)                         \\
        &\ge
        E_{p,\sigma}(B;C)                                                   \\
        &\ge
        J_{p,\sigma}(\pi;C).
\end{aligned}
\]
This proves the lower bound and hence (A.2).
\end{proof}

\subsection{Scope and sharpness of the hypotheses}

\paragraph{Positive masses are necessary.}
The assumption \(\pi_i>0\) is not cosmetic. If zero-mass wells are allowed, exact nearest-well phase constraints can be incompatible with continuity. In one dimension, take
\[
        n=d=1,
        \qquad
        C=\{0,1,2\},
        \qquad
        \pi=(1/2,0,1/2).
\]
A continuous map \(T:\mathbb R\to\mathbb R\) that realizes positive mass near both \(0\) and \(2\) has connected image between those values and must cross the interior of the Voronoi region of the middle well \(1\). Under an exact phase constraint this forces positive phase mass for the middle well, contradicting \(\pi_2=0\). Thus the exact-volume theorem must assume \(\pi_i>0\).

\paragraph{Finite perimeter is a compactness conclusion, not a property of one Lipschitz map.}
A single \(\Lambda\)-Lipschitz map need not have finite-perimeter phase preimages. For example, if \(E\subset\mathbb R^n\) has infinite perimeter, the signed-distance function
\[
        s_E(x):=\operatorname{dist}(x,E^c)-\operatorname{dist}(x,E)
\]
is Lipschitz, and a clipped version of \(s_E\) has a phase preimage equal to \(E\) up to null sets. Therefore the correct statement is not that every Lipschitz phase preimage has finite perimeter. Rather, finite perimeter appears only for limits of bounded-energy sequences, as proved in Proposition~\ref{prop:gamma-compactness}.

\paragraph{Cores and non-Euclidean targets.}
For atomic Euclidean wells, the proof above is complete. For extended cores \(A_i\), the same calibration works only after replacing each core by its \(d_V\)-zero equivalence component: one needs
\[
        d_V(a,a')=0
        \qquad
        \text{for all }a,a'\in A_i
\]
inside each core component, and strictly positive separation between different components. Otherwise switching inside a core can carry hidden interface cost. For general metric targets there is a genuine additional gap: the recovery proof uses Kirszbraun extension and Euclidean projection onto \(\operatorname{conv}C\). Both are Euclidean-specific and are not automatic in arbitrary metric spaces.

\section{Proof of Theorem~\ref{thm:dim} (Gaussian Dimension Phase Diagram)}\label{app:thm2}

\paragraph{Part (i): universal lower bound.}
Gaussian isoperimetry per cell gives $P_{\gauss_n}(\partial^* B_i) \ge I_\gauss(\gauss_n(B_i)) = \varphi(\Phi^{-1}(\gauss_n(B_i)))$.
For $\gauss_n(B_i) = 1/M$, summing pairwise and using $\sum_{i<j} P_{\gauss_n}(B_i, B_j) = \frac{1}{2}\sum_i P_{\gauss_n}(\partial^* B_i) \ge \frac{M}{2}\varphi(\Phi^{-1}(1/M))$.
Mills ratio: $M\varphi(\Phi^{-1}(1/M)) \sim \sqrt{2\log M}$ as $M \to \infty$.

\paragraph{Part (ii): high-dimensional upper bound.}
Transitive cyclic Fourier code with $r = \lceil 16\log M \rceil$ random frequencies.
Hoeffding plus union bound gives $\Prob[\exists j \ne j'\!: \|c_j - c_{j'}\| < 1] \le M^{-1}$.
Cyclic isometry forces normal-fan cells of equal mass $1/M$.
Gaussian width $\E\max_i \inner{G}{c_i} \le \sqrt{2\log M}$ (subgaussian maxima), so Lemma~\ref{lem:normalfan} gives $\mathcal{P}_{n,M} \le \sqrt{2\log M}$ once $n \ge 35\log M$.

\paragraph{Part (iii): fixed dimension.}
Lower: relative isoperimetric inequality in a ball; $\gauss_n(B_i) \le C/M$ implies $P_{\gauss_n}(\partial^* B_i) \ge c_n \gauss_n(B_i)^{(n-1)/n} \ge c_n M^{1/n} \gauss_n(B_i)$.
Upper: product quantile slabs for $M = k^n$; balanced recursive quantile partition for general $M$; constants $C_n \le Cn$.

\begin{lemma}[Gaussian-width identity]\label{lem:normalfan}
Let $c_1, \ldots, c_M$ be the vertices of a polytope in $\R^n$, $f(x) = \max_i \inner{x}{c_i}$, and $(B_i)$ the normal-fan partition. Then $\sum_{i<j} \norm{c_i - c_j}\, P_{\gauss_n}(B_i, B_j) = \E \max_i \inner{G}{c_i}$, $G \sim \gauss_n$.
\end{lemma}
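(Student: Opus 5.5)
The plan is to compute the Gaussian mean width $\E f(G)$ of the support function $f(x)=\max_i\inner{x}{c_i}$ by Gaussian integration by parts, and then read off the interface sum through the divergence theorem on each normal-fan cell. Since $f$ is convex, piecewise linear and Lipschitz, $\nabla f = c_i$ a.e. on $B_i$. We have $f(0)=0$, and $f(x)$ is $1$-homogeneous, so $f(x)=\inner{x}{\nabla f(x)}$ a.e. Hence
\[
\E f(G)=\sum_i \E\big[\inner{G}{c_i}\mathbf 1_{B_i}(G)\big]=\sum_i \int_{B_i}\inner{x}{c_i}\varphi_n(x)\,dx,
\]
where $\varphi_n$ is the standard Gaussian density. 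The key identity is $x\varphi_n(x)=-\nabla\varphi_n(x)$. Therefore $\int_{B_i}\inner{x}{c_i}\varphi_n\,dx = -\int_{B_i}\mathrm{div}(c_i\varphi_n)\,dx$.

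Next we apply the divergence theorem on each cell. Each $B_i$ is a polyhedral cone, the normal cone at vertex $c_i$, so it is a finite-perimeter set. The Gaussian weight decays fast enough that there is no boundary contribution at infinity; to make this rigorous, truncate to balls $B_R$ and let $R\to\infty$. This gives
\[
-\int_{B_i}\mathrm{div}(c_i\varphi_n)\,dx=-\int_{\partial^* B_i}\inner{c_i}{\nu_i}\varphi_n\,d\mathcal H^{n-1},
\]
with $\nu_i$ the outer normal. The reduced boundary of $B_i$ splits, up to $\mathcal H^{n-1}$-null sets, into the facets $\partial^*B_i\cap\partial^*B_j$. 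The codimension-two skeleton, where three or more cells meet, carries no $(n-1)$-measure.

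On a facet shared by $B_i$ and $B_j$ we have $\inner{x}{c_i}=\inner{x}{c_j}$. That facet is a piece of the hyperplane orthogonal to $c_i-c_j$, and $B_i$ lies on the side where $\inner{x}{c_i-c_j}>0$. So the outer normal of $B_i$ there is $\nu_i=-(c_i-c_j)/\norm{c_i-c_j}$, and $\nu_j=-\nu_i$. Pairing the contributions of $i$ and $j$ on that facet gives
\[
-\inner{c_i}{\nu_i}-\inner{c_j}{\nu_j}=\inner{c_i-c_j}{\tfrac{c_i-c_j}{\norm{c_i-c_j}}}=\norm{c_i-c_j}.
\]
Summing over unordered pairs yields $\sum_{i<j}\norm{c_i-c_j}P_{\gauss_n}(B_i,B_j)$, which is the claimed identity. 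Non-adjacent pairs contribute zero perimeter, so they are harmless.

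The main obstacle is the justification of these steps rather than any computation. First, the vertex hypothesis must guarantee that every $B_i$ is a full-dimensional cone, so that no $c_i$ is interior and no cell is degenerate. Second, the boundary term at infinity must vanish. I would handle this by applying the divergence theorem on $B_i\cap B_R$: the spherical cap contributes at most $\norm{c_i}\,\mathcal H^{n-1}(\partial B_R)\sup_{\partial B_R}\varphi_n\to0$, and dominated convergence handles the volume integral. I would also note that the identity is translation-consistent: shifting all $c_i$ by a vector $v$ adds $\E\inner{G}{v}=0$ to the right-hand side and leaves the pairwise differences unchanged, so no centering assumption is needed.
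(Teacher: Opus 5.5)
Your proposal is correct and follows essentially the same route as the paper. Both use Gaussian integration by parts via $x\varphi_n=-\nabla\varphi_n$ and Euler's identity, an interface contribution of $\norm{c_i-c_j}$ read off from the outward normals $(c_j-c_i)/\norm{c_j-c_i}$, negligible codimension-two junctions, and a cutoff at infinity. The paper packages your cell-by-cell divergence computation as the interface density of the distributional Laplacian $\Delta f$, which is the same calculation.
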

\paragraph{Proof.}
$f(x) = \max_i \inner{x}{c_i}$ is convex, piecewise linear.
The distributional Laplacian's interface density on $\partial^* B_i \cap \partial^* B_j$ is $\norm{c_i - c_j}$ (the outward normal between cells is $(c_j - c_i)/\norm{c_j - c_i}$).
Triple junctions have dimension $\le n - 2$.
Gaussian integration by parts with cutoff $\chi_R$ and Euler's identity $x^\top \nabla f = f$ give $\int \varphi_n\,d(\Delta f) = \E\max_i \inner{G}{c_i}$.

\section{Proof of Theorem~\ref{thm:noncommit} (Non-Commitment): active-facet tube law}
\label{app:noncommit}

We give the complete proof in the rescaled observation variable
\[
        Z_s := U_t/t = X+s\epsilon,\qquad
        s=s(t):=s_0(1-t)/t,\qquad \epsilon\sim N(0,I_d).
\]
Conditioning on \(U_t\) and conditioning on \(Z_s\) are equivalent, and
\[
        m_t(U_t)=m_s(Z_s),\qquad
        m_s(z):=\E[X\mid Z_s=z].
\]
All statements below are conditional on the readout matrix \(W\). If \(W\) is random,
the final identities may be averaged over \(W\).

Let
\[
        \Omega_i := \{z:\ w_i^\top z\ge w_k^\top z\ \text{for all } k\}
\]
be the closed argmax cell of token \(i\), with an arbitrary fixed tie-breaking rule
on the cell boundaries. Ties have zero \(X\)-probability under the assumptions below.
For \(i\ne j\), put
\[
        n_{ij}:=\frac{w_i-w_j}{\|w_i-w_j\|},
        \qquad
        \Sigma_{ij}:=\{z:\ w_i^\top z=w_j^\top z\}.
\]
We orient \(n_{ij}\) so that \(n_{ij}\) points from \(\Omega_j\) into \(\Omega_i\). The
\emph{active co-maximal facet} between \(i\) and \(j\) is
\[
        F_{ij}:=
        \Big\{z\in\Sigma_{ij}:\
        w_i^\top z=w_j^\top z=\max_k w_k^\top z
        \Big\}.
\]
Equivalently, \(F_{ij}=\partial\Omega_i\cap\partial\Omega_j\) up to lower-dimensional
tie sets. Non-active pairwise hyperplanes \(\Sigma_{ij}\) do not contribute to the
leading tube mass. Define the active boundary density
\[
        A_W
        :=
        \sum_{i<j}
        \int_{\Sigma_{ij}}
        \mathbf 1\!\left\{w_i^\top z=w_j^\top z=\max_k w_k^\top z\right\}
        \rho_X(z)\,d\mathcal H^{d-1}(z)
        =
        \sum_{i<j}\int_{F_{ij}}\rho_X\,d\mathcal H^{d-1}.
        \tag{C.1}
\]
The second equality uses the regularity assumption below.

\paragraph{Regularity assumption.}
We assume throughout this appendix that:
\begin{enumerate}
\item[(R1)] \(Y=D_W(X)\) almost surely, unless explicitly stated otherwise in
      the ``Imperfect clean decoding'' paragraph below.
\item[(R2)] \(\rho_X\) is \(C^2\) in a neighborhood of the active boundary
      \(\mathcal S:=\bigcup_{i<j}F_{ij}\), and the usual first-order tube formula
      holds with density \(\rho_X\) along each active facet.
\item[(R3)] The co-maximal skeleton
      \[
      \mathcal T:=\{z:\ \geq 3 \text{ logits } w_k^\top z \text{ tie for the maximum}\}
      \]
      has weighted tube mass \(\P[\operatorname{dist}(X,\mathcal T)\le r]=O(r^2)\) as \(r\downarrow0\).
      In particular, \(\mathcal H^{d-1}(\mathcal T)=0\) on the active boundary.
\item[(R4)] \(A_W<\infty\), and the same finiteness holds for the first two normal
      derivatives of \(\rho_X\) in a fixed small tube around \(\mathcal S\). This is
      only to justify exchanging Taylor expansion and integration on unbounded facets.
\end{enumerate}
These are the standard smooth-density, regular-polyhedral-boundary hypotheses under
which Gaussian small-noise classification has a first-order surface-area expansion.
They also rule out the degenerate case in which a pairwise bisector lies inside
a third token's cell: such a bisector is not co-maximal, hence is not part of any
\(F_{ij}\), and contributes no \(O(s)\) term.

\paragraph{Equality, Bayes bound, and cross-entropy bound.}
Let
\[
        \widehat Y_s:=D_W(m_s(Z_s)),
        \qquad
        q_s(y\mid z):=\P(Y=y\mid Z_s=z).
\]
Conditioning on \(Z_s=z\), the value \(m_s(z)\) and hence
\(d_s(z):=D_W(m_s(z))\) are deterministic, whereas \(Y\) is distributed according
to \(q_s(\cdot\mid z)\). Therefore
\[
        \P(\widehat Y_s\ne Y\mid Z_s=z)
        =
        1-q_s(d_s(z)\mid z),
\]
and hence
\[
        \P(\widehat Y_s\ne Y)
        =
        \E\!\left[1-q_s(D_W(m_s(Z_s))\mid Z_s)\right].
        \tag{C.2}
\]
Since \(q_s(d_s(z)\mid z)\le \max_y q_s(y\mid z)\),
\[
        \P(\widehat Y_s\ne Y)
        \ge
        B_s
        :=
        \E\!\left[1-\max_y q_s(y\mid Z_s)\right].
        \tag{C.3}
\]
Here \(B_s\) is the Bayes non-commitment at noise scale \(s\).

Let
\[
        p_W(y\mid z)
        :=
        \frac{\exp(w_y^\top z)}
        {\sum_k \exp(w_k^\top z)}
\]
be the softmax associated with the readout weights. For \(d=d_s(z)=D_W(m_s(z))\)
and any \(y\ne d\),
\[
\begin{aligned}
        -\log p_W(y\mid m_s(z))
        &=
        \log\sum_{k}\exp\!\big((w_k-w_y)^\top m_s(z)\big)   \\
        &\ge
        \log\!\left(1+\exp\!\big((w_d-w_y)^\top m_s(z)\big)\right)
        \ge \log 2,
\end{aligned}
\]
because \((w_d-w_y)^\top m_s(z)\ge0\). Averaging under
\(Y\mid Z_s=z\) gives
\[
\begin{aligned}
        \E[-\log p_W(Y\mid m_s(Z_s))]
        &\ge
        (\log 2)\,
        \E\!\left[1-q_s(d_s(Z_s)\mid Z_s)\right]       \\
        &\ge
        (\log 2)\,
        \E\!\left[1-\max_y q_s(y\mid Z_s)\right].
\end{aligned}
        \tag{C.4}
\]

\paragraph{Bayes tube law on active facets.}
For each token \(i\), define the class-conditional blurred density
\[
        a_{i,s}(z)
        :=
        \int_{\Omega_i}\rho_X(x)\,\varphi_s(z-x)\,dx,
        \qquad
        \varphi_s(v):=(2\pi s^2)^{-d/2}\exp(-\|v\|^2/2s^2).
\]
Then
\[
        q_s(i\mid z)=\frac{a_{i,s}(z)}{\sum_k a_{k,s}(z)}
\]
and the Bayes non-commitment can be written exactly as
\[
        B_s
        =
        \int_{\mathbb R^d}
        \left(\sum_k a_{k,s}(z)-\max_k a_{k,s}(z)\right)\,dz.
        \tag{C.5}
\]

We now compute the first-order contribution of one active facet. Fix \(i<j\) and
work on a compact subset of the relative interior of \(F_{ij}\), away from the
triple skeleton. In the normal coordinates
\[
        z=x+s a\,n_{ij},\qquad x\in F_{ij},\quad a\in\mathbb R,
\]
the cells are locally the two half-spaces
\[
        \Omega_i=\{r\ge0\},\qquad \Omega_j=\{r\le0\}
\]
in the signed normal coordinate \(r=\langle x'-x,n_{ij}\rangle\). All other cells
are separated by a positive logit gap in this local chart, and their blurred
contributions are exponentially small in \(s^{-2}\). Taylor expanding \(\rho_X\)
in the normal and tangential variables gives, uniformly for bounded \(a\),
\[
        a_{i,s}(x+s a n_{ij})
        =
        \rho_X(x)\Phi(a)+O(s),
        \qquad
        a_{j,s}(x+s a n_{ij})
        =
        \rho_X(x)\Phi(-a)+O(s),
        \tag{C.6}
\]
where \(\Phi\) is the standard normal cdf. Thus, on the \(F_{ij}\)-tube,
\[
        \sum_k a_{k,s}(z)-\max_k a_{k,s}(z)
        =
        \rho_X(x)\min\{\Phi(a),\Phi(-a)\}+O(s),
        \tag{C.7}
\]
up to an exponentially small error from non-competing classes.

Using \(dz=s\,da\,d\mathcal H^{d-1}(x)\) in these flat normal coordinates, the
contribution of \(F_{ij}\) is
\[
\begin{aligned}
        I_{ij,s}
        &=
        s
        \int_{F_{ij}}\rho_X(x)\,d\mathcal H^{d-1}(x)
        \int_{\mathbb R}\min\{\Phi(a),\Phi(-a)\}\,da
        +o(s)                                                        \\
        &=
        s
        \int_{F_{ij}}\rho_X(x)\,d\mathcal H^{d-1}(x)
        \left(2\int_0^\infty \Phi(-a)\,da\right)
        +o(s)                                                        \\
        &=
        \sqrt{\frac{2}{\pi}}\,s
        \int_{F_{ij}}\rho_X(x)\,d\mathcal H^{d-1}(x)
        +o(s).
\end{aligned}
        \tag{C.8}
\]
The identity
\[
        2\int_0^\infty \Phi(-a)\,da
        =
        2\E[G_+]
        =
        \sqrt{\frac{2}{\pi}},
        \qquad G\sim N(0,1),
\]
is the one-dimensional Laplace constant. This proves that the per-facet
\(\sqrt{2/\pi}\) constant is unchanged after restricting to active co-maximal
facets.

It remains to justify that no other region contributes at order \(s\). Away from
an \(O(s\sqrt{\log(1/s)})\)-tube of the active boundary, the posterior mass of every
non-argmax token is exponentially small. An \(O(s\sqrt{\log(1/s)})\)-tube of the
triple skeleton has probability \(O(s^2\log(1/s))=o(s)\) by (R3). Finally, a
pairwise hyperplane \(\Sigma_{ij}\) that is not co-maximal is either inactive in a
neighborhood of the decision boundary or meets it only in the triple skeleton; it
therefore contributes \(o(s)\). Summing (C.8) over active facets yields
\[
        B_s
        =
        \sqrt{\frac{2}{\pi}}\,s
        \sum_{i<j}\int_{F_{ij}}\rho_X\,d\mathcal H^{d-1}
        +o(s)
        =
        \sqrt{\frac{2}{\pi}}\,s\,A_W+o(s).
        \tag{C.9}
\]

\paragraph{Posterior-mean readout versus MAP readout.}
Let
\[
        d_s^{\mathrm{MAP}}(z):=\arg\max_y q_s(y\mid z),
        \qquad
        d_s^{\mathrm{mean}}(z):=D_W(m_s(z)).
\]
The exact discrepancy identity is
\[
\begin{aligned}
        \P(d_s^{\mathrm{mean}}(Z_s)\ne Y)-B_s
        &=
        \E\!\left[
        q_s(d_s^{\mathrm{MAP}}(Z_s)\mid Z_s)
        -
        q_s(d_s^{\mathrm{mean}}(Z_s)\mid Z_s)
        \right]                                                     \\
        &\ge0.
\end{aligned}
        \tag{C.10}
\]
Consequently,
\[
        B_s
        \le
        \P(d_s^{\mathrm{mean}}(Z_s)\ne Y)
        \le
        B_s+
        \P\!\left[d_s^{\mathrm{mean}}(Z_s)\ne d_s^{\mathrm{MAP}}(Z_s)\right].
        \tag{C.11}
\]
Thus the posterior-mean flip rate and the Bayes non-commitment have the same
leading constant if and only if the expectation in (C.10) is \(o(s)\). A sufficient
and easy-to-check condition is
\[
        \P\!\left[d_s^{\mathrm{mean}}(Z_s)\ne d_s^{\mathrm{MAP}}(Z_s)\right]=o(s).
        \tag{C.12}
\]

We now verify (C.12) under the smooth clean-decoding assumptions above. The only
possible \(O(s)\)-mass disagreement region lies in active facet tubes. Fix a regular
point \(x\in F_{ij}\setminus\mathcal T\), and write
\[
        z=x+s a n_{ij},\qquad
        \kappa_{ij}(x):=\partial_{n_{ij}}\log\rho_X(x).
\]
First, by the same half-space calculation as above,
\[
        a_{i,s}(z)-a_{j,s}(z)
        =
        \rho_X(x)\big(2\Phi(a)-1\big)
        +
        2s\,\partial_{n_{ij}}\rho_X(x)\,\varphi(a)
        +
        O(s^2),
        \tag{C.13}
\]
where \(\varphi\) is the standard normal density. Since the common denominator
\(\sum_k a_{k,s}(z)\) is positive, \(d_s^{\mathrm{MAP}}(z)\) switches between \(i\)
and \(j\) where the right-hand side of (C.13) is zero. Near \(a=0\),
\[
        2\Phi(a)-1=2\varphi(0)a+O(a^3),
        \qquad
        \varphi(a)=\varphi(0)+O(a^2),
\]
so the MAP switching surface has signed normal coordinate
\[
        a^{\mathrm{MAP}}_{ij}(x,s)
        =
        -s\,\kappa_{ij}(x)+O(s^2).
        \tag{C.14}
\]

Second, the posterior mean satisfies Tweedie's formula
\[
        m_s(z)
        =
        z+s^2\nabla\log p_s(z),
        \qquad
        p_s:=\rho_X*\varphi_s.
        \tag{C.15}
\]
Since \(p_s=\rho_X+O(s^2)\) in \(C^1\) near the active boundary,
\[
\begin{aligned}
        \delta^*_{ij}(m_s(x+s a n_{ij}))
        &=
        n_{ij}^\top m_s(x+s a n_{ij}) - n_{ij}^\top x               \\
        &=
        s a+s^2\partial_{n_{ij}}\log\rho_X(x)+O(s^3)                \\
        &=
        s\big(a+s\kappa_{ij}(x)+O(s^2)\big).
\end{aligned}
        \tag{C.16}
\]
Therefore the posterior-mean readout switches between \(i\) and \(j\) at
\[
        a^{\mathrm{mean}}_{ij}(x,s)
        =
        -s\,\kappa_{ij}(x)+O(s^2).
        \tag{C.17}
\]
Equations (C.14) and (C.17) show that the MAP and posterior-mean switching
surfaces differ by \(O(s^2)\) in the \(a\)-coordinate, hence by \(O(s^3)\) in the
original signed normal coordinate. Their disagreement tube around \(F_{ij}\) has
weighted mass \(o(s)\). Summing over finitely many active facets and adding the
\(o(s)\) triple-skeleton contribution proves (C.12). Combining with (C.9) and
(C.11) gives the active-facet form of the tube law:
\[
        \boxed{
        \P(\widehat Y_s\ne Y)
        =
        \sqrt{\frac{2}{\pi}}\,s
        \sum_{i<j}\int_{F_{ij}}\rho_X\,d\mathcal H^{d-1}
        +o(s).
        }
        \tag{C.18}
\]
Returning to \(s=s(t)=s_0(1-t)/t\),
\[
        \P(\widehat Y_t\ne Y)
        =
        \sqrt{\frac{2}{\pi}}\,s(t)
        \sum_{i<j}\int_{F_{ij}}\rho_X\,d\mathcal H^{d-1}
        +o(s(t)).
        \tag{C.19}
\]

\paragraph{Geometric meaning of the MAP--mean condition.}
The proof above uses smoothness of \(\rho_X\) to show that MAP and posterior-mean
readout have the same switching surface to first order. In a more general labelled
mixture model, this is a genuine hypothesis, not a free consequence.

Near a regular \(F_{ij}\), write
\[
        \mu_i^s(z):=\E[X\mid Z_s=z,Y=i],
        \qquad
        \mu_j^s(z):=\E[X\mid Z_s=z,Y=j].
\]
Ignoring exponentially small non-competing classes,
\[
        m_s(z)
        =
        q_s(i\mid z)\mu_i^s(z)+q_s(j\mid z)\mu_j^s(z)+o(s).
\]
Since the readout boundary between \(i\) and \(j\) is linear,
\[
        \delta^*_{ij}(m_s(z))
        =
        q_s(i\mid z)\delta^*_{ij}(\mu_i^s(z))
        +
        q_s(j\mid z)\delta^*_{ij}(\mu_j^s(z))
        +o(s).
        \tag{C.20}
\]
The posterior-mean readout chooses \(i\) rather than \(j\) exactly when the left-hand
side of (C.20) is nonnegative. The MAP rule chooses \(i\) when
\(q_s(i\mid z)\ge q_s(j\mid z)\). Hence the sharp general condition is:
\[
        \E\!\left[
        q_s(d_s^{\mathrm{MAP}}(Z_s)\mid Z_s)
        -
        q_s(d_s^{\mathrm{mean}}(Z_s)\mid Z_s)
        \right]
        =
        o(s).
        \tag{C.21}
\]

A simple sufficient geometric version is that, for \(\mathcal H^{d-1}\)-a.e.
\(x\in F_{ij}\), the two conditional cluster means lie on opposite sides of
\(\Sigma_{ij}\), and the zero of
\(\delta^*_{ij}(m_s(x+s a n_{ij}))\) differs by \(o(1)\) in the \(a\)-coordinate
from the zero of \(q_s(i\mid x+s a n_{ij})-q_s(j\mid x+s a n_{ij})\). Equivalently,
the MAP and posterior-mean switching surfaces are separated by an \(o(s)\) physical
normal distance. Under the clean-label smooth-density assumptions (R1)--(R4), this
condition is automatic by (C.14)--(C.17). Without it, the sharp statement is the
inequality (C.11); an \(O(s)\)-measure MAP--mean disagreement set can change the
leading constant.

\paragraph{Row-space flip criterion.}
For later reference, the posterior-mean decoded token differs from the clean decoded
token \(Y\) only if some competitor \(j\ne Y\) satisfies
\[
        w_j^\top m_s(Z_s)\ge w_Y^\top m_s(Z_s).
\]
Equivalently,
\[
        \frac{(w_Y-w_j)^\top(X-m_s(Z_s))}
        {\|w_Y-w_j\|}
        \ge
        \delta^*_{Yj}(X).
        \tag{C.22}
\]
Thus only the projection of \(X-m_s(Z_s)\) onto the readout row-space directions
\(w_Y-w_j\) can cause a token flip. Components orthogonal to all readout differences
are invisible to the argmax readout.

\paragraph{Imperfect clean decoding.}\label{par:clean-accuracy-perturbation}
The preceding theorem should be interpreted as a statement about \emph{readout
self-consistency}. If the dataset token \(Y^{\rm raw}\) is not always equal to the
clean decoded token, define
\[
        Y^\circ:=D_W(X),
        \qquad
        \eta_0:=\P(Y^{\rm raw}\ne Y^\circ).
\]
Then Theorem~3 applies exactly to \(Y^\circ\). For the raw token labels,
\[
\begin{aligned}
        \left|
        \P(\widehat Y_s\ne Y^{\rm raw})
        -
        \P(\widehat Y_s\ne Y^\circ)
        \right|
        &\le
        \P(Y^{\rm raw}\ne Y^\circ)
        =
        \eta_0.
\end{aligned}
        \tag{C.23}
\]
Indeed, for any deterministic decoded token \(d\),
\[
        \left|
        \mathbf 1\{d\ne Y^{\rm raw}\}
        -
        \mathbf 1\{d\ne Y^\circ\}
        \right|
        \le
        \mathbf 1\{Y^{\rm raw}\ne Y^\circ\}.
\]
Therefore
\[
        \P(\widehat Y_s\ne Y^{\rm raw})
        =
        \sqrt{\frac{2}{\pi}}\,s\,A_W+o(s)+O(\eta_0).
        \tag{C.24}
\]
The \(O(\eta_0)\) term is an additive clean-autoencoder error offset, not part of
the readout tube constant. However, if one insists on plotting raw-token error
without subtracting or controlling the clean error, \(\eta_0\) gives a non-vanishing
intercept as \(s\downarrow0\). In fact,
\[
        \lim_{s\downarrow0}\P(\widehat Y_s\ne Y^{\rm raw})=\eta_0,
\]
because \(m_s(Z_s)\to X\) in probability and \(D_W(m_s(Z_s))\to D_W(X)=Y^\circ\)
away from measure-zero ties.

Thus the cleanest theorem and the cleanest experiment should use
\[
        Y:=Y^\circ=D_W(X)
\]
by definition. This gains an exact readout-calibrated statement. What is lost is
direct accounting of the autoencoder's semantic clean error; that error should be
reported separately as \(\eta_0\). For ELF-style clean accuracy \(1-\eta_0\), the
raw-token curve should be interpreted as ``clean-error offset plus tube error,''
whereas the \(Y^\circ\)-curve is the actual test of Theorem~3.

If one wants a slope statement for raw labels, an additional hypothesis is required:
the misdecoded set \(\{Y^{\rm raw}\ne Y^\circ\}\) must not concentrate in an
\(O(s)\)-tube of the active boundary. Under such a no-boundary-concentration
condition, the raw-label slope differs from the clean-decoded slope by a term
controlled by the active-facet boundary density of the misdecoded subset. Without
that extra condition, (C.23) is the sharp uniform perturbation bound.

\paragraph{Empirical predictor from the normalized-margin distribution.}
Let the clean-decoded margin be
\[
        \delta^*(X,Y^\circ)
        =
        \min_{j\ne Y^\circ}
        \frac{(w_{Y^\circ}-w_j)^\top X}
        {\|w_{Y^\circ}-w_j\|}.
\]
Under (R1)--(R4), \(\delta^*(X,Y^\circ)\ge0\) almost surely, and near zero it is the
Euclidean distance to the nearest active readout facet. Non-active pairwise
hyperplanes and triple intersections contribute only \(o(\varepsilon)\) mass.
Hence
\[
\begin{aligned}
        \P\!\left[0\le \delta^*(X,Y^\circ)\le \varepsilon\right]
        &=
        2\varepsilon
        \sum_{i<j}\int_{F_{ij}}\rho_X\,d\mathcal H^{d-1}
        +o(\varepsilon)                                             \\
        &=
        2\varepsilon A_W+o(\varepsilon).
\end{aligned}
        \tag{C.25}
\]
The factor \(2\) is important: the margin distribution sees both sides of each
active facet, one from \(\Omega_i\) and one from \(\Omega_j\). Therefore the
right-density of the margin at zero is
\[
        f_{\delta^*}(0+)
        :=
        \lim_{\varepsilon\downarrow0}
        \frac{\P[0\le\delta^*(X,Y^\circ)\le\varepsilon]}{\varepsilon}
        =
        2A_W.
        \tag{C.26}
\]
Combining (C.18) and (C.26), the predicted posterior-mean flip rate at roll-in time
\(t_i\) is
\[
        \boxed{
        \operatorname{predicted\_flip}(t_i)
        =
        \frac{s(t_i)}{\sqrt{2\pi}}\,
        f_{\delta^*}(0+)
        +o(s(t_i)).
        }
        \tag{C.27}
\]
Equivalently,
\[
        \operatorname{predicted\_flip}(t_i)
        =
        \sqrt{\frac{2}{\pi}}\,
        s(t_i)\,A_W
        +o(s(t_i)).
        \tag{C.28}
\]

Given clean latents \(X_1,\ldots,X_N\), set
\[
        Y_n^\circ:=D_W(X_n),
        \qquad
        \delta_n:=
        \delta^*(X_n,Y_n^\circ).
\]
For a small bandwidth \(\varepsilon\) in the linear part of the empirical margin
cdf, estimate
\[
        \widehat f_{\delta^*}(0+;\varepsilon)
        :=
        \frac{1}{N\varepsilon}
        \sum_{n=1}^N
        \mathbf 1\{0\le\delta_n\le\varepsilon\},
        \qquad
        \widehat A_W(\varepsilon)
        :=
        \frac12\,\widehat f_{\delta^*}(0+;\varepsilon).
        \tag{C.29}
\]
Then the plotted prediction should be
\[
        \widehat{\operatorname{flip}}(t_i)
        =
        \frac{s(t_i)}{\sqrt{2\pi}}\,
        \widehat f_{\delta^*}(0+;\varepsilon)
        =
        \sqrt{\frac{2}{\pi}}\,
        s(t_i)\,\widehat A_W(\varepsilon).
        \tag{C.30}
\]
The bandwidth \(\varepsilon\) should be chosen by a stability plot of
\(\P[0\le\delta^*\le\varepsilon]/\varepsilon\) over a small-margin range; the theorem
predicts a plateau as \(\varepsilon\downarrow0\).

Equivalently, using an independent \(G\sim N(0,1)\),
\[
        \frac12\,\P\!\left[\delta^*(X,Y^\circ)\le s|G|\right]
        =
        \frac12\,f_{\delta^*}(0+)\,s\,\E|G|+o(s)
        =
        \frac{s}{\sqrt{2\pi}}\,f_{\delta^*}(0+)+o(s).
        \tag{C.31}
\]
The factor \(1/2\) in (C.31) corrects for the fact that the empirical margin cdf is
two-sided across each active facet, whereas the active-facet surface density
\(A_W\) counts each facet once.

\paragraph{Anisotropic residual correction (the realized constant).}
The predictor (C.27)--(C.31) assumes the terminal residual is the isotropic forward
noise, \(X-m_s^\star(Z_s)=s\,\varepsilon\) with \(\varepsilon\sim N(0,I)\), where
\(m_s^\star=\E[X\mid Z_s]\) is the exact posterior mean. A learned \(K\)-step ODE produces a
terminal map \(\widetilde m_{K,s}\) and a \emph{structured} residual
\(R_s:=X-\widetilde m_{K,s}(Z_s)=s\,\Xi_s\) whose normalized law \(\Xi_s\) need not be
isotropic. Repeating the one-dimensional Laplace expansion of the active-facet tube with
\(\Xi_s\) in place of \(\varepsilon\) replaces the universal normal moment
\(\E|G|=\sqrt{2/\pi}\) by the realized active-normal moment. Flipping is a \emph{one-sided}
boundary crossing, so with unit normal \(\widehat n_{ij}=(w_i-w_j)/\|w_i-w_j\|\) oriented from
cell \(j\) into cell \(i\) the leading flip mass uses the one-sided positive-part moments
\[
        \mu_{ij}(x)=\mu_{ij}^{+}(x)+\mu_{ij}^{-}(x),\qquad
        \mu_{ij}^{\pm}(x)=\lim_{\eta\downarrow0}\E\!\left[\bigl(\pm\langle\Xi_s,\widehat n_{ij}\rangle\bigr)_+\,\middle|\,X=x\pm\eta\widehat n_{ij}\right],
\]
\[
        \P(\widehat Y_s\ne Y^\circ)
        =
        s\sum_{i<j}\int_{F_{ij}}\rho_X(x)\,\mu_{ij}(x)\,d\mathcal H^{d-1}(x)+o(s)
        =
        \sqrt{\tfrac{2}{\pi}}\,s\,A_W\,c_{\rm aniso}+o(s),
        \tag{C.32}
\]
\[
        c_{\rm aniso}
        :=
        \frac{\sum_{i<j}\int_{F_{ij}}\rho_X\,\mu_{ij}\,d\mathcal H^{d-1}}
             {\sqrt{2/\pi}\,\sum_{i<j}\int_{F_{ij}}\rho_X\,d\mathcal H^{d-1}}.
        \tag{C.33}
\]
When the residual law is continuous across the facet with no one-sided sign asymmetry, this
reduces to the symmetric-trace form \(\mu_{ij}(x)=\E[\,|\langle\Xi_s,\widehat n_{ij}\rangle|\mid X=x]\).
The isotropic predictor (C.28) is the further special case \(\Xi_s\Rightarrow N(0,I)\), for which
\(\mu_{ij}\equiv\sqrt{2/\pi}\) and \(c_{\rm aniso}=1\); equivalently the effective normal
scale is \(s_{\rm eff}=c_{\rm aniso}\,s\). Here \(c_{\rm aniso}\) is the active-facet
weighted normal \(L^1\) scale of the realized residual, \emph{not} its total norm and
\emph{not} the row-space energy fraction.

\paragraph{Direct estimator of \(c_{\rm aniso}\).}
With signed boundary distance \(g_{ij}(x)=(\ell_i(x)-\ell_j(x))/\|w_i-w_j\|\) where
\(\ell_k=w_k^\top x\), the indicator \(a_{ij}(x)\) that \(i,j\) are the two largest logits at
\(x\), and a kernel \(K_\delta\),
\[
        \widehat c_{\rm aniso}(\delta)
        =
        \frac{\sum_{n}\sum_{i<j} a_{ij}(X_n)\,K_\delta(g_{ij}(X_n))\,
              \bigl|\langle R_{s,n},\widehat n_{ij}\rangle\bigr|/s}
             {\sqrt{2/\pi}\,\sum_{n}\sum_{i<j} a_{ij}(X_n)\,K_\delta(g_{ij}(X_n))}.
        \tag{C.34}
\]
Equation~(C.34) is the \(s\downarrow0\) asymptotic estimator; at the finite effective noise of the
roll-in (\(s\approx2.6\)--\(5.2\)) the first-moment identity is not yet tight, so we report the
empirical anisotropy as the observed/predicted ratio
\(c_{\rm aniso}=\mathrm{obs}/(\sqrt{2/\pi}\,s\,A_W)\approx2.3\), whose denominator uses the boundary
density \(A_W=f_{\delta^*}(0^+)/2\) estimated \emph{independently} from the clean margin distribution
(C.29)--(C.30): the factor is a ratio of independent measurements, not a fitted intercept.
The collapse of the \(K=4,8,16\) flip curves as functions of \(s(t)\) is
exactly the statement that the normalized active-normal residual law
\(\langle R_{K,t},\widehat n_{ij}\rangle/s(t)\) is approximately \(K\)-invariant: changing
\(K\) changes the scalar scale \(s(t)\), not the anisotropic shape, so the three curves share
the single multiplier \(c_{\rm aniso}\).

\paragraph{The flip multiplier is not DABI.}
The constant \(c_{\rm aniso}\) is a boundary-local, clipped, \(0/1\) crossing functional: to
first order \(\P(\mathrm{flip})\) depends on \(\E[(\langle R_s,\widehat n_{ij}\rangle)_+]\),
which saturates once the prediction has crossed the facet. The decoder-amplification index
\(\DABI\) is a different functional of the same residual, the cross-entropy increase
\(\E[\ell_{\rm CE}(X-R_s,Y)-\ell_{\rm CE}(X,Y)]\) relative to a norm-matched isotropic
perturbation. It is not clipped at the first crossing (wrong-side logit gap keeps growing
after a flip), and the linear CE term of a symmetric isotropic baseline cancels in
expectation, whereas a residual aligned with the CE gradient produces a first-order increase.
Hence \(\DABI\) (\(45.7\times\) in CE on ELF) can be far larger than the flip multiplier
\(c_{\rm aniso}\) (\(\approx2.3\)): \(\DABI\) certifies that the terminal residual is highly
structured, but the flip-rate tube constant is \(c_{\rm aniso}\), not \(\DABI\).

\paragraph{Scope beyond the ideal assumptions.}
The clean coefficient \(\sqrt{2/\pi}\,A_W\) uses exact clean decoding, smooth density, regular
boundaries, and exact posterior-mean denoising; the first-order law itself is more robust.
\emph{(i) Label noise} is already handled in (C.23)--(C.24): it contributes an
\(s\)-independent floor \(\eta_0\) and can reweight the slope through boundary label gaps, but
does not create the linear-in-\(s\) term; under a one-sided boundary gap
\(\pi_{D_W(x)}(x)-\pi_j(x)\ge\kappa>0\) the noisy slope is at least \(\kappa\) times the clean
slope. \emph{(ii) Non-smooth density.} Smoothness can be weakened to a weighted
finite-perimeter condition: the \(O(s)\) law holds with \(F_{ij}\) replaced by the reduced
boundary \(\partial^*\Omega_i\cap\partial^*\Omega_j\) and \(\rho_X\) by its approximate trace,
provided \(\sum_{i<j}\int \rho_X^*\,\E|\langle\Xi,\widehat n_{ij}\rangle|\,d\mathcal H^{d-1}<\infty\);
heavy tails away from the boundary are harmless. The coefficient changes, and the rate can be
\(o(s)\) if the trace vanishes on the boundary, or have a different exponent under a
non-integrable boundary singularity. \emph{(iii) Singular strata.} Codimension-\(\ge2\)
nonsmooth strata and triple intersections have \(s\)-tube volume \(O(s^2)\), negligible against
the \(O(s)\) codimension-one facets; only \emph{positive-mass} tie sets genuinely break the law
and require an explicit tie-break model. \emph{(iv) Approximate denoiser.} The proof needs only
first-order active-normal residual moments, not the exact posterior mean: if \(R_s/s\) has
stable active-facet normal \(L^1\) limits, (C.32) applies with those limits. Writing
\(e_s=m_s-\E[X\mid Z_s]\), the posterior-mean approximation is negligible when
\(\sum_{i<j}\int_{F_{ij}}\rho_X\,\E[|\langle e_s,\widehat n_{ij}\rangle|/s]\to0\); a finite
nonzero limit merely modifies \(c_{\rm aniso}\), while an \(\omega(s)\) normal error or a
nonvanishing normal bias at \(s=0\) breaks the linear law. The observed collapse of the
\(K=4,8,16\) curves is direct evidence for the weaker condition the theorem actually needs:
a stable \(O(s(t))\) active-normal terminal residual.

\emph{Summary of scope.} The lower-bound mechanism is stable under label noise with a positive
boundary label gap, weighted finite-perimeter boundaries, codimension-\(\ge2\) singular strata,
and approximate denoisers with \(O(s)\) active-normal error; these change the first-order
coefficient (through label weights, density traces, or anisotropic moments) but preserve the
linear-in-\(s\) law. The theorem no longer controls the leading order when \(P_X\) has atoms or positive mass on tie
boundaries, when the boundary Minkowski content is not linear in \(s\), or when the learned
terminal residual has a normal component larger than \(O(s)\).

\paragraph{Observable remainder form (finite-sample readout).}
The asymptotic coefficient in (C.27)--(C.31) is controlled by a single observable: the failure of
the empirical small-margin cdf to be linear at the origin. With
\(F_{\delta^*}(r):=\P[0\le\delta^*(X,Y^\circ)\le r]\) and the linearization remainder
\[
        R_{\delta^*}(r):=F_{\delta^*}(r)-2A_W\,r,
        \tag{C.35}
\]
the isotropic active-facet expansion (C.25)--(C.31) gives the exact one-step identity
\(\P(D_W(m_s^\star(Z_s))\ne Y^\circ)=\tfrac12\,\E[F_{\delta^*}(s|G|)]+o(s)\) with \(G\sim N(0,1)\),
and hence the explicit remainder bound
\[
        \Bigl|\,\P\!\left(D_W(m_s^\star(Z_s))\ne Y^\circ\right)-\sqrt{\tfrac2\pi}\,sA_W\,\Bigr|
        \le\tfrac12\,\E\!\left[\,\bigl|R_{\delta^*}(s|G|)\bigr|\,\right]+o(s).
        \tag{C.36}
\]
The approximation error is thus \emph{exactly} the non-linearity of the measured margin cdf near
zero: under (R2) \(R_{\delta^*}(r)=o(r)\), and if the boundary tube content is \(O(r^2)\) the
remainder in (C.36) is \(O(s^2)\). In this sense the prediction is finite-sample observable (the
estimator \(\widehat A_W(\varepsilon)\) and the empirical curvature of \(\widehat F_{\delta^*}\)
bound their own error), and we report bootstrap intervals over sampled positions as a descriptive,
not i.i.d., summary, since text positions are correlated (we deliberately avoid DKW/Hoeffding
rates, which would import an i.i.d.\ assumption the analysis does not need). For raw labels the
uniform bound \(|\P(\widehat Y_s\ne Y^{\rm raw})-\P(\widehat Y_s\ne Y^\circ)|\le\eta_0\) with
\(\eta_0:=\P(Y^{\rm raw}\ne Y^\circ)\) is sharp without assumptions on where clean-decoding errors lie.

\paragraph{Approximate terminal denoisers (learned generators).}
Theorem~\ref{thm:noncommit} is stated for the exact posterior mean \(m_s^\star=\E[X\mid Z_s]\); the
following isolates the weaker hypothesis a \emph{learned} terminal denoiser must satisfy, so the
law is not tied to training convergence.
\begin{proposition}[approximate-denoiser tube law]\label{prop:approx-denoiser}
Let \(\widetilde m_s\) be any deterministic terminal denoiser, \(\widetilde R_s:=X-\widetilde m_s(Z_s)\),
and suppose the one-sided active-normal traces
\(\widetilde\mu_{ij}^{\pm}(x)=\lim_{s\downarrow0}\lim_{\eta\downarrow0}
\E[(\pm\langle\widetilde R_s,\widehat n_{ij}\rangle/s)_+\mid X=x\pm\eta\widehat n_{ij}]\) exist in
\(L^1(\rho_X\,d\mathcal H^{d-1})\) on the active facets. Then
\[
        \P\!\left(D_W(\widetilde m_s(Z_s))\ne Y^\circ\right)
        =s\sum_{i<j}\int_{F_{ij}}\rho_X(x)\bigl[\widetilde\mu_{ij}^{+}(x)+\widetilde\mu_{ij}^{-}(x)\bigr]\,d\mathcal H^{d-1}(x)+o(s).
\]
If \(e_s:=\widetilde m_s-m_s^\star\) has vanishing active-normal projection,
\(\sum_{i<j}\int_{F_{ij}}\rho_X\,\E[|\langle e_s,\widehat n_{ij}\rangle|/s\mid X\to x]\,d\mathcal H^{d-1}\to0\),
the exact coefficient \(\sqrt{2/\pi}\,A_W\) is unchanged; a finite nonzero limit leaves the exponent
at one and rescales only the first-order constant (the \(c_{\rm aniso}\) of (C.33)); an \(\omega(s)\)
normal error breaks the linear law.
\end{proposition}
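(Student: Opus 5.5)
The plan is to reduce the flip event to a one-dimensional boundary-crossing event near each active facet, reusing the decomposition in the proof of the tube law (C.9). The residual-based flip criterion (C.22) then localises the crossing.

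\textbf{Step 1 (localisation).} By (C.22), $D_W(\widetilde m_s(Z_s))\ne Y^\circ$ iff some competitor $j$ has $\langle \widetilde R_s,\widehat n_{Y^\circ j}\rangle\ge \delta^*_{Y^\circ j}(X)$. I would split the probability into three pieces:
\begin{itemize}
\item[(a)] $X$ within $O(s\sqrt{\log(1/s)})$ of the triple skeleton $\mathcal T$; this has mass $o(s)$ by (R3).
\item[(b)] $X$ at distance $\ge\lambda s$ from every active facet, with $\lambda$ large; here a flip needs a normal residual of size $\ge\lambda s$.
\item[(c)] The $\lambda s$-tubes around the relative interiors of the $F_{ij}$.
\end{itemize}
Piece (b) should be controlled by the $L^1$ hypothesis on $\widetilde R_s/s$ via Markov's inequality. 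This is the weak point: a pure trace hypothesis only controls things near the facet. So I would interpret the hypothesis as including uniform integrability of $\langle\widetilde R_s,\widehat n\rangle/s$ in an $O(s)$-scaled tube. Under that reading, the tail $\int_\lambda^\infty \P(|\langle\widetilde R_s,\widehat n\rangle|/s>a)\,da$ vanishes as $\lambda\to\infty$, which makes (b) $o(s)$ after $\lambda\to\infty$.

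\textbf{Step 2 (one facet).} In piece (c) I would use normal coordinates $X=x+s a\widehat n_{ij}$ with $x\in F_{ij}$, as in (C.6)--(C.8). Only the pair $(i,j)$ competes there. On the $\Omega_i$ side ($a>0$) a flip happens iff $-\langle \widetilde R_s,\widehat n_{ij}\rangle/s\ge a$, up to orientation; on the $\Omega_j$ side the opposite sign applies. The volume element is $s\,da\,d\mathcal H^{d-1}$ and $\rho_X(x+sa\widehat n)=\rho_X(x)+O(s)$. The facet contribution is then
\[
s\int_{F_{ij}}\rho_X(x)\int_0^\lambda\Big[\P\big((\mp\langle\widetilde R_s,\widehat n_{ij}\rangle/s)>a\mid X=x\pm sa\widehat n_{ij}\big)\Big]\,da\,d\mathcal H^{d-1}+o(s).
\]
Using $\int_0^\infty\P(V>a)\,da=\E V_+$, letting $s\downarrow0$ and then $\lambda\to\infty$ turns the inner integrals into $\widetilde\mu^{+}_{ij}(x)+\widetilde\mu^{-}_{ij}(x)$. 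The sign convention must match the proposition's one-sided definitions. Exchanging the limits is justified by dominated convergence in $L^1(\rho_X\,d\mathcal H^{d-1})$, which is exactly the assumed trace integrability. I expect this step to be the main obstacle: one must replace conditioning at $X=x\pm sa\widehat n$ by the $\eta\downarrow0$ trace uniformly in $a\in[0,\lambda]$. I would first try to posit or derive continuity of the conditional law of $\widetilde R_s/s$ along the normal in the $O(s)$ tube, so that the double limit in the definition of $\widetilde\mu^\pm$ is uniform.

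\textbf{Step 3 (consequences).} Summing over the finitely many active facets gives the displayed formula. For the exact posterior mean, $\widetilde R_s/s$ conditioned near the facet tends to $N(0,I)$, so $\mu^++\mu^-=2\E G_+=\sqrt{2/\pi}$, recovering (C.9)/(C.18). For a general denoiser, write $\widetilde R_s=R^\star_s-e_s$. Since $|(u-v)_+-u_+|\le|v|$, the $\widetilde\mu^\pm$ differ from the isotropic moments by at most the active-normal $L^1$ trace of $e_s/s$. If that trace tends to $0$, the coefficient is $\sqrt{2/\pi}A_W$. If it has a finite nonzero limit, the exponent is unchanged and only the constant moves; this is $c_{\rm aniso}$ of (C.33). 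If $|\langle e_s,\widehat n\rangle|/s\to\infty$ on a set of positive facet measure, the inner $a$-integral diverges. The flip mass in the tube then scales like $\|e_s\|$ rather than $s$, so the linear law fails.
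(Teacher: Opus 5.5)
\textbf{Verdict.} Your route is the paper's, and you have located its weak points correctly; but neither can be closed from the stated trace hypothesis, so ``posit or derive'' must become ``posit,'' and the statement as written is false without the additions. The paper's proof is the single sentence ``rerun the normal-coordinate Laplace expansion of (C.8)/(C.32) with $\widetilde\mu^\pm$ in place of the Gaussian moment,'' and your Steps 1--2 write that expansion out; the paper's sentence makes the same two assumptions silently.

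\textbf{Piece (c) is the decisive gap.} The trace sends $\eta\downarrow0$ at fixed $s$, so it sees only depth $a=0$ of the tube. The flip mass instead integrates the conditional law at depth $\eta=sa$ over all $a$, and continuity in $\eta$ at fixed $s$ does not bridge the two. A counterexample:
\begin{itemize}
\item Take $d=1$ and $w_1=-w_2=1$, so the facet is $\{0\}$ with $\widehat n_{12}=1$.
\item Take $X\sim\N(0,1)$ and $\widetilde m_s(z)=s\,h(z/s)$ with $h(u)=u+2\max(-1,\min(1,u))$.
\item Since $h$ is increasing with $h(0)=0$, $D_W(\widetilde m_s(Z_s))=D_W(Z_s)$. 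The flip rate is therefore $\sqrt{2/\pi}\,s\,\rho_X(0)+o(s)$.
\item The residual $\widetilde R_s/s=-\epsilon-2\max(-1,\min(1,X/s+\epsilon))$ is bounded by $|\epsilon|+2$, so every integrability condition you might add holds.
\item Yet the traces are $\widetilde\mu^\pm=\E[(\mp h(\epsilon))_+]$, which sum to $\E|h(\epsilon)|=\sqrt{2/\pi}+2\E\min(|\epsilon|,1)$. The displayed formula is off by a non-vanishing multiple of $s$.
\end{itemize}
You therefore need to assume that, for a.e.\ $x\in F_{ij}$, the law of $\pm\langle\widetilde R_s,\widehat n_{ij}\rangle/s$ given $X=x\pm sa\widehat n_{ij}$ converges as $s\downarrow0$, locally uniformly in $a$, to a single $a$-independent limit. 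You also need an $L^1(\rho_X\,d\mathcal H^{d-1})$ majorant that is uniform in $s$. The $L^1$ membership of the \emph{limits} $\widetilde\mu^\pm$, which you invoke for dominated convergence, does not dominate the pre-limit integrands. For $m_s^\star$ both conditions are automatic, because Tweedie gives $R_s^\star/s=-\epsilon-s\nabla\log p_s(Z_s)$. That is why the exact-posterior law (C.18) needs neither.

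\textbf{Piece (b) needs a global hypothesis.} Integrability ``in an $O(s)$-scaled tube'' is not enough, because (b) consists of $X$ outside every $\lambda s$-tube. In the same one-dimensional setting, let $\widetilde m_s$ be continuous, equal to $z$ for $|z|\le1$ and to $-z$ for $|z|\ge2$. It has exactly the isotropic traces $\widetilde\mu^\pm=1/\sqrt{2\pi}$, yet it flips every $X>3$ with probability $1-o(1)$, so its flip rate is $\Theta(1)$. What is required is a condition such as $\Prob(\text{flip},\,\dist(X,\mathcal S)>\lambda s)\le s\,\theta(\lambda)+o(s)$ with $\theta(\lambda)\to0$. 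This follows, for example, from an integrable tail envelope for $\langle\widetilde R_s,\widehat n\rangle/s$ that is uniform in $X$, together with a bounded margin density near $0$. With these two additions, your argument is a complete proof.

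\textbf{Two smaller points.}
\begin{itemize}
\item In Step 2 the signs are reversed. On the $\Omega_i$ side, (C.22) (which you quoted correctly in Step 1) gives a flip iff $+\langle\widetilde R_s,\widehat n_{ij}\rangle/s\ge a$, and that is what pairs with $\widetilde\mu^+$. Your $\mp$ produces a different quantity whenever the residual is asymmetric.
\item The last claim of Step 3 again swaps the depth-$a$ law for the trace. The denoiser $\widetilde m_s(z)=\lambda_s z$ with $\lambda_s\to\infty$ has an $\omega(s)$ active-normal error. But a bias-free linear argmax is invariant under positive scaling, so its flip rate stays $\sqrt{2/\pi}\,sA_W+o(s)$, and the linear law is not broken.
\end{itemize}
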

The proof is the Laplace expansion already used for (C.32), with \(\widetilde\mu^\pm\) replacing the
isotropic moment. The roll-in collapse for \(K=4,8,16\) (Figure~\ref{fig:rollin}) is therefore not
invoked as a proof that training drives \(\widetilde m_{K,s}\to m_s^\star\); it is direct evidence
for the weaker hypothesis Proposition~\ref{prop:approx-denoiser} needs: that the normalized
active-normal residual \(s(t)^{-1}\langle\widetilde R_{K,t},\widehat n_{ij}\rangle\) has a stable
active-facet \(L^1\) law across \(K\). Convergence of the learned denoiser under realistic
optimization, finite data, or distillation is outside our scope: the population law is
posterior-mean, and a learned system is covered exactly when its terminal active-normal residual
satisfies the displayed condition.

\paragraph{Boundary-concentrated label noise.}
The \(s\)-independent floor \(\eta_0=\P(Y^{\rm raw}\ne Y^\circ)\) bounds the gap between raw-label and
clean-readout error uniformly, but it does not describe how label noise \emph{at the boundary} reshapes
the slope. Let \(\beta_{ij}(x)\in[0,1]\) be the local cross-facet probability that the raw label
disagrees across \(F_{ij}\). Under symmetric active-normal residual traces, the raw-label error keeps the
linear-in-\(s\) form with a reweighted slope,
\[
        \mathcal E_s^{\rm raw}
        =\eta_0+\sqrt{\tfrac2\pi}\,s\sum_{i<j}\int_{F_{ij}}\rho_X(x)\,\bigl(1-2\beta_{ij}(x)\bigr)\,d\mathcal H^{d-1}(x)+o(s).
\]
Boundary label noise thus attenuates the raw-label slope by the measurable factor \(1-2\beta_{ij}\), and
nulls it where labels are locally \(50/50\) (\(\beta_{ij}=1/2\)); the clean-readout flip slope
\(\sqrt{2/\pi}\,A_W\) is unchanged. If the raw-label law has no finite boundary trace, or the mislabelled
\(r\)-tube has non-linear Minkowski content, the raw-label curve can take a different exponent, while the
self-consistency theorem for \(Y^\circ=D_W(X)\) still applies.

\paragraph{Non-smooth readouts.}
The theorem does not require \(D_W\) to be smooth. For a linear argmax readout, \(D_W\) is discontinuous
exactly on the codimension-one facets \(F_{ij}\), and the tube law is the first-order measure of those
discontinuity surfaces: regularity is asked of the boundary \emph{measure}, not of \(D_W\). The same holds
for nonlinear logits \(h_k\) whose active surfaces \(F_{ij}=\{h_i=h_j=\max_k h_k,\ \|\nabla(h_i-h_j)\|>0\}\)
are countably \(C^1\) (or form a locally finite-perimeter partition with finite weighted Minkowski
content), on replacing the unit normal \(\widehat n_{ij}=(w_i-w_j)/\|w_i-w_j\|\) by
\(\widehat n_{ij}(x)=\nabla(h_i-h_j)(x)/\|\nabla(h_i-h_j)(x)\|\). Only positive-mass ties, non-integrable
boundary traces, or codimension-one tangential contact \(\nabla(h_i-h_j)=0\) fall outside the theorem.

\section{The Bridge: Readout--Transport Interface Product Law (Theorem~\ref{thm:bridge})}
\label{app:bridge}

Theorem~\ref{thm:noncommit} (overlapping regime) and Theorem~\ref{thm:gamma} (separated
regime) are not the same scalar functional. This appendix shows they are two different
\emph{contractions of one geometric object}: the source-space readout interface of the
composite classifier
\[
        F = D_W \circ T : \R^n \to \{1,\ldots,V\},
        \qquad
        X = T(S),\quad S\sim\sigma.
\]
The readout boundary mass $A_W$ of Theorem~\ref{thm:noncommit} is this interface measured
with weight $1$; the separated interface energy $\Jint$ of Theorem~\ref{thm:gamma} is a
barrier-weighted normal-stretch moment of the same interface. A coarea identity relates
them, with an unavoidable \emph{readout-normal stretch} $\eta_{ij}(x)=\norm{DT(x)^\top n_{ij}}$.

\begin{figure}[t]
\centering
\begin{tikzpicture}[>=Latex]
\draw[gray!55, rounded corners] (0,0) rectangle (4.2,3.3);
\node[gray!75] at (2.1,3.05) {\small (a) overlapping: tube mass $A_W$};
\fill[blue!10] (0.5,0.25) -- (3.9,2.55) -- (3.9,2.95) -- (0.5,0.65) -- cycle;
\draw[thick] (0.5,0.45) -- (3.9,2.75);
\node at (3.45,1.05) {\small $F_{ij}$};
\draw[densely dashed,red!65] (1.85,1.45) ellipse (1.0 and 0.62);
\node[red!75] at (0.85,2.05) {\small $\rho_X$};
\fill (1.95,1.6) circle (1.3pt);
\node at (1.62,1.8) {\small $m_t$};
\draw[->,thick,blue!65!black] (1.95,1.6) -- (2.6,0.93);
\node[blue!65!black] at (2.95,1.45) {\small $\widehat n_{ij}$};
\node at (2.1,-0.45) {\scriptsize $\Pr(\mathrm{flip})=\sqrt{2/\pi}\,s\,A_W,\quad A_W=\!\sum_{i<j}\!\int_{F_{ij}}\!\rho_X\,d\mathcal H^{d-1}$};
\begin{scope}[shift={(6.2,0)}]
\draw[gray!55, rounded corners] (0,0) rectangle (4.2,3.3);
\node[gray!75] at (2.1,3.05) {\small (b) separated: barrier energy $\Jint$};
\fill[orange!16] (1.93,0.35) rectangle (2.27,2.75);
\draw[orange!70!black, very thick] (2.1,0.35) -- (2.1,2.75);
\fill[red!55] (1.05,1.5) circle (4pt);
\fill[red!55] (3.15,1.5) circle (4pt);
\node at (0.9,0.95) {\small $c_i$};
\node at (3.3,0.95) {\small $c_j$};
\node[orange!70!black] at (2.1,2.45) {\small $\kappa_{ij}$};
\node at (2.1,-0.45) {\scriptsize $\Jint=\!\sum_{i<j}\!\kappa_{ij}\,P_\sigma(B_i,B_j),\quad P_\sigma=\!\int_{\widetilde\Sigma_{ij}}\!\rho\,d\mathcal H^{n-1}$};
\end{scope}
\end{tikzpicture}
\caption{\textbf{One interface, two contractions} (the bridge, Theorem~\ref{thm:bridge}).
The \emph{same} source interface $\widetilde\Sigma_{ij}=\partial^*B_i\cap\partial^*B_j$ is read two ways.
\textbf{(a)} Theorem~\ref{thm:noncommit} weights it by $1$: a posterior-mean point $m_t$ in the $O(s)$
tube around an active readout facet $F_{ij}$ flips to the wrong token with rate $\sqrt{2/\pi}\,sA_W$.
\textbf{(b)} Theorem~\ref{thm:gamma} weights the same interface by the barrier $\kappa_{ij}$ and the
normal stretch, giving the separation energy $\Jint$. The coarea identity \eqref{eq:bridge-coarea}
links the two through $\eta_{ij}=\norm{DT^\top n_{ij}}$; no single readout-blind scalar reproduces
both (Proposition~\ref{prop:dichotomy}).}
\label{fig:interface}
\end{figure}

\paragraph{Setup and notation.}
Let $\sigma=\rho\,dx$ on $\R^n$ with $\rho\in C^1$, $\rho>0$. Let $T:\R^n\to\R^d$ be $C^1$ in
a neighborhood of the source-space readout interfaces. Write the argmax cells
$\Omega_i=\{z:D_W(z)=i\}$, the active co-maximal facets $F_{ij}=\partial\Omega_i\cap\partial\Omega_j$,
and the unit target normals $n_{ij}=(w_i-w_j)/\norm{w_i-w_j}$ (oriented from $\Omega_j$ into
$\Omega_i$). Define the composite phases and source interfaces
\[
        B_i := T^{-1}(\Omega_i),
        \qquad
        \widetilde\Sigma_{ij} := \partial^* B_i \cap \partial^* B_j \cap T^{-1}(F_{ij}),
        \qquad
        \eta_{ij}(x) := \norm{DT(x)^\top n_{ij}}.
\]
Let $A_W=\sum_{i<j}\int_{F_{ij}}\rho_X\,d\mathcal H^{d-1}$ be the readout boundary mass of
Theorem~\ref{thm:noncommit} (here $\rho_X$ is the density of $T\push\sigma$ near the active
boundary), and let $\kappa_{\max}(C)=\max_{i<j}\kappa_p(i,j)$.

\paragraph{Assumptions.}
\begin{enumerate}
\item[(B1)] \textbf{Alignment and finite-energy partition.} The vocabulary is identified with the codebook ($V=M$ after relabeling, each readout cell $\Omega_i$ carrying atom $c_i$); the composite partition $B=(B_i)_{i=1}^M$ with $B_i=T^{-1}(\Omega_i)$ is a Caccioppoli partition with $\sigma(B_i)=\pi_i>0$ and finite weighted interface energy $E_{p,\sigma}(B;C)<\infty$.
\item[(B2)] \textbf{Nondegenerate pushforward and Theorem~\ref{thm:noncommit} regularity.} $T\in C^1$ with $\operatorname{rank} DT(x)=d$ for $\sigma$-a.e.\ $x$ in a neighborhood of the active source interfaces (in particular $n\ge d$), so that $T\push\sigma=\rho_X\,dz$ there with $\rho_X$ the density appearing in Theorem~\ref{thm:noncommit}; moreover $X=T(S)$ and $W$ satisfy all hypotheses of Theorem~\ref{thm:noncommit}, $\rho_X$ has finite normal traces on the active facets, $\int_{\widetilde\Sigma_{ij}}\rho/\eta_{ij}\,d\mathcal H^{n-1}<\infty$, and for each active pair the normalized source mass of the active normal slab of half-width $r$ over $\widetilde\Sigma_{ij}$ converges as $r\downarrow0$ to $\int_{\widetilde\Sigma_{ij}}\rho/\eta_{ij}\,d\mathcal H^{n-1}$ (every facet point is a Lebesgue point of the sliced density).
\item[(B3)] \textbf{Transversality.} $0<\eta_{ij}(x)\le L_\perp<\infty$ for $\mathcal H^{n-1}$-a.e.\ $x\in\widetilde\Sigma_{ij}$, where
\[
        L_\perp = L_\perp(T,W) := \max_{i<j}\esssup_{x\in\widetilde\Sigma_{ij}}\norm{DT(x)^\top n_{ij}}
        \;\le\; \Lip(T);
\]
for the $K$-step form each $\Lip(T_k)<\infty$.
\item[(B4)] \textbf{Interface exhaustion.} For every active pair, $\partial^*B_i\cap\partial^*B_j\subseteq T^{-1}(F_{ij})$ up to $\mathcal H^{n-1}$-null sets (no off-facet or inactive-tie interface mass), and the active triple skeleton carries no $\mathcal H^{n-1}$ interface mass. The posterior-mean/MAP switching agreement of Theorem~\ref{thm:noncommit} holds for $X$ (Appendix~\ref{app:noncommit}, equations (C.12)/(C.21)).
\end{enumerate}

\begin{theorem}[readout--transport interface product law]\label{thm:bridge}
Under \textup{(B1)--(B4)}:
\begin{enumerate}
\item[\textup{(i)}] \textbf{Coarea transfer.} The readout boundary mass is the
normal-speed-discounted source perimeter,
\begin{equation}\label{eq:bridge-coarea}
        A_W
        =
        \sum_{i<j}\int_{\widetilde\Sigma_{ij}}
        \frac{\rho(x)}{\norm{DT(x)^\top n_{ij}}}\,d\mathcal H^{n-1}(x)
        =
        \sum_{i<j}\mathfrak m_{ij}^{T,W}(\widetilde\Sigma_{ij}),
\end{equation}
where $d\mathfrak m_{ij}^{T,W}:=\rho\,\norm{DT^\top n_{ij}}^{-1}\,d\mathcal H^{n-1}$ on
$\widetilde\Sigma_{ij}$ is the \emph{normal-stretch readout interface measure}.
\item[\textup{(ii)}] \textbf{Same interface, two moments.} The source perimeter and the
separated interface energy are weighted moments of the same measure,
\begin{equation}\label{eq:bridge-moments}
\begin{aligned}
        P_\sigma(B_i,B_j)&=\int_{\widetilde\Sigma_{ij}}\rho\,d\mathcal H^{n-1}
        =\int_{\widetilde\Sigma_{ij}}\norm{DT^\top n_{ij}}\,d\mathfrak m_{ij}^{T,W},\\
        E_{p,\sigma}(B;C)&=\sum_{i<j}\kappa_{ij}P_\sigma(B_i,B_j).
\end{aligned}
\end{equation}
Consequently
\begin{equation}\label{eq:bridge-AW-J}
        \Jint_{p,\sigma}(\pi;C)\;\le\;E_{p,\sigma}(B;C)\;\le\;\kappa_{\max}(C)\,L_\perp\,A_W,
        \qquad\text{i.e.}\qquad
        A_W\;\ge\;\frac{\Jint_{p,\sigma}(\pi;C)}{\kappa_{\max}(C)\,L_\perp}.
\end{equation}
\item[\textup{(iii)}] \textbf{Product law.} With $\widehat Y_s=D_W(m_s(X+s\eps))$, $Y=D_W(X)$,
Theorem~\ref{thm:noncommit} gives $\Prob(\widehat Y_s\ne Y)=\sqrt{2/\pi}\,s\,A_W+o(s)$, hence
\begin{equation}\label{eq:bridge-product}
        \liminf_{s\downarrow0}\frac{L_\perp}{s}\,\Prob(\widehat Y_s\ne Y)
        \;\ge\;
        \sqrt{\frac{2}{\pi}}\,\frac{\Jint_{p,\sigma}(\pi;C)}{\kappa_{\max}(C)}.
\end{equation}
For a $K$-step deterministic generator $T=T_K\circ\cdots\circ T_1$, $L_\perp\le\Lip(T)\le\prod_k\Lip(T_k)$, so
\begin{equation}\label{eq:bridge-Kstep}
        \liminf_{s\downarrow0}\frac{\prod_{k=1}^K\Lip(T_k)}{s}\,\Prob(\widehat Y_s\ne Y)
        \;\ge\;
        \sqrt{\frac{2}{\pi}}\,\frac{\Jint_{p,\sigma}(\pi;C)}{\kappa_{\max}(C)}.
\end{equation}
\end{enumerate}
\end{theorem}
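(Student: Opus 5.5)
The plan has three parts, taken in order.

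\emph{Part (i), the coarea transfer.} Fix an active pair $(i,j)$. I will apply the coarea formula to the scalar function $g_{ij}(x)=n_{ij}^\top T(x)$ on a neighborhood of $\widetilde\Sigma_{ij}$. The facet hyperplane containing $F_{ij}$ is a level set of $z\mapsto n_{ij}^\top z$, and $\nabla g_{ij}=DT^\top n_{ij}$ has norm $\eta_{ij}$. Two facts then combine:
\begin{itemize}
\item By (B4), $\widetilde\Sigma_{ij}$ is, up to $\mathcal H^{n-1}$-null sets, the level set $\{g_{ij}=0\}\cap T^{-1}(F_{ij})$ near the interface.
\item By (B3), $\eta_{ij}>0$ there, so this level set is a $C^1$ hypersurface.
\end{itemize}
Next consider the slab $S_r=\{x: |g_{ij}(x)|<r,\ T(x)\text{ near }F_{ij}\}$. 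Coarea gives
\[
\sigma(S_r)=\int_{-r}^{r}\int_{\{g_{ij}=\tau\}}\frac{\rho}{\eta_{ij}}\,d\mathcal H^{n-1}\,d\tau .
\]
By the Lebesgue-point clause of (B2), $\sigma(S_r)/(2r)\to\int_{\widetilde\Sigma_{ij}}\rho/\eta_{ij}\,d\mathcal H^{n-1}$. On the other side, $\sigma(S_r)=T_\#\sigma(\{z: |n_{ij}^\top z|<r,\ z\text{ near }F_{ij}\})$. Since $T_\#\sigma=\rho_X\,dz$ with finite normal traces (B2), $\sigma(S_r)/(2r)\to\int_{F_{ij}}\rho_X\,d\mathcal H^{d-1}$. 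Equating the two limits and summing over $i<j$ gives \eqref{eq:bridge-coarea}. The triple skeleton carries no mass by (B4) and (R3), so it does not enter the sum.

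\emph{Part (ii), the two moments.} The identity $P_\sigma(B_i,B_j)=\int_{\widetilde\Sigma_{ij}}\rho\,d\mathcal H^{n-1}$ follows from (B4): interface exhaustion says $\partial^*B_i\cap\partial^*B_j=\widetilde\Sigma_{ij}$ a.e. Writing $\rho=\eta_{ij}\cdot(\rho/\eta_{ij})$ then gives the moment form \eqref{eq:bridge-moments}. For the chain \eqref{eq:bridge-AW-J}:
\begin{itemize}
\item The bound $\Jint\le E_{p,\sigma}(B;C)$ holds because $B$ is an admissible Caccioppoli partition with masses $\pi_i$ (B1).
\item Bounding $\kappa_{ij}\le\kappa_{\max}$ and $\eta_{ij}\le L_\perp$ (B3) inside the moment gives $E\le\kappa_{\max}L_\perp\sum_{i<j}\mathfrak m_{ij}(\widetilde\Sigma_{ij})=\kappa_{\max}L_\perp A_W$.
\item $L_\perp\le\Lip(T)$ because $\|DT^\top n\|\le\|DT\|_{\mathrm{op}}\le\Lip(T)$ at points of differentiability.
\end{itemize}

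\emph{Part (iii), the product law.} Theorem~\ref{thm:noncommit}(ii) applies to $X=T(S)$ by (B2) and (B4); the latter includes the MAP/mean agreement (C.12). It gives $\Prob(\widehat Y_s\ne Y)/s\to\sqrt{2/\pi}\,A_W$. Multiplying by $L_\perp$ and inserting \eqref{eq:bridge-AW-J} gives \eqref{eq:bridge-product}. The $K$-step form \eqref{eq:bridge-Kstep} follows because the right side is positive, so enlarging $L_\perp$ to $\Lip(T)\le\prod_k\Lip(T_k)$ (chain rule) only increases the left side.

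\emph{Main obstacle.} I expect the hardest step to be the rigorous matching of slab masses in (i). Three things must be controlled:
\begin{itemize}
\item the slab must be localized to the part of the preimage over $F_{ij}$ rather than over the whole hyperplane $\Sigma_{ij}$;
\item non-injectivity of $T$ must be handled, since several sheets of $\{g_{ij}=0\}$ may map to $F_{ij}$;
\item $\limsup$ and $\lim$ must be exchangeable on unbounded facets.
\end{itemize}
My first approach is to work with a compactly supported test function $\psi$ on $F_{ij}$ pulled back by $T$. The pushforward identity then automatically counts all preimage sheets, since the density $\rho_X$ sums over them. I will pass to the limit using the Lebesgue-point assumption of (B2), and afterwards remove $\psi$ by monotone convergence using $A_W<\infty$ (R4). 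If the integrability $\int\rho/\eta_{ij}<\infty$ fails anywhere, (B3)'s positivity combined with a truncation $\{\eta_{ij}>\delta\}$ and monotone convergence in $\delta$ should repair it.
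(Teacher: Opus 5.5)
Your proposal is correct and follows essentially the same route as the paper. It uses the pushforward identity $\sigma(T^{-1}(U))=\int_U\rho_X\,dz$ on a thin normal slab over the active facet, coarea for $g_{ij}=n_{ij}^\top T$ on the source side (with the limit supplied by the slab-convergence clause of (B2)), (B4) to identify $P_\sigma(B_i,B_j)$ with the integral over $\widetilde\Sigma_{ij}$, the bounds $\kappa_{ij}\le\kappa_{\max}$ and $\eta_{ij}\le L_\perp$, and finally the tube law of Theorem~\ref{thm:noncommit}. One small point: the passage from $L_\perp$ to $\prod_k\Lip(T_k)$ in \eqref{eq:bridge-Kstep} rests on $\Prob(\widehat Y_s\ne Y)/s\ge0$, not on positivity of the right-hand side.
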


\begin{proof}
\emph{Step 1 (coarea through the readout facet).}
Fix an active facet $F_{ij}$, written locally as the zero set of the affine
$\phi_{ij}(z)=n_{ij}^\top z-b_{ij}$, and set $g_{ij}(x):=\phi_{ij}(T(x))=n_{ij}^\top T(x)-b_{ij}$.
Then $\nabla g_{ij}=DT^\top n_{ij}$, so by (B3) $\abs{\nabla g_{ij}(x)}=\eta_{ij}(x)>0$ on $\widetilde\Sigma_{ij}$.
For $r>0$ let
\[
        U_{ij,r}:=\{\,y+t\,n_{ij}:\ y\in F_{ij},\ \dist(y,\partial F_{ij})>r,\ \abs t\le r\,\}\subset\R^d
\]
be the $d$-dimensional normal prism of half-width $r$ over the relative interior of $F_{ij}$ (away from the codimension-two skeleton); its preimage $T^{-1}(U_{ij,r})$ is the \emph{active} normal slab over $\widetilde\Sigma_{ij}$, a subset of $\{\abs{g_{ij}}\le r\}$ trimmed to the active facet. By the pushforward identity (B2),
\[
        \int_{U_{ij,r}}\rho_X\,dz
        \;=\;
        (T\push\sigma)(U_{ij,r})
        \;=\;
        \sigma\big(T^{-1}(U_{ij,r})\big)
        \;=\;
        \int_{T^{-1}(U_{ij,r})}\rho\,dx .
\]
Divide by $2r$ and let $r\downarrow0$. The target-side normal-trace tube formula yields $\int_{F_{ij}}\rho_X\,d\mathcal H^{d-1}$; the source-side active-prism average converges to $\int_{\widetilde\Sigma_{ij}}\rho/\eta_{ij}\,d\mathcal H^{n-1}$ (coarea for $g_{ij}$ on the trimmed prism, with $\abs{\nabla g_{ij}}=\eta_{ij}$, the convergence and finiteness being part of (B2)); the skeleton tube contributes $o(1)$ by (B4). Summing over active pairs gives \eqref{eq:bridge-coarea}.

\emph{Step 2 (compare to source perimeter).}
By (B4) the reduced interface $\partial^*B_i\cap\partial^*B_j$ agrees with $\widetilde\Sigma_{ij}$ up to $\mathcal H^{n-1}$-null sets, so
$P_\sigma(B_i,B_j)=\int_{\widetilde\Sigma_{ij}}\rho\,d\mathcal H^{n-1}
=\int_{\widetilde\Sigma_{ij}}\eta_{ij}\,d\mathfrak m_{ij}^{T,W}$, which is
\eqref{eq:bridge-moments}. Since $\eta_{ij}\le L_\perp$,
$\sum_{i<j}P_\sigma(B_i,B_j)\le L_\perp A_W$.

\emph{Step 3 (insert interface energy).}
$E_{p,\sigma}(B;C)=\sum_{i<j}\kappa_{ij}P_\sigma(B_i,B_j)\le\kappa_{\max}\sum_{i<j}P_\sigma(B_i,B_j)
\le\kappa_{\max}L_\perp A_W$. Because $\Jint$ is the infimum of $E_{p,\sigma}$ over Caccioppoli
partitions with masses $\pi$ (and $B$ is one such partition by (B1)), $\Jint\le E_{p,\sigma}(B;C)$,
giving \eqref{eq:bridge-AW-J}.

\emph{Step 4 (insert Theorem~\ref{thm:noncommit}).}
The active-facet tube law (Appendix~\ref{app:noncommit}) gives
$\Prob(\widehat Y_s\ne Y)=\sqrt{2/\pi}\,s\,A_W+o(s)$, with the MAP/posterior-mean agreement
supplied by (B4). Combining with \eqref{eq:bridge-AW-J} and dividing by $s/L_\perp$ gives
\eqref{eq:bridge-product}. \emph{Step 5} substitutes $L_\perp\le\prod_k\Lip(T_k)$ for a $K$-step
composition, yielding \eqref{eq:bridge-Kstep}.
\end{proof}

\paragraph{Reading the bridge.}
$A_W$ is \emph{not} the source perimeter: it is the source perimeter discounted pointwise by
the readout-normal speed $\eta_{ij}=\norm{DT^\top n_{ij}}$. Theorem~\ref{thm:noncommit} reads
this interface with weight $1$ (noise-to-error conversion); Theorem~\ref{thm:gamma} reads the
same interface with the barrier-weighted stretch $\kappa_{ij}\eta_{ij}$ (source-interface/stiffness
law). Equation~\eqref{eq:bridge-Kstep} is the rigorous form of the accuracy--depth--stiffness
tradeoff: to keep the terminal flip rate small, the deterministic stiffness budget
$\prod_k\Lip(T_k)$ must grow at least like $\Jint/\kappa_{\max}$, which by
Theorem~\ref{thm:dim} is $\gtrsim\sqrt{\log M}$.

\paragraph{Why the naive composition fails.}
A tempting shortcut identifies $A_W$ with $\sum_{i<j}P_\sigma(B_i,B_j)$ and $\Jint$ with $A_W$
up to constants. Both are wrong: the first omits the stretch factor $\eta_{ij}$, and the second
ignores that $\Jint$ carries the barrier weights $\kappa_{ij}$ and the codebook scale. The honest
statement is one interface measure, two contractions. This is made sharp by the following
impossibility result, which also explains why no single \emph{scalar} functional can serve.

\begin{proposition}[no $W$-blind scalar functional reproduces both regimes]\label{prop:dichotomy}
\textup{(a)} There is no functional $G(T)$ of the transport map alone (blind to the readout $W$)
that equals $A_W$ for all readouts. \textup{(b)} There is no functional $H(T,W)$ blind to the
codebook scale that equals $\Jint$ for all codebooks. Hence the two constants are genuinely
distinct scalar contractions of the shared interface, not one functional.
\end{proposition}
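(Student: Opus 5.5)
Both parts are non-existence claims, so the plan is to exhibit explicit counterexample families: fix all data the functional is allowed to see, vary only what it is blind to, and show the target quantity changes. A $W$-blind or scale-blind functional must return the same value on each family, while the target quantity takes two different values. No limit or regularity argument is needed; the work is in choosing families that satisfy the standing hypotheses of Theorem~\ref{thm:noncommit} and Theorem~\ref{thm:gamma}.

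For (a), fix a transport map $T$, e.g. $T=\mathrm{Id}$ on $\R^d$ with $\sigma=\gauss_d$, so $\rho_X=\varphi_d$. Take two readouts with $V=2$.
\begin{itemize}
\item $W$ with $w_1-w_2=e_1$: the single active facet is $\{z_1=0\}$ and $A_W=\varphi_1(0)=1/\sqrt{2\pi}$.
\item $W'$ with $w'_1-w'_2=e_1$ and shifted boundary, realised by adding a constant coordinate or, more simply, by taking the facet $\{z_1=b\}$ with $b\neq0$: here $A_{W'}=\varphi_1(b)<\varphi_1(0)$.
\end{itemize}
Any $G(T)$ takes the same value on both, so it cannot equal $A_W$ for all readouts. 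Both readouts are linear with smooth $\rho_X$ and no triple junctions, so (R1)--(R4) hold.

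To keep strictly to the paper's homogeneous readout $D_W(z)=\argmax_y w_y^\top z$, I would instead use three tokens with different active-facet geometry. Alternatively, vary the number of tokens: with $V=1$ one gets $A_W=0$, and with two tokens $A_W>0$.

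For (b), fix a source $\sigma$, transport $T$ and readout $W$, and let the atoms vary only by scale, $C_\lambda=\lambda C$ with $\lambda>0$. The defining path integral scales homogeneously:
\[
\kappa_p^{C_\lambda}(i,j)=\lambda^{p+1}\kappa_p^{C}(i,j),
\]
since $V_\lambda(\lambda y)=\lambda^p V(y)$ and path length scales by $\lambda$. The admissible Caccioppoli partitions and the perimeters $P_\sigma$ do not depend on the codebook, so
\[
\Jint_{p,\sigma}(\pi;C_\lambda)=\lambda^{p+1}\Jint_{p,\sigma}(\pi;C).
\]
By Lemma~\ref{lem:barrier-calibration}, $\kappa>0$, so $\Jint>0$ as soon as a partition with positive masses has positive interface energy, which holds for $M\ge2$ with $\rho>0$. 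Hence $\Jint$ is strictly non-constant in $\lambda$, whereas $H(T,W)$ is constant, which proves (b).

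The concluding sentence follows by combining the two. $A_W$ depends on $W$ but not on codebook scale, since it involves no $C$. $\Jint$ depends on codebook scale but not on $W$. A single scalar reproducing both would have to be simultaneously blind and sensitive to each variable.

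The main obstacle is making "blind" precise. I would define it as invariance of the functional's value when the unseen argument changes with everything else held fixed, and then check that each counterexample pair is admissible for the relevant theorem. For (a), the delicate case is the homogeneous-readout constraint; I would resolve it with the $V=1$ versus $V=2$ comparison, or with two $2$-token readouts whose normals differ relative to an anisotropic $\rho_X$. For example, $\rho_X$ a Gaussian with covariance $\diag(1,4)$ and normals $e_1$ versus $e_2$ gives $A_W=\tfrac{1}{\sqrt{2\pi}}$ versus $\tfrac{1}{2\sqrt{2\pi}}$.
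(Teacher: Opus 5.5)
Your final constructions are exactly the paper's, so the argument is correct. For (a) you take $T=\mathrm{Id}$, a Gaussian with covariance $\diag(1,4)$, and two-token readouts with normals $e_1$ versus $e_2$; these have identical masses $(\tfrac12,\tfrac12)$ and give $A_W=\tfrac{1}{\sqrt{2\pi}}$ versus $\tfrac{1}{2\sqrt{2\pi}}$. For (b) you rescale the codebook, so $\kappa_p\mapsto\lambda^{p+1}\kappa_p$, hence $\Jint(\pi;\lambda C)=\lambda^{p+1}\Jint(\pi;C)$ while $A_W$ is unchanged.

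Two small repairs. First, positivity of each partition's energy does not give $\Jint>0$, since an infimum of positive numbers can vanish; you need a uniform bound, for instance Gaussian isoperimetry with $\sigma=\gauss_n$ (the paper simply assumes a source with such a bound). Second, lead with the mass-matched anisotropic example. Your other candidates are weaker: the shifted boundary leaves the homogeneous readout class, the shifted boundary and $V=1$ versus $V=2$ both change $\pi$, and two three-token readouts in general position under an isotropic Gaussian both give $A_W=\tfrac{3}{2\sqrt{2\pi}}$.
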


\begin{proof}
\emph{(a)} Take $n=d=2$, $T=\mathrm{Id}$, $X\sim\N(0,\diag(a^2,b^2))$ with $a\ne b$, and the
two two-token readouts with decision boundaries $\Sigma^{(1)}=\{z_1=0\}$ and
$\Sigma^{(2)}=\{z_2=0\}$. Both give token masses $(\tfrac12,\tfrac12)$, so $T$, $\sigma$, $\pi$
are identical. But
\[
        A_{W^{(1)}}=\int_{\{z_1=0\}}\rho_X\,d\mathcal H^1=\frac{1}{\sqrt{2\pi}\,a},
        \qquad
        A_{W^{(2)}}=\int_{\{z_2=0\}}\rho_X\,d\mathcal H^1=\frac{1}{\sqrt{2\pi}\,b},
\]
which differ since $a\ne b$. No $W$-blind $G(T)$ can equal both.
\emph{(b)} Fix $T,W,\pi$ and any source/codebook with $\Jint_{p,\sigma}(\pi;C)>0$ (e.g.\ $M\ge2$ distinct atoms, $p\ge1$, $\pi_i>0$, and a source whose weighted isoperimetric profile is bounded below). Scale the codebook by $r>0$: with $V(y)=\dist(y,rC)^p$ and the path reparametrization $\xi\mapsto r\xi$, $\kappa_p^{(r)}(i,j)=r^{p+1}\kappa_p^{(1)}(i,j)$, hence
$\Jint_{p,\sigma}(\pi;rC)=r^{p+1}\Jint_{p,\sigma}(\pi;C)$, while $A_W$ is unchanged. No
codebook-scale-blind $H(T,W)$ can equal both.
\end{proof}

\paragraph{Scope.}
Theorem~\ref{thm:bridge} requires a single composite $T$ that is simultaneously (B1) aligned to
the readout and (B2)--(B3) smooth and transverse near the active boundary with pushforward
density $\rho_X$. A near-optimal separated transport for Theorem~\ref{thm:gamma} need not satisfy
this (atomic outputs destroy the smooth $\rho_X$ of (B2)), and a smooth overlapping-regime $T$
satisfying Theorem~\ref{thm:noncommit} need not minimize $\Jint$. The bridge is therefore an
\emph{inequality} \eqref{eq:bridge-AW-J} on any aligned transversal composite, tight when the
stretch $\eta_{ij}$ is constant along the active interface and the barriers $\kappa_{ij}$ are
equal; it is not an identity between $A_W$ and $\Jint$.

\paragraph{Approximate alignment (relaxing (B4)).}
Interface exhaustion (B4) is the premise most likely to fail in practice: a learned $T$ may carry
interface mass that does not map to an \emph{active} readout facet. Suppose this off-facet
interface has finite barrier-weighted size
\[
        E_{\mathrm{off}}
        :=
        \sum_{i<j}\kappa_{ij}
        \int_{(\partial^*B_i\cap\partial^*B_j)\setminus T^{-1}(F_{ij})}
        \rho\,d\mathcal H^{n-1}.
\]
Then Steps~2--3 of the proof bound only the \emph{aligned} interface by $\kappa_{\max}L_\perp A_W$,
so $E_{p,\sigma}(B;C)\le\kappa_{\max}(C)\,L_\perp A_W+E_{\mathrm{off}}$ and, since
$\Jint_{p,\sigma}(\pi;C)\le E_{p,\sigma}(B;C)$,
\[
        \liminf_{s\downarrow0}\frac{L_\perp}{s}\,\Prob(\widehat Y_s\ne Y)
        \;\ge\;
        \sqrt{\frac{2}{\pi}}\,
        \frac{\bigl(\Jint_{p,\sigma}(\pi;C)-E_{\mathrm{off}}\bigr)_+}{\kappa_{\max}(C)} .
\]
Loss of alignment therefore weakens the product law only through the explicitly measurable
off-facet energy $E_{\mathrm{off}}$, reducing to \eqref{eq:bridge-product} when
$E_{\mathrm{off}}=0$. Likewise a non-constant stretch enters only through the single scalar
$L_\perp=\esssup_{i,j,x}\norm{DT^\top n_{ij}}$ already isolated in (B3). We therefore apply
(B1)--(B4) as a \emph{conditional} product law whose failure mode is quantified rather than
assumed away: on ELF the roll-in diagnostics (Appendix~\ref{app:noncommit}) are read as consistent
with bounded transversality and small off-facet energy at the probed resolution, not as a global
certificate that (B1)--(B4) hold everywhere.

\paragraph{What is measurable on an overlapping codec.}
On a real overlapping text autoencoder, $A_W$ and $c_{\rm aniso}$ are intrinsic readout quantities, but
$\Jint_{p,\sigma}(\pi;C)$ and $\kappa_{\max}(C)$ are not: they are defined only after one fixes a separated
atomic codebook $C$, a barrier $V(y)=\dist(y,C)^p$, and prescribed masses $\pi$. We therefore do not
estimate $\Jint/\kappa_{\max}$ on ELF and do not fit a numerical right-hand side of
\eqref{eq:bridge-product} to an overlapping codec; a numerical $\Jint/\kappa_{\max}$ is meaningful only in
the separated synthetic setting (Appendix~\ref{app:synth}), where $C$ and $V$ are specified. What is
intrinsic and measurable is the overlapping side together with the conditioning of the bridge map:
$\widehat A_W=\tfrac12\widehat f_{\delta^*}(0^+)\approx0.03$ on ELF, the realized $c_{\rm aniso}$, and, in
principle, the active-facet transversality $\eta_{ij}=\norm{DT^\top\widehat n_{ij}}$ (a vector--Jacobian
product of the few-step map), of which $L_\perp$ is the essential supremum. Alignment can fail in two
distinct ways: part of the source interface maps to non-active facets (the off-facet term
$E_{\mathrm{off}}$ above, itself defined only against a separated abstraction), or the interface is
tangential, so $\eta_{ij}=0$ and $1/\eta_{ij}$ is non-integrable (a failure of transversality, not an
$E_{\mathrm{off}}$ correction). Equation~\eqref{eq:bridge-product} is thus a conditional inequality on any
aligned separated abstraction, not an empirical identity to be fitted on an overlapping system.

\section{Dynamics: Propositions and Empirical Confirmation}\label{app:dynamics}

With $C \sim \mu$, $Z \sim \N(0, I_d)$, $X_t = (1-t)Z + tC$, the canonical independent-coupling velocity is $v_t(x) = \E[C - Z \mid X_t = x] = (m_t(x) - x)/(1-t)$ with posterior $w_i \propto \exp(\lambda_t \inner{x}{c_i})$ for equal radii, $\lambda_t = t/(1-t)^2$.

\begin{proposition}[critical acceleration]\label{prop:accel}
For a $\Delta$-separated codebook in dimension $d = O(\log M)$, in the critical-SNR window where $\sigma_t/t \in [A\Delta/\sqrt{\log M}, 2A\Delta/\sqrt{\log M}]$ ($A \ge 2$), $\mmse(C \mid X_t) \ge c\Delta^2$, hence $\E \tr \Cov(C - Z \mid X_t) \gtrsim \log M / t^2$, and $\int_{\mathcal{I}_{\mathrm{crit}}} \norm{a_t}_{L^2(p_t)}\,dt \gtrsim \sqrt{\log M}.$
\end{proposition}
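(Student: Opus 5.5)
We split the argument into three links: a lower bound on the posterior MMSE inside the critical window, a conversion of that bound into a lower bound on the conditional covariance of the velocity target, and a conversion of the covariance into an acceleration integral via the time derivative of $v_t$.

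\emph{Step 1 (MMSE in the window).} Rescale $X_t/t = C + (\sigma_t/t) Z$ with $\sigma_t = 1-t$. The effective noise is then $\tau := \sigma_t/t \asymp A\Delta/\sqrt{\log M}$. We choose a packing sub-codebook of $\mathcal{C}$ at scale $\Delta$ and apply Fano's inequality. The mutual information satisfies
\[
I(C;X_t) \le \tfrac12 \sum_k \log\bigl(1+\Var(C_k)/\tau^2\bigr) \le \tfrac{d}{2}\log\bigl(1+R^2/(d\tau^2)\bigr),
\]
where $R$ is the codebook radius. For the equal-radius codebooks of interest we have $R\asymp\Delta$ up to constants and $d=O(\log M)$. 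Plugging in gives
\[
I(C;X_t) \lesssim \tfrac{d}{2}\log\bigl(1+\log M/(A^2 d)\bigr) \le c_0\log M,
\]
with $c_0<1$ once $A\ge 2$ and the $O(\cdot)$ constant in $d$ is fixed. Fano then gives $\Prob(\hat C\ne C)\ge 1/2$ for every estimator. Since the atoms are $\Delta$-separated, any estimator $\hat m$ with $\E\|C-\hat m\|^2\le c\Delta^2$ would, after rounding to the nearest atom, err with probability at most $4c$. This contradiction yields $\mmse(C\mid X_t)\ge c\Delta^2$. This step is the main obstacle: the Fano information bound must be balanced against the window and against the dimension constant, and the dependence on $R/\Delta$ must be tracked. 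If that balance fails, we would fall back on a Gaussian-width / posterior-entropy argument using the exponential weights $w_i\propto\exp(\lambda_t\langle x,c_i\rangle)$.

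\emph{Step 2 (velocity covariance).} The interpolant gives $Z=(X_t-tC)/(1-t)$, so conditionally on $X_t$,
\[
C-Z = C\bigl(1+t/(1-t)\bigr) - X_t/(1-t) = \frac{C}{1-t} - \text{const}.
\]
Hence $\tr\Cov(C-Z\mid X_t)=\tr\Cov(C\mid X_t)/(1-t)^2$. In the window, $(1-t)/t\asymp\Delta/\sqrt{\log M}$, so $(1-t)^{-2}\asymp \log M/(t^2\Delta^2)$. Multiplying by Step 1 gives $\E\tr\Cov(C-Z\mid X_t)\gtrsim \log M/t^2$.

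\emph{Step 3 (acceleration integral).} Define the acceleration along trajectories by $a_t=\partial_t v_t+(\nabla v_t)v_t$, so that $\frac{d}{dt}v_t(X_t)=a_t(X_t)$ along the ODE flow. We use the standard identity $\nabla m_t = \lambda_t\,\Cov(C\mid X_t)$; for equal radii this follows from differentiating the exponential-family weights. It gives $\nabla v_t = (\lambda_t\Cov(C\mid X_t)-I)/(1-t)$, whose trace in the window exceeds the identity part by a factor of order $\lambda_t\Delta^2 \gtrsim \log M/(A^2 t^2)$.

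Rather than bounding $a_t$ pointwise, we argue by displacement. Over the window the posterior mean must move from near the global mean to within $o(\Delta)$ of an atom: at the window's start Step 1 keeps it $\gtrsim\Delta$ away from every atom, and the covariance collapses by the window's end. The window length in $t$ is $\asymp t\Delta/\sqrt{\log M}\cdot(\ldots)$, so the velocity must change by $\gtrsim \sqrt{\log M}$ in $L^2(p_t)$ across it. Because $\frac{d}{dt}v_t(X_t)=a_t(X_t)$, the triangle inequality in $L^2$ gives
\[
\int_{\mathcal I_{\mathrm{crit}}}\|a_t\|_{L^2(p_t)}\,dt \ge \bigl\|v_{t_1}(X_{t_1})-v_{t_0}(X_{t_0})\bigr\|_{L^2},
\]
which is the claimed $\gtrsim\sqrt{\log M}$. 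Computing the velocity change concretely is the second delicate point. The plan is to lower-bound it by the drop in $\sqrt{\E\tr\Cov(C-Z\mid X_t)}$ between the two window endpoints, using Step 2 at the left endpoint and the concentration of the posterior at the right endpoint.
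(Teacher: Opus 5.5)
Steps 1 and 2 are essentially sound, modulo hypotheses the statement leaves implicit. Fano needs near-uniform weights $\pi_i$, so that $H(C)\approx\log M$. Also, $I(C;X_t)\le \E\norm{C-\E C}^2/(2\tau^2)\le R^2\log M/(2A^2\Delta^2)$ holds for every $d$, so what you actually need is $R/\Delta$ bounded by a constant multiple of $A$; the hypothesis $d=O(\log M)$ plays no role there.

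\emph{The genuine gap is Step 3, which contradicts your own Step 1.} Your Fano bound holds at every $\tau\ge A\Delta/\sqrt{\log M}$, in particular at the right endpoint $t_1$ of $\mathcal I_{\mathrm{crit}}$. So $\mmse(C\mid X_{t_1})\ge c\Delta^2$: the posterior mean is not within $o(\Delta)$ of an atom there, and nothing collapses inside the window. By that same bound, concentration can only occur at smaller $\tau$, outside $\mathcal I_{\mathrm{crit}}$.

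Likewise, Step 2 gives $\E\tr\Cov(C-Z\mid X_t)\gtrsim\log M/t^2$ at \emph{both} endpoints. This quantity equals $\mmse_t/(1-t)^2$, and $(1-t)^{-2}$ grows across the window, so it need not drop at all. The ``drop in $\sqrt{\E\tr\Cov}$'' you plan to exploit is therefore unavailable. You also give no mechanism by which a change in a conditional \emph{covariance} forces a change in the conditional \emph{mean} $v_t$.

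A further point: the displacement inequality must be taken along ODE characteristics $\dot Y_t=v_t(Y_t)$, $Y_t\sim p_t$. Along the interpolant $X_t=(1-t)Z+tC$ one does not have $\tfrac{d}{dt}v_t(X_t)=a_t(X_t)$.

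What is missing is an exact covariance-to-velocity link plus a two-sided comparison across the window. Two identities supply it:
\begin{itemize}
\item The orthogonal decomposition of $C-Z$ gives $\norm{v_t}_{L^2(p_t)}^2=\E\norm{C}^2+d-\mmse_t/(1-t)^2$.
\item Differentiating $v_t=(m_t-x)/(1-t)$ along the flow gives $a_t=(1-t)^{-1}\tfrac{d}{dt}m_t(Y_t)$, with $\norm{m_t}_{L^2(p_t)}^2=\E\norm{C}^2-\mmse_t$.
\end{itemize}
Center $C$ first; this changes neither $a_t$ nor $\mmse_t$.

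Now set $\rho=(1-t_0)^2/(1-t_1)^2>1$ (about $4$ when $\tau\ll1$), fix $\theta\in(\rho^{-1},1)$, and split on whether $\mmse_{t_1}\le\theta\,\mmse_{t_0}$.
\begin{itemize}
\item If it is, $m_t(Y_t)$ moves by $\gtrsim\Delta^2/R$ in $L^2$. The prefactor $(1-t)^{-1}\ge\sqrt{\log M}/(2A\Delta)$ on the window then yields $\gtrsim\sqrt{\log M}$.
\item If it is not, $\mmse_t/(1-t)^2$ increases by $\gtrsim\log M/A^2$, so $\norm{v_t}_{L^2(p_t)}^2$ decreases by that much. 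Since $\norm{v_t}_{L^2(p_t)}\le(\E\norm{C}^2+d)^{1/2}=O(\sqrt{\log M})$ (for $R=O(\sqrt{\log M})$), the reverse triangle inequality gives $\norm{v_{t_1}(Y_{t_1})-v_{t_0}(Y_{t_0})}_{L^2}\gtrsim\sqrt{\log M}$. This bound on $\norm{v_t}$ is exactly where $d=O(\log M)$ is needed.
\end{itemize}

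Your proposal contains neither identity and never uses $d=O(\log M)$ in Step 3. The only other repair would be to extend the integration interval past the concentration threshold, but that proves a statement about a different interval than $\mathcal I_{\mathrm{crit}}$.
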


\begin{proposition}[Euler local barrier]\label{prop:eulerlocal}
$W_2((I + hv_t)\push p_t, p_{t+h}) \gtrsim h^2 \log M/\Delta$ on the critical window.
\end{proposition}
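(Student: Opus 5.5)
The plan is to avoid comparing the Euler map with the exact flow map pointwise. Pointwise map distances give only \emph{upper} bounds on $W_2$. Instead I would lower-bound $W_2$ through a scalar statistic whose value under the two pushforwards can be computed exactly. The second moment is the natural choice. For any two laws, $W_2(\mu,\nu)\ge \bigl|\,(\E_\mu\|Y\|^2)^{1/2}-(\E_\nu\|Y\|^2)^{1/2}\bigr|$, because $\|\cdot\|_{L^2}$ of a coupled pair is $1$-Lipschitz in the coupling. I therefore take $Y=X_t+hv_t(X_t)$ with law $(I+hv_t)\push p_t$, and $Y'=X_{t+h}$ with law $p_{t+h}$, and compute their second moments.

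\emph{Step 1 (exact moment gap).} With the linear interpolant, $X_{t+h}=X_t+h(C-Z)$ holds exactly, so
\[
\E\|X_{t+h}\|^2=\E\|X_t\|^2+2h\,\E\langle X_t,C-Z\rangle+h^2\E\|C-Z\|^2 .
\]
Since $v_t(x)=\E[C-Z\mid X_t=x]$, the tower property gives $\E\langle X_t,C-Z\rangle=\E\langle X_t,v_t(X_t)\rangle$, so the first-order terms of $\E\|Y\|^2$ and $\E\|Y'\|^2$ coincide. This leaves
\[
\E\|Y'\|^2-\E\|Y\|^2=h^2\bigl(\E\|C-Z\|^2-\E\|v_t\|^2\bigr)=h^2\,\E\tr\Cov(C-Z\mid X_t).
\]
The Euler step thus loses exactly the conditional variance of the velocity. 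On the critical window, Proposition~\ref{prop:accel} bounds this below by $c\log M/t^2$, via $\mmse(C\mid X_t)\ge c\Delta^2$ and the posterior weights $w_i\propto e^{\lambda_t\langle x,c_i\rangle}$.

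\emph{Step 2 (divide by the scale).} Writing $a^2-b^2=(a-b)(a+b)$ gives
\[
W_2\ge h^2\,\E\tr\Cov(C-Z\mid X_t)\,/\,\bigl(\|Y\|_{L^2}+\|Y'\|_{L^2}\bigr).
\]
Both norms are at most $\|X_t\|_{L^2}+O(h)$, which is of order $(t^2R^2+(1-t)^2d)^{1/2}$ for codebook radius $R$.

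\emph{Step 3 (calibrate on the window).} This is the step I expect to be the main obstacle: converting the ratio into $\log M/\Delta$. The plan is to use the window relation $\sigma_t/t\asymp A\Delta/\sqrt{\log M}$ together with $d=O(\log M)$. These give $(1-t)\sqrt d\lesssim t\Delta$, so the noise part of $\|X_t\|_{L^2}$ is $\lesssim t\Delta$. For the codebook part I would work with a centered, equal-radius codebook normalized so that $R\lesssim\Delta$, which is the regime in which Proposition~\ref{prop:accel} is stated. If only $R\gg\Delta$ is available, I would fall back on the projection onto the span of the $c_i$ and center at the posterior mean, replacing $\|Y\|^2$ by $\|Y-\E C\|^2$ restricted to that span. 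This discards the radial mass that does not contribute to the variance. The bound then becomes $h^2(\log M/t^2)/(t\Delta)$, which is $\gtrsim h^2\log M/\Delta$ on the window, where $t$ is bounded away from $0$ for $M$ large. If the denominator cannot be controlled this way, the alternative is a $1$-Lipschitz test function $f(x)=\min_i\|x-tc_i\|$ measuring distance to the scaled codebook. Its expectation also differs at order $h^2$ through the same conditional variance, and it automatically has scale $\Delta$.
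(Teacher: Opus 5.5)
The paper states Proposition~\ref{prop:eulerlocal} without proof, so your argument has to stand on its own. Steps~1 and~2 are correct. Step~1 is in fact exact: the interpolant is linear and $\|y\|^2$ is quadratic, so $\E\|X_{t+h}\|^2-\E\|X_t+hv_t(X_t)\|^2=h^2\,\E\tr\Cov(C-Z\mid X_t)=h^2\,\mmse(C\mid X_t)/(1-t)^2$ with no remainder. You can also simplify the denominator. Because $\E Y=\E Y'=(t+h)\E C$, recentring both variables at the fixed point $a=(t+h)\E C$ leaves the gap unchanged. Since the gap is nonnegative, $\|Y-a\|_{L^2}\le\|Y'-a\|_{L^2}=\bigl((t+h)^2\tr\Cov(C)+(1-t-h)^2d\bigr)^{1/2}$, so no $O(h)$ bookkeeping is needed. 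With the window relation and $d=O(\log M)$, what your method actually proves is
\[
W_2\bigl((I+hv_t)\push p_t,\,p_{t+h}\bigr)\;\gtrsim\;\frac{h^2\log M}{t^2\bigl((t+h)\,(\tr\Cov(C))^{1/2}+t\Delta\bigr)}.
\]
Your remark that $t$ stays away from $0$ is unnecessary, since $t\le 1$ only helps, and it is false when $\Delta\gg\sqrt{\log M}$.

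The gap is Step~3. Reaching $h^2\log M/\Delta$ from this bound requires $\tr\Cov(C)\lesssim\Delta^2$: the codebook's global spread must be comparable to its minimum separation. This is not a normalization, because $R/\Delta$ is scale-invariant and the scale is pinned anyway by $Z\sim\N(0,I_d)$. Nor is it implied by $\Delta$-separation and $d=O(\log M)$. Take several copies of a good codebook placed far apart (on a large sphere, if you want equal radii). Posterior confusion on the window then stays within a copy, so the input $\mmse(C\mid X_t)\ge c\Delta^2$ and the local Euler defect are unchanged, but your bound shrinks by the factor $\Delta/R$.

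Your fallbacks do not repair this.
\begin{itemize}
\item Projecting onto the span of the $c_i$ is vacuous once $M\gg d$: the atoms span $\R^d$, and $C$ lies in that span anyway.
\item Recentring at the codebook mean $\E C$ is free but changes nothing for a centred code.
\item Recentring at the posterior mean $\E[C\mid X_t=y]$ is not a fixed shift, so neither the exact identity of Step~1 nor the $1$-Lipschitz reverse triangle survives. The map $y\mapsto y-\tau\,\E[C\mid X_t=y]$ has Jacobian of order $\tau t\,\Cov(C\mid X_t)/(1-t)^2$, which is of order $\log M$ on the window.
\item The test function $f(x)=\min_i\|x-tc_i\|$ is $1$-Lipschitz but has concave kinks on the Voronoi faces. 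The negative singular part of its Hessian sits exactly where $\Cov(C-Z\mid X_t)$ concentrates, namely where the posterior splits between neighbours. So the second-order term $\tfrac{h^2}{2}\E\tr\bigl(\nabla^2 f\,\Cov(C-Z\mid X_t)\bigr)$ has no controlled sign, and the expansion itself is not justified for a nonsmooth $f$.
\end{itemize}

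To close the proof you have two options. You can add $\tr\Cov(C)\lesssim\Delta^2$ as an explicit hypothesis; it holds, for example, for spherical codes whose radius is comparable to $\Delta$. Alternatively, replace $\|y-a\|^2$ by a smooth test function that is $1$-Lipschitz and uniformly convex with modulus $\gtrsim 1/(t\Delta)$ on regions of diameter $O(t\Delta)$ carrying most of $p_t$, and control the transition regions between them. That localization needs cluster structure at scale $\Delta$ and is a genuine additional step.
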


\begin{proposition}[two-step stiffness escape]\label{prop:twostep}
For a centered, transitive, equal-radius codebook, the two-step scheme $0 \to 1 - h \to 1$ reaches the target as $h \downarrow 0$ with $\Lip \gtrsim \delta^2/h$ (divergent off-path stiffness).
\end{proposition}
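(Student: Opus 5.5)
The plan is to compute the two-step map explicitly. Throughout, the velocity field is read in $x$-prediction form, $v_t(x)=(m_t(x)-x)/(1-t)$, and every Euler step is taken as a jump to the predicted interpolant.

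\emph{Step 1 (first step).} At $t=0$ the state is $X_0=Z$, which is independent of $C$. Hence $m_0(x)=\E C=0$, because the codebook is centered. The first step from $0$ to $1-h$ therefore sends $x$ to
\[
x_{1-h}=x+(1-h)v_0(x)=h\,x .
\]
So the intermediate state is $hZ$. It is a pure rescaling with $\Lip=h$, and it carries no information about $C$. This is already far from the true marginal $hZ+(1-h)C$, but that does not matter for the conclusion.

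\emph{Step 2 (second step).} From $1-h$ to $1$, the Euler step with $v_{1-h}$ lands exactly on the posterior mean:
\[
x_1=x_{1-h}+h\,v_{1-h}(x_{1-h})=m_{1-h}(x_{1-h}).
\]
For the equal-radius codebook this posterior mean is
\[
m_{1-h}(y)=\sum_i \mathrm{softmax}_i\bigl(\lambda\inner{y}{c_i}\bigr)\,c_i,\qquad \lambda=\lambda_{1-h}=\frac{1-h}{h^2}.
\]
The composite map is therefore $T_h(z)=\sum_i \mathrm{softmax}_i\bigl(\tfrac{1-h}{h}\inner{z}{c_i}\bigr)c_i$. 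As $h\downarrow0$ the inverse temperature diverges. Off the normal-fan boundaries, which are $\gauss$-null, the map $T_h(z)$ converges to $c_{\argmax_i\inner{z}{c_i}}$. By dominated convergence (all values lie in $\conv C$), $T_h\to u_B$ in $L^1$, where $B$ is the normal-fan partition of Lemma~\ref{lem:normalfan}.

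\emph{Step 3 (correct masses).} The codebook is transitive, so a group of isometries permutes the atoms transitively. Each such isometry preserves $\gauss_n$ and maps normal-fan cells to normal-fan cells. Hence every cell has mass $1/M$, and $T_h\push\gauss\to\mu$ weakly. This is the claim that the scheme ``reaches the target''.

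\emph{Step 4 (stiffness).} Differentiate the composite map:
\[
DT_h(z)=\tfrac{1-h}{h}\,\Cov_{w(z)}(C),
\]
where $\Cov_{w(z)}(C)$ is the covariance of the atoms under the softmax weights $w(z)$. Now take $z$ on an interior point of a facet between adjacent cells $i,j$, away from the skeleton. There $w_i=w_j\to\tfrac12$ and all other weights are exponentially small. The covariance then tends to $\tfrac14(c_i-c_j)(c_i-c_j)^\top$. Evaluated along $c_i-c_j$, this gives
\[
\norm{DT_h(z)}_{\mathrm{op}}\ \ge\ \frac{(1-h)\,\norm{c_i-c_j}^2}{4h}\,(1-o(1)).
\]
Taking $\delta$ to be the separation of an adjacent pair, this yields $\Lip(T_h)\gtrsim\delta^2/h\to\infty$. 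This stiffness sits off the data path, concentrated on the interfaces, which is consistent with Theorem~\ref{thm:gamma}.

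\emph{Main obstacle.} I expect the main difficulty to be the off-facet control in Step 4. The lower bound must hold at a genuine facet point, where exactly two weights dominate, with the remaining weights provably $e^{-c/h}$ small. This needs a facet that is active with a positive gap in logits to every third atom. Adjacent pairs on the normal fan provide such a facet, so the first thing I would do is fix such a pair and a compact piece of the relative interior of its facet, and then do the uniform estimates there.
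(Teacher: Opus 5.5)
Your proposal is correct and gives a complete proof. The paper states Proposition~\ref{prop:twostep} without proof: Appendix~\ref{app:dynamics} only records the $x$-prediction velocity $v_t=(m_t-x)/(1-t)$ and the softmax posterior with $\lambda_t=t/(1-t)^2$, so there is no argument to compare against. Your reduction of the two Euler steps to $T_h(z)=\sum_i\mathrm{softmax}_i\bigl(\tfrac{1-h}{h}\inner{z}{c_i}\bigr)c_i$, with $DT_h=\tfrac{1-h}{h}\Cov_{w(z)}(C)$, is exactly what that setup supports. Since $\Cov_{w}(C)\preceq\E_w[CC^\top]$, the same formula also gives the matching upper bound $\Lip(T_h)\le\tfrac{1-h}{h}\max_k\norm{c_k}^2$, so the $1/h$ rate is exact.

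Two small points.

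First, you tacitly use equal weights $\pi_i=1/M$. You need them to drop the $\log\pi_i$ offsets from the softmax, and to identify the limiting normal-fan masses $1/M$ with $\mu$. This is presumably what ``transitive'' means here: the symmetry group acts transitively on the atoms and preserves $\mu$. Centering is what forces those isometries to fix the origin (the common centroid of the atoms), hence to be linear and $\gauss$-preserving.

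Second, the obstacle you flag in Step~4 only matters for the sharp constant. At $z=0$ every weight is $1/M$, so by centering
\[
DT_h(0)=\tfrac{1-h}{h}\cdot\tfrac1M\sum_k c_kc_k^\top .
\]
Along $u=(c_i-c_j)/\norm{c_i-c_j}$ its quadratic form is at least $\tfrac{1-h}{h}\cdot\tfrac{\norm{c_i-c_j}^2}{2M}$, because $\langle c_i,u\rangle^2+\langle c_j,u\rangle^2\ge\tfrac12\langle c_i-c_j,u\rangle^2$. This holds for every pair $i\ne j$, so it already gives $\Lip(T_h)\gtrsim\delta^2/h$ without exponential tail estimates or any adjacency requirement. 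Your facet-interior computation is what upgrades the constant to the $M$-free value $1/4$.
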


\begin{corollary}[accuracy--depth--stiffness tradeoff]\label{cor:tradeoff}
No terminal lower bound on $W_2(\hat{p}_K, \mu)$ depends on $(M, \Delta, K)$ alone.
With $F = F_K \circ \cdots \circ F_1$, $\prod_k \Lip(F_k) \gtrsim \Delta^{p+1}\sqrt{\log M}/\eta^p$, so a per-step cap $S$ forces $K \ge \log(c\Delta^{p+1}\sqrt{\log M}/\eta^p)/\log S$.
\end{corollary}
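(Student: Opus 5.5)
The plan is to prove the two claims of Corollary~\ref{cor:tradeoff} separately. The first (no $(M,\Delta,K)$-only lower bound) is an existence statement, settled by a construction. The second (the product-Lipschitz bound and the induced step count) combines Theorem~\ref{thm:gamma}, Theorem~\ref{thm:dim}, Lemma~\ref{lem:barrier-calibration} and submultiplicativity of Lipschitz constants.

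\emph{Step 1 (no depth-only lower bound).} I would invoke Proposition~\ref{prop:twostep}. For a centered, transitive, equal-radius $\Delta$-separated codebook, the two-step scheme $0\to1-h\to1$ has $W_2(\hat p_2,\mu)\to0$ as $h\downarrow0$. Hence for fixed $(M,\Delta)$ and $K=2$, and therefore for every $K\ge2$ by padding with identity steps, $\inf W_2(\hat p_K,\mu)=0$. Any lower bound that depends only on $(M,\Delta,K)$ must therefore vanish. The only price of the construction is the divergent stiffness $\Lip\gtrsim\delta^2/h$, which is exactly what Step 2 quantifies.

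\emph{Step 2 (stiffness lower bound).} Let $F=F_K\circ\cdots\circ F_1$ with $W_p^p(F\push\gauss_n,\mu)\le\eta^p$, and set $\Lambda:=\Lip(F)\le\prod_k\Lip(F_k)$. From (A.39) we have $\int V(F)\,d\gauss_n\le\eta^p$. Lemma~\ref{lem:barrier-calibration} then gives $\int\rho\,|\nabla(\overline\phi_i\circ F)|\,dx\le\Lambda\eta^p$. Running the liminf argument of Proposition~\ref{prop:gamma-liminf} (equivalently, applying (A.2) in the regime $\Lambda\to\infty$, which is the relevant one since $\eta\to0$ forces $\Lambda\to\infty$) yields $\Lambda\eta^p\gtrsim\Jint_{p,\gauss_n}(\pi;C)$. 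To bound $\Jint$ from below, I use the barrier estimate from the proof of Lemma~\ref{lem:barrier-calibration}, namely $\kappa_{ij}\ge(\Delta/2)^{p+1}/(p+1)$. Combined with Gaussian isoperimetry (Theorem~\ref{thm:dim}, part (i)), this gives $\sum_{i<j}P_{\gauss_n}(B_i,B_j)\gtrsim\sqrt{\log M}$ for equal masses. Together these give $\Jint\gtrsim\Delta^{p+1}\sqrt{\log M}$, and hence $\prod_k\Lip(F_k)\ge\Lambda\gtrsim\Delta^{p+1}\sqrt{\log M}/\eta^p$.

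\emph{Step 3 (depth).} If $\Lip(F_k)\le S$ for all $k$, then $S^K\ge c\Delta^{p+1}\sqrt{\log M}/\eta^p$, and taking logarithms gives the stated bound on $K$.

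\emph{Main obstacle.} The difficulty is making Step 2 non-asymptotic. Theorem~\ref{thm:gamma} is a limit statement, and the phase masses of $F$ only approximate $\pi$: by (A.40) they match up to $O(\eta^p/\Delta^p)$. I would first try a direct quantitative route. The plan is to apply the coarea formula to $g_i=\overline\phi_i\circ F$ and pick a level $\lambda\in(0,\kappa_*)$ whose superlevel set has Gaussian measure within $O(\eta^p\Delta^{-p})$ of $\pi_i$. The weighted perimeter of that superlevel set is at most $\Lambda\eta^p/\kappa_*$, and Gaussian isoperimetry applied at the perturbed mass then gives the bound with a constant $c$ that is uniform for $\eta\le c'\Delta$. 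If the perturbation of the isoperimetric profile turns out to be delicate for large $M$, the fallback is to state the inequality for $\Lambda$ large enough, which is the content of the $\gtrsim$ in the statement.
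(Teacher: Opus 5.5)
Your Steps 1--3 follow the paper's own implicit argument. The paper gives no separate proof of Corollary~\ref{cor:tradeoff}: it reads the first claim off Proposition~\ref{prop:twostep}, and the second off Theorem~\ref{thm:gamma} combined with the Gaussian-isoperimetric lower bound of Theorem~\ref{thm:dim}(i) and $\Lip(F)\le\prod_k\Lip(F_k)$. Using $\kappa_{ij}\ge(\Delta/2)^{p+1}/(p+1)$ together with the \emph{unweighted} perimeter bound is the right way to make the paper's ``Combined'' line precise. Your fallback, that the bound holds once $\eta$ is small enough to force $\Lambda$ into the $\Gamma$-limit regime, is exactly what the paper's $\gtrsim$ carries. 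Your caution is warranted: the single-map inequality $\Lambda W_p^p\ge\Jint$ used in Appendix~\ref{app:thm3} is only asymptotic. For two wells on the line, the ramp $T(x)=\zeta(\Lambda x)$ gives $\Lambda W_p^p=\int V(\zeta(\tau))\,\varphi(\tau/\Lambda)\,d\tau<\Jint$ for every finite $\Lambda$.

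The step that would not work as written is your direct quantitative route. You bound each phase separately by $P_{\gauss_n}(\{g_i<\lambda\})\le\Lambda\eta^p/\kappa_*$, which charges the whole budget $\Lambda\int V(F)\,d\gauss_n$ to every $g_i$. A single cell of mass $\approx 1/M$ has isoperimetric cost only $I(1/M)\approx\sqrt{2\log M}/M$, where $I=\varphi\circ\Phi^{-1}$. So you would get $\Lambda\gtrsim\kappa_*\sqrt{\log M}/(M\eta^p)$, a factor $M$ too weak, and summing the $M$ per-cell bounds loses the same factor.

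The missing idea is the localization that the liminf proof gets from disjointness of the limit interfaces. Restrict to levels in $(0,\kappa_*/2)$. There the triangle inequality for $d_V$ makes the transition bands $\{0<g_i<\kappa_*/2\}$ pairwise disjoint, so $\sum_i\int_0^{\kappa_*/2}P_{\gauss_n}(\{g_i<\lambda\})\,d\lambda\le\Lambda\eta^p$. Averaging over $\lambda\in(\kappa_*/4,\kappa_*/2)$ then gives one common level with $\sum_i P_{\gauss_n}(E_i)\le 4\Lambda\eta^p/\kappa_*$. Here $E_i=\{g_i<\lambda\}$ coincides with $\{q(F)=i\}$ on $\{\dist(F,C)\le\Delta/4\}$.

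The mass perturbation you worried about is then harmless for all $M$. Concavity and symmetry of $I$ give $I(m)\ge\min\{1,Mm,M(1-m)\}\,I(1/M)$, hence $\sum_i I(\gauss_n(E_i))\ge(1-\epsilon)\,M\,I(1/M)$ with $\epsilon=\sum_i|\gauss_n(E_i)-1/M|=O(\eta^p/\Delta^p)$. With these two changes the bound holds for every $\Lambda$ as soon as $\eta\le c'\Delta$. That is stronger than what the paper proves.
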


\section{Synthetic Verification of Theorems~\ref{thm:gamma}--\ref{thm:dim}}\label{app:synth}

\emph{Interface $\Gamma$-limit (Theorem~\ref{thm:gamma}).}
For an explicit equal-mass point-well construction in $\R^1$, the optimal $W_2^2$ follows the $1/\Lambda$ law: the log--log slope of $W_2^2$ against the achieved Lipschitz constant is $-0.97, -0.95, -0.93, -0.90$ for $M = 4, 8, 16, 32$ (Figure~\ref{fig:synth}a), approaching the predicted $-1$ as $\Lambda$ grows.
At the converged scale the renormalized ratio $\Lambda \cdot W_2^2 / [3(2\ln 2 - 1) \Jint]$ matches the predicted logistic constant $3(2\ln 2 - 1) = 1.1589$ to within a few percent.

\emph{Dimension phase diagram (Theorem~\ref{thm:dim}).}
At $n = 256$, the interface profile $\mathcal{P}_{n,M}$ tracks $\sqrt{\log M}$; at fixed $n \in \{1, 2, 3, 4\}$ the $M^{1/n}$ growth exponents are $1.14, 0.57, 0.38, 0.285$, matching the predicted $1/n$.

\begin{figure}[h]
\centering
\includegraphics[width=\textwidth]{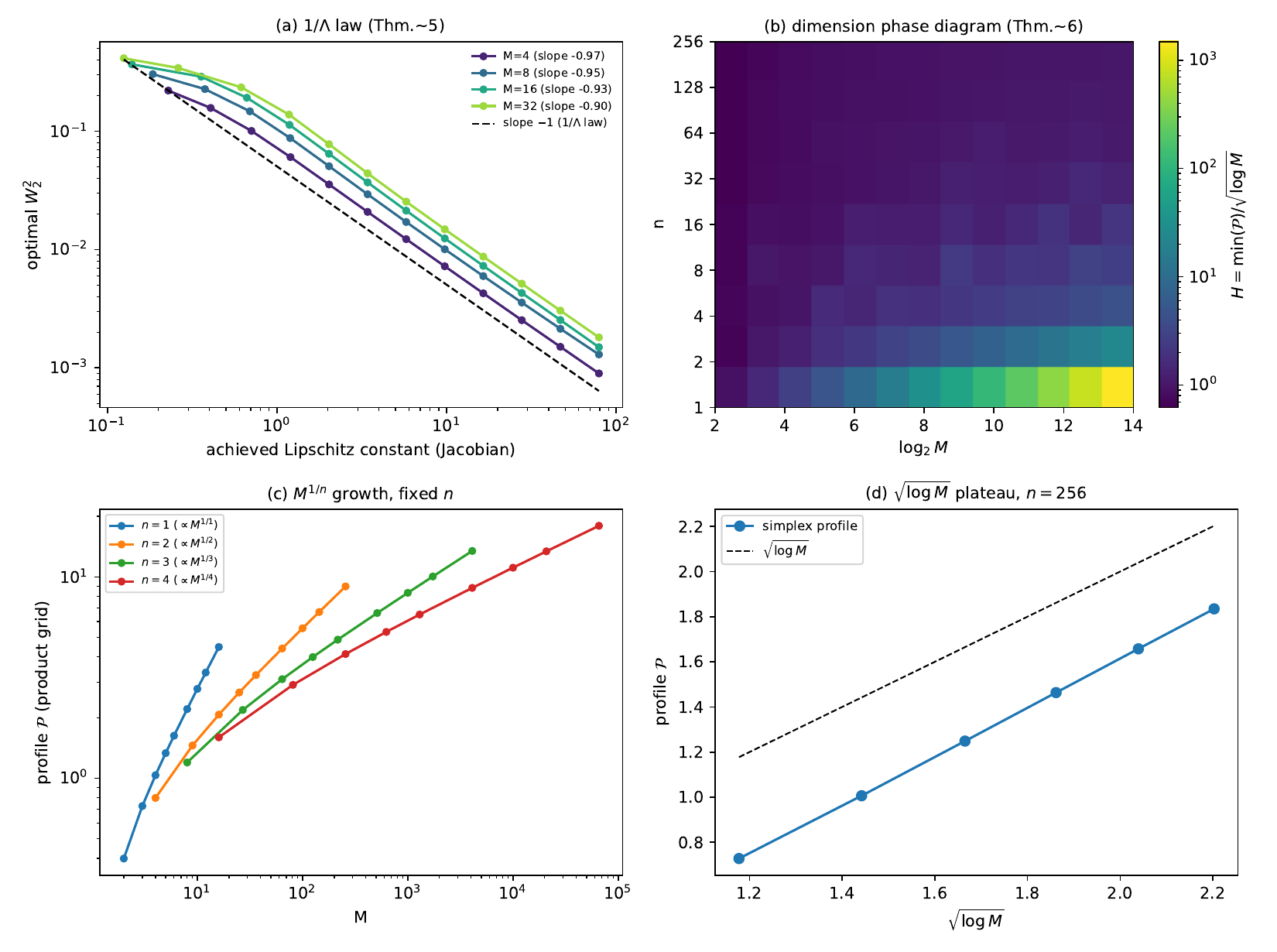}
\caption{\textbf{Synthetic verification of the separated-regime theorems.} \textbf{(a)} The analytic equal-mass construction realizes the $1/\Lambda$ law of Theorem~\ref{thm:gamma} (log--log slope $-0.90$ to $-0.97$, approaching $-1$). \textbf{(b)} The normalized interface profile $H = \min(\mathcal{P})/\sqrt{\log M}$ across dimension $n$ and mode count $M$. \textbf{(c)} At fixed $n \in \{1,2,3,4\}$ the profile grows as $M^{1/n}$. \textbf{(d)} At $n = 256$ the profile tracks $\sqrt{\log M}$, confirming Theorem~\ref{thm:dim}.}
\label{fig:synth}
\end{figure}

\section{Admissible Hierarchy: Full Definition and Proof of Theorem~\ref{thm:hier}}\label{app:thm3}

\begin{definition}[admissible categorical hierarchy]\label{def:adm}
A depth-$B$ admissible representation of $\mu = \sum_\ell \pi_\ell \delta_{c_\ell}$ consists of:
\textup{(1)}~a tree disintegration $\pi_u = \pi_v \pi_{u|v}$, $\pi_v > 0$;
\textup{(2)}~independent fresh source blocks $Z_b \sim \sigma_b$;
\textup{(3)}~semantic local transport $T_v$ with exact conditional phases, charged by Theorem~\ref{thm:gamma};
\textup{(4)}~metric normalization $\alpha \norm{a_{v,u} - a_{v,u'}} \le \dist(C_u, C_{u'}) \le \beta \norm{a_{v,u} - a_{v,u'}}$;
\textup{(5)}~immutable prefix commitment;
\textup{(6)}~decoder with additive sensitivity.
\end{definition}

\paragraph{Path-law induction.}
By induction on depth $b$: at $b = 0$, $V_0 = \varnothing$ and $\pi_\varnothing = 1$.
Assuming $\Prob(V_{b-1} = v) = \pi_v$ for all $v \in \mathcal{V}_{b-1}$, the event $\{V_b = u\}$ requires $\{V_{b-1} = v\}$ (where $v$ is the parent of $u$) and $T_{v,\Lambda_v}(Z_b) \in Q_{v,u}$.
By independence of $Z_b$ from the prefix and the exact phase constraint:
\[
\Prob(V_b = u) = \Prob(V_{b-1} = v) \cdot \sigma_b(T_{v,\Lambda_v}^{-1}(Q_{v,u})) = \pi_v \cdot \pi_{u|v} = \pi_u.
\]
Hence $X^* = c_{V_B} \sim \mu$.

\paragraph{Upper bound (i).}
By decoder sensitivity~(6):
\[
\|D_{\bm{v}}(Y_1, \ldots, Y_B) - c_{V_B}\|
\le \textstyle\sum_b S_{v_{b-1}} \dist(Y_b, \mathcal{A}_{v_{b-1}}).
\]
Minkowski's inequality over rounds gives
$W_p(\hat\mu, \mu) \le \sum_b (A_b/\Lambda_b)^{1/p}$
where $A_b = \sum_{|v|=b-1} \pi_v S_v^p \Jint_v$.

\paragraph{Continuous lower bound (iii).}
Any single continuous $\Lambda$-Lipschitz $F\!:\!\R^N\!\to\!\R^d$ satisfies
$\Lambda \cdot W_p^p(F\push\bar\sigma, \mu) \ge \Jint_{\mathrm{global}}$ by Theorem~\ref{thm:gamma}, and
$\Jint_{\mathrm{global}} \gtrsim \Delta^{p+1}\sqrt{\log M}$ by Theorem~\ref{thm:dim}.

\paragraph{Discontinuity (iv).}
By Lemma~\ref{lem:escape}: the factorized generator $F(z_1, \ldots, z_B) = c_{V_B(z)}$ outputs at least two atoms, hence is discontinuous on the connected domain $\R^{n_1 + \cdots + n_B}$, hence not Lipschitz.
The lower bound (iii) does not apply.

\section{Recent Continuous Few-Step Text Models: Full Case Analysis}\label{app:recent}

Throughout, a \emph{continuous latent} means a smooth embedding (nearby latents decode to related tokens, as in ELF), \emph{not} the categorical probability simplex whose vertices are one-hot token codes.

\paragraph{Case analysis: FMLM.}
Flow Map Language Models~\citep{fmlm2026} appear to refute us directly: a deterministic flow map that generates text in a single step.
They do not, because FMLM is an instance of our categorical escape rather than smooth continuous transport.
The flow interpolates between Gaussian noise and a \emph{one-hot} encoding on the vocabulary simplex, and the population-optimal denoiser output is the posterior probability over tokens followed by an $\argmax$ readout (their Lemma~3.1, cross-entropy trained).
The terminal map is thus the discontinuous categorical generator of Lemma~\ref{lem:escape}: ``continuous denoising'' toward one-hot vertices with $\argmax$ \emph{is} categorical commitment ($\CCI = 1$), not transport of a smooth latent.
Its reported open-ended quality (PPL $129$ at $1$ step, $76$ at $4$ steps on OpenWebText) is consistent with this: fluency comes from the categorical target geometry.
This placement is not merely conceptual.
$\CCI = 1$ holds by construction (the $\argmax$ readout of their Lemma~3.1 \emph{is} categorical commitment), and we measure DABI directly on the official FMLM-B-OWT checkpoint~\citep{fmlm2026}: under the same margin-normal probe a boundary-aligned perturbation flips ${\approx}50\%$ of tokens ($[42\%, 57\%]$) versus ${\approx}0\%$ for a norm-matched isotropic one (label-DABI $3{,}822\times$, $95\%$ CI $[1591, 24837]$; the isotropic response is at the measurement floor, so $\DABI \gg 10^{4}$).
This is the same sharp-readout signature as the AR and masked decoders.
Because FMLM is an encoder-less one-hot-simplex generator, there is no clean-text reconstruction operating point as for the autoencoder codecs, so the probe is applied around the model's own predictions; the readout sharpness is a property of the output projection regardless.
We use this only to place FMLM by readout sharpness and $\CCI$, \emph{not} as a reconstruction-accuracy or realized-residual test of Theorem~\ref{thm:noncommit}.
The sharp margin-normal readout coexists with successful one-step generation precisely because the one-hot target gives \emph{large} margins ($\delta^* = \Theta(1)$; the decoder-geometry ablation of Appendix~\ref{app:extended}): the realized residual stays in-cell, so categorical escape lives in the target geometry, not in a non-sharp readout.

\paragraph{The remaining field.}
Among genuinely smooth-continuous latents, every deterministic-ODE system needs \emph{many} steps: LangFlow~\citep{langflow2026} adopts a deterministic ODE for distillability and pays $128$--$1024$ steps (its authors note ``stochasticity inherently resists flow-based distillation''); Cosmos~\citep{cosmos2025} runs a deterministic Euler ODE at $200$ steps with no few-step results.
Every few-step continuous system instead injects stochasticity or commits categorically: ELF~\citep{he2026elf} reports SDE $\gg$ ODE in the few-step regime; CoLa-DLM~\citep{cola2026} pairs an SDE latent prior with an autoregressive categorical decoder; FastDiSS~\citep{fastdiss2026} uses stochastic ancestral sampling on \emph{conditional} seq2seq with a categorical readout; DiLaDiff~\citep{diladiff2026} consistency-distills only the latent prior, with text realized by a categorical decoder.
Independently, CoDAR~\citep{codar2026} names token rounding ``the primary bottleneck'' of continuous diffusion LMs and escapes via a categorical autoregressive decoder with a temperature knob, an external rediscovery of the readout-sharpness mechanism.
Positioned by readout geometry rather than by sampler, these systems are uniform along the dimension the reviewer's taxonomy cares about: all decode through a \emph{sharp categorical} head (nearest-neighbor embedding rounding for Cosmos/LangFlow/FastDiSS, an $\argmax$/autoregressive head for CoLa-DLM/DiLaDiff/CoDAR/FMLM), so all sit at high $\DABI$ and high $\CCI$.
They differ only in how they avoid the deterministic few-step regime our bound forbids: many ODE steps (Cosmos $200$, LangFlow $128$--$1024$), stochastic re-injection (ELF, FastDiSS), or an autoregressive/categorical spine (CoLa-DLM, DiLaDiff, CoDAR, FMLM). This is exactly the choice between the two escape columns of Table~\ref{tab:recent}.

\paragraph{Case analysis: training-time mitigations (FastDiSS).}
FastDiSS~\citep{fastdiss2026} is a continuous-latent diffusion language model (Gaussian diffusion over token embeddings) with two \emph{training-time} regularizers: Self-conditioning Perturbation (SCP), which noises the self-conditioning signal during training so the network is robust to the noisier estimates it sees at inference, and Model-aware Noise Scaling (MANS), which allocates more noise to tokens the model already reconstructs.
Neither changes the readout: discrete tokens are obtained by nearest-neighbor rounding to the embedding table, i.e.\ categorical commitment.
SCP/MANS are therefore orthogonal to our taxonomy rather than counterexamples: they harden the network against the compounding self-conditioning (posterior-mean) error that is itself a symptom of the non-commitment mechanism, while the final output still arrives through a categorical readout.
(The paper does not state whether its \emph{inference} sampler is deterministic or stochastic; we rely on neither.)

\paragraph{Case analysis: Loopholing.}
Loopholing Discrete Diffusion~\citep{loopholing2026} is a discrete/categorical diffusion model whose subtitle (``deterministic bypass of the sampling wall'') invites reading as a deterministic counterexample.
It is not: each denoising step emits \emph{both} a stochastic one-hot sample \emph{and} a deterministic continuous vector, and the deterministic latent pathway \emph{complements} rather than replaces the categorical spine, carrying soft distributional information across steps (a self-conditioning-style side-channel) so that information is not collapsed into one-hot vectors between steps.
The categorical commitment (escape (i)) is fully retained; ``deterministic'' here refers to feature propagation, not to eliminating the categorical sample.
Loopholing thus sits inside the categorical-escape column, augmented by a continuous side-channel, consistent with rather than contradicting our analysis.

\begin{table}[h]
\centering
\caption{Recent continuous-latent text generators by sampler and latent geometry. The deterministic-ODE $\times$ smooth-continuous $\times$ few-NFE cell is empty across the surveyed literature.}
\label{tab:recent}
\begin{tabular}{lll}
\toprule
sampler $\backslash$ latent & smooth-continuous & one-hot / simplex \\
\midrule
det.\ ODE, few-NFE ($\le 16$) & \emph{(empty --- our claim)} & FMLM (categorical) \\
det.\ ODE, many-NFE ($\ge 32$) & Cosmos ($200$), LangFlow ($128$--$1024$) & --- \\
stochastic SDE, few-NFE & ELF, CoLa-DLM, FastDiSS & discrete dLM family \\
\bottomrule
\end{tabular}
\end{table}

\section{Extended Experiments}\label{app:extended}

\begin{figure}[h]
\centering
\includegraphics[width=0.85\textwidth]{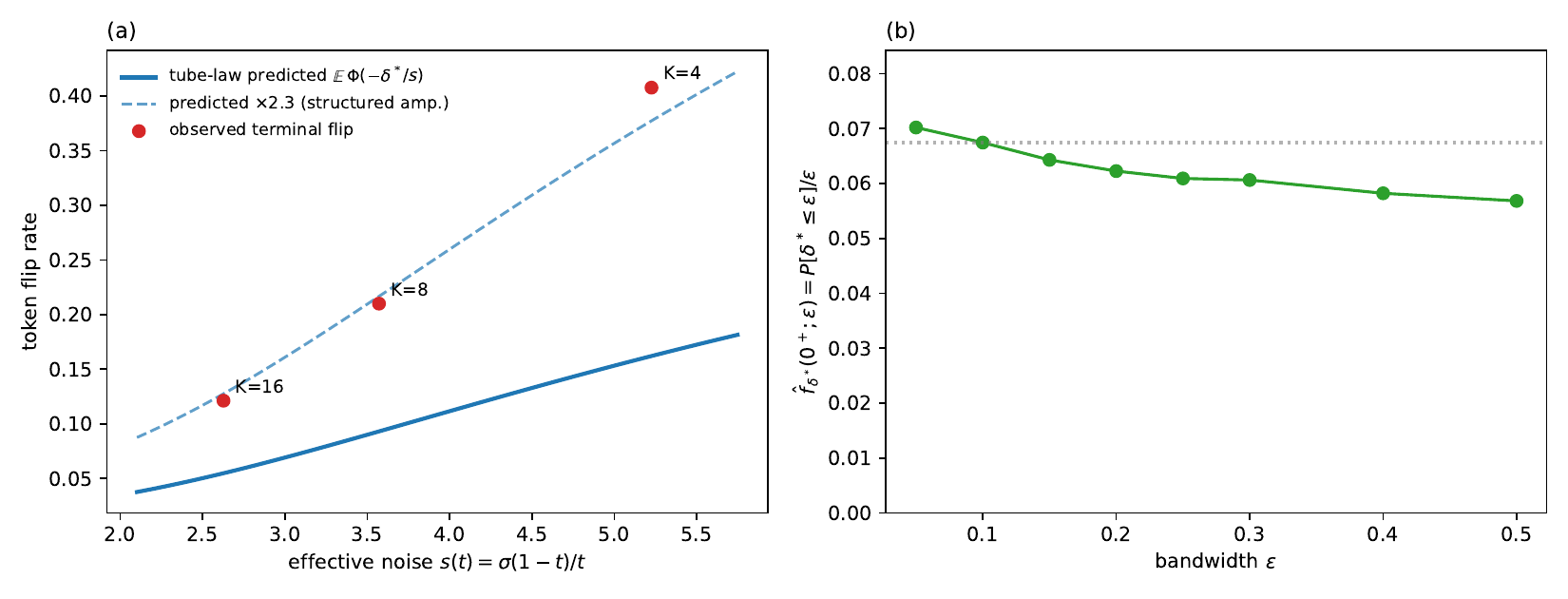}
\caption{\textbf{Tube-law test.}
\textit{(a)} Predicted versus observed terminal flip at $K \in \{4, 8, 16\}$; proportionality ${\approx}2.3\times$.
\textit{(b)} Margin density at zero $\hat f_{\delta^*}(0^+) \approx 0.064$.}
\label{fig:p1}
\end{figure}

\paragraph{LangFlow: second-system replication.}
We repeat the decoder-sensitivity and few-step protocols on the published LangFlow OpenWebText checkpoint~\citep{langflow2026} (embedding-space continuous diffusion, $d = 768$, GPT-2~\citep{radford2019gpt2} rounding readout), using $256$ real Wikitext sequences of length $128$ ($24{,}960$ positions).
\emph{DABI.} Clean decoding is $99.99\%$. At the posterior-mean residual of the terminal step ($K = 3$--$4$), the structured residual flips $59$--$69\%$ of tokens versus ${\approx}0.02\%$ for a norm-matched isotropic control, a flip ratio of $2{,}600\times$ ($95\%$ bootstrap CI $[1{,}800, 3{,}800]$); $\Delta\mathrm{CE}_{\mathrm{struct}} = 40.5\,[39.8, 41.1]$ versus isotropic $3.9\times10^{-4}$. The response is superlinear in the perturbation fraction (flip rate $0.02\% \to 2.3\% \to 59\%$ at $f = 0.25, 0.5, 1.0$ for $K=4$), matching the ELF onset.
\emph{Few-step degeneracy.} Deterministic Euler--EDM sampling is degenerate at low step counts: $K = 1$ attains generated-PPL $10.1$ only through mode collapse (entropy $1.7$ bits, repeated tokens), $K = 4$ produces incoherent text (PPL $1{,}401$), and $K \le 16$ produces repetitive fragments; fluency requires $K \ge 64$. This is an independent instance of the perplexity-collapse hazard: perplexity alone is minimized by degenerate outputs, so it must be read alongside entropy and samples. LangFlow is ODE-native, so we do not transfer the ELF SDE-versus-ODE comparison to it; its escape is step count.

\paragraph{Decoder geometry ablation.}
Fixing the ELF-B decoder and varying the codebook geometry (one-hot FMLM-style, VQ, smooth ELF) isolates the role of $\delta^*$ distribution.
One-hot readout: large margins ($\delta^* = \Theta(1)$), so the realized residual rarely reaches a boundary and the \emph{realized}-residual $\DABI \approx 1$, even though the margin-normal readout is itself sharp (this large-margin geometry is the categorical escape; cf.\ FMLM, Appendix~\ref{app:recent}).
Smooth ELF: $\delta^*$ has a heavy left tail, so the realized residual concentrates near boundaries and $\DABI = 45.7\times$.

\paragraph{Dynamics on the ELF text encoder.}
Integrated acceleration: $4.2\times$ isotropic-Gaussian control, peak $58\times$ at $t \approx 0.2$.
Euler local error: peak at $t \approx 0.2$, confirming Proposition~\ref{prop:eulerlocal}.

\paragraph{Teacher few-step decoding.}
Incoherent output for $K \le 16$; usable at $K \ge 32$--$64$; teacher ceiling at $K = 256$ (PPL $= 36.1$, entropy $= 3.45$).

\paragraph{Prefix conditioning.}
Conditioning on a clean prefix of up to $96/128$ latent tokens: the suffix acceleration peak is unchanged.
The multimodality is per-token ($\mu_0^{\otimes S}$-product); continuous latent factorization cannot escape.

\paragraph{Autoregressive commit ablation: details.}
Llama-2 (7B) on 200 prompts, max 128 tokens.
Hard-$\argmax$: diversity $d_2 = 0.541$, coherent text.
Soft-carry (logit distribution, no commitment): $d_2 = 0.135$, monotonic collapse into repetition.
Delayed commit (commit at position $b$): $d_2$ rises monotonically with $b$; collapse onset matches the branching point predicted by Theorem~\ref{thm:noncommit}.
Norm-matched soft-carry (rescaling soft embeddings to match hard embedding norms): still collapses, confirming the failure is discreteness, not norm.

\paragraph{Masked-diffusion commit ablation: details.}
LLaDA-8B, 128 steps.
Hard (standard): $d_2 = 0.82$, PPL $= 2.4$.
Soft-refresh (carry distribution, no commit, with resampling): $d_2 = 0.47$.
No-commit (carry distribution, no resample): $d_2 = 0.01$ (immediate collapse).
Dream-7B, 128 steps.
Hard-sample: PPL $= 7.3$, effective length $= 99$, cross-PPL $= 7.3$.
Soft-refresh-sample (remove only in-loop commit, keep sampling): PPL $= 38.6$, effective length $= 22$, cross-PPL $= 38.6$.
Matched-pair ablation isolates the commit/no-commit confound from the sample/greedy confound.
\emph{Protocol (Dream).} Generation length $128$; $\{128, 32, 8\}$ steps with timesteps $\mathrm{linspace}(1, \epsilon, K{+}1)$ and global \texttt{number\_transfer}; soft-distribution top-$k = 256$; no classifier-free guidance.
Hard variants commit by greedy $\argmax$ (temperature $0$) at each step; \texttt{*-sample} variants draw from the \emph{same} per-step categorical distribution (temperature $1$), so the matched pair (hard-sample vs.\ soft-refresh-sample) holds the sampling law fixed and removes only the in-loop commit; a norm-matched soft variant additionally controls for embedding-norm changes (all collapse).
Cross-PPL is scored by Qwen-2.5-3B and sequences are truncated at the first EOS to keep pad tokens from contaminating the degeneracy metrics; the decoding path matches Dream's official \texttt{generation\_utils.py} (right-shifted logits, global number transfer).
Table~\ref{tab:commit} collects the three families.

\paragraph{Interventions and metrics.}
The interventions hold the readout and the sampling distribution fixed and toggle only the in-loop categorical step. \emph{Hard} and \emph{hard-sample} commit each step by $\argmax$ (temperature $0$) or by a draw (temperature $1$); \emph{soft-refresh} carries the expected embedding $\E_p[e]$ instead of committing and recomputes it each step; \emph{no-commit} carries the soft embedding with neither resampling nor commitment; the \emph{norm-matched} soft variant rescales the soft embedding to the hard-embedding norm. Beyond perplexity and entropy we report three confound controls: the $4$-gram repetition rate (rep-$4$), distinct-$2$ diversity, and cross-perplexity under an external Qwen-2.5-3B scorer (coherence), together with the commitment fraction CCI-AUC (the path average of the hard-committed share). Removing commitment moves all of them together in the collapse direction: on LLaDA, hard${\to}$soft-refresh gives rep-$4$ $0.08{\to}0.47$, distinct-$2$ $0.82{\to}0.46$, cross-PPL $2.4{\to}18.7$; the Dream matched pair gives rep-$4$ $0.01{\to}0.38$, distinct-$2$ $0.97{\to}0.76$, cross-PPL $7.3{\to}38.6$, effective length $99{\to}22$, with CCI-AUC $0.50{\to}0$ by construction. Generation scripts and per-condition metrics for all three families are released.

\begin{table}[h]
\centering
\caption{\textbf{Commit ablations.} Holding the readout fixed and removing in-loop categorical commitment collapses generation across autoregressive and masked-diffusion families. The cleanest control is the Dream matched pair, which removes \emph{only} the in-loop commit step while keeping the sampling distribution identical.}
\label{tab:commit}
\small
\begin{tabular}{lllcccc}
\toprule
Model & Decoding & Commit variant & PPL & $d_2$ & eff.\ len & verdict \\
\midrule
Llama-2-7B & AR & hard $\argmax$ & --- & $0.541$ & --- & coherent \\
Llama-2-7B & AR & soft-carry (no commit) & --- & $0.135$ & --- & collapse \\
Llama-2-7B & AR & soft-carry, norm-matched & --- & $0.14$ & --- & collapse \\
\midrule
LLaDA-8B & masked & hard (standard) & $2.4$ & $0.82$ & --- & coherent \\
LLaDA-8B & masked & soft-refresh & --- & $0.47$ & --- & degrades \\
LLaDA-8B & masked & no-commit & --- & $0.01$ & --- & collapse \\
\midrule
Dream-7B & masked & hard-sample & $7.3$ & --- & $99$ & coherent \\
Dream-7B & masked & soft-refresh-sample & $38.6$ & --- & $22$ & collapse \\
\bottomrule
\end{tabular}
\end{table}

\section{Decoder Causal Intervention}\label{app:causal}
We test whether the few-step failure can be fixed at the readout. ELF's flow (transport) head and decoder (readout) head are separate modules; we finetune \emph{only} the readout (the post-transformer unembedding $\{$proj, unembed$\}$, $17$M parameters) on the published checkpoint, holding clean token recovery near $96.7\%$ with a recovery cross-entropy term. Because the transport map is untouched, the \emph{same} few-step latent is decoded by the original and the retrained readout, isolating the readout's contribution. Per-token unembedding columns are renormalized to their initial norms each step, so a margin objective cannot trivially rescale logits (a no-op for the normalized margin $\delta^*$ and for $\argmax$ decoding). Metrics use a held-out $512$-position subset, so the absolute $\DABI$ ($\approx 450$) is below the $508\times$ headline ($32{,}768$ positions); the \emph{relative} changes are what the test turns on.

\begin{table}[h]
\centering
\caption{\textbf{Decoder causal intervention} (ELF readout finetuned, transport frozen, clean recovery held). Terminal recon accuracy decodes the controlled posterior-mean roll-in latent of Theorem~\ref{thm:noncommit} to the correct token (baseline $59.3\% = 1 - 0.41$ terminal flip at $K{=}4$, matching Figure~\ref{fig:rollin}). No intervention recovers the flipped tokens.}
\label{tab:causal}
\small
\begin{tabular}{lccccc}
\toprule
Intervention & med.\ margin & $\delta^*$ & $\DABI$@$\kappa{=}1$ & clean acc & terminal recon ($K{=}4$) \\
\midrule
Baseline (frozen) & $22$ & $5.4$ & $453\times$ & $1.00$ & $59.3\%$ \\
Isotropic smoothing & --- & --- & $48$--$58\times$ & $0.99$ & --- \\
Max-margin & $70$ & $6.3$ & $1{,}469\times$ & $0.98$ & --- \\
Residual-targeted & $6$ & $3.0$ & $422\times$ & $0.96$ & $60.5\%$ \\
\bottomrule
\end{tabular}
\end{table}

\emph{Max-margin} maximizes the smooth-max normalized margin. It raises the median raw margin ($22 \to 70$) but \emph{increases} $\DABI$ ($453 \to 1{,}469\times$) and degrades generation at every step count ($K{=}4$ PPL $799 \to 895$): a more confident readout has a \emph{sharper} decision cliff, so a few-step residual that crosses a boundary incurs a larger cross-entropy jump, not a smaller one.

\emph{Residual-targeted} training instead minimizes the cross-entropy of decoding the realized few-step latents to the correct tokens, the most direct ``make the readout decode the few-step latent'' objective. On the controlled terminal posterior-mean step (baseline flip $41\%$), $4{,}000$ steps \emph{collapse} the readout margins ($\delta^*: 5.4 \to 3.0$, a much less sharp readout) yet raise held-out token accuracy by only $1.2$ points ($59.3 \to 60.5\%$). On uncoupled few-step generation, perplexity moves only through mode collapse: the retrained readout's outputs are degenerate repetitions (entropy $4.4 \to 3.2$, e.g.\ \texttt{/s/s/s}\dots), so the apparent change is the perplexity-collapse artifact, not recovery.

\emph{Nonlinear readout.} The residual-targeted test retrains a \emph{linear} readout; to rule out a linearity artifact, we replace it with a $2$-layer MLP (GELU, $1024$ hidden, $34$M parameters) trained on the identical residual objective and the same frozen terminal hidden. With strictly more capacity it overfits the training tokens to $99.6\%$, yet its held-out terminal recon is $56.8\%$, no better than a linear probe on the same hidden ($57.8\%$) or the model's own readout ($59.9\%$); the gap is the overfitting, not recovery. Added nonlinearity recovers none of the flipped tokens, which is the information-theoretic content of Theorem~\ref{thm:noncommit}: once the terminal step averages over the branch ambiguity, the correct branch is not a measurable function of the terminal latent, so no readout, linear or nonlinear, can recover it on held-out data.

The conclusion is consistent across all four interventions (isotropic smoothing, max-margin, and linear and nonlinear residual-retraining): the readout-only interventions we tested do not recover the tokens flipped by the deterministic terminal step. Under the posterior-mean model, once the terminal step averages over the branch ambiguity and lands in the $O(s(t))$ boundary tube, the branch identity is not present as a deterministic function of the terminal latent, so a readout-only fix is not expected. This places the failure on the transport side, with decoder sharpness as its faithful amplifier (Theorem~\ref{thm:noncommit}).

\paragraph{Adversarial/margin training along boundary normals.}
Targeted adversarial or margin training that pushes apart the boundary normals is exactly the max-margin intervention above: it widens the raw margins ($\delta^*\!: 5.4\to6.3$) but \emph{raises} $\DABI$ ($453\to1{,}469\times$) and degrades generation at every step count, because a more confident readout has a sharper decision cliff, so a boundary-crossing residual incurs a larger jump. Reshaping the boundary geometry faces the same wall as the nonlinear-readout head: once the posterior mean averages over the branch ambiguity, the correct token is not a function of the latent, so no boundary reshaping recovers it without distorting clean decoding (which collapses margins, the residual-targeted arm). Decoder-side margin engineering thus does not reduce few-step failure without sacrificing diversity or clean accuracy; the leverage is on the transport (sampler) side.

\paragraph{Direction and norm sensitivity of DABI.}
The margin-normal direction is the steepest-descent direction of the active margin, hence the worst-case boundary-aligned perturbation; $\DABI$ is therefore an upper envelope over locally adversarial directions rather than a knife-edge artifact of one choice. The response is also robust to the norm calibration $f$: the structured curve is monotone with onset near $f\approx0.8$ and reaches a $0.86$ flip rate by $f{=}1$ on ELF, while the isotropic curve stays at the floor throughout (Figure~\ref{fig:decsens}), so $f$ slightly below or above $1$ moves the absolute rate smoothly without changing the orders-of-magnitude gap.

\begin{figure}[t]
\centering
\includegraphics[width=0.85\textwidth]{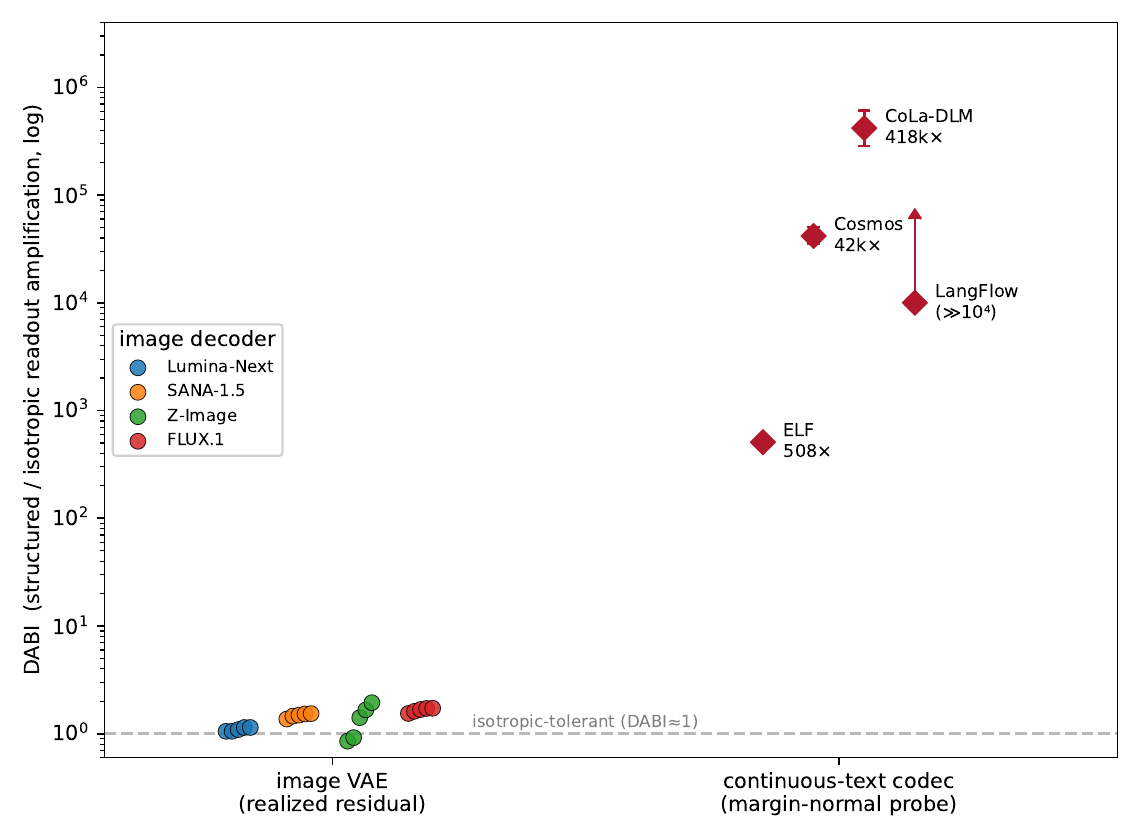}
\caption{\textbf{DABI image/text dichotomy} (visualizes Table~\ref{tab:dabi_roster}).
Four continuous-text codecs under the margin-normal probe: ELF $508\times$, Cosmos ${\approx}4{\times}10^{4}$, CoLa-DLM ${\approx}4{\times}10^{5}$, LangFlow ${\gg}10^{4}$ (isotropic at floor).
Image VAEs (Lumina-Next, SANA-1.5, Z-Image, FLUX.1), lacking a categorical readout, absorb their realized terminal residual at $\DABI \approx 1$ ($[0.85, 1.94]$).
The clusters are separated by orders of magnitude with no overlap.}
\label{fig:dabi_scatter}
\end{figure}

\paragraph{Image-side robustness to the loss metric.}
The image $\DABI \approx 1$ is not an artifact of the pixel-$L^2$ readout loss. With the learned LPIPS perceptual metric~\citep{zhang2018lpips} (AlexNet backbone, on the same decoded images) in place of pixel-$L^2$, the realized-residual probe on Lumina-Next, SANA-1.5, and FLUX.1 gives structured/random ratios of $0.93$--$1.08$ (Lumina), $1.15$--$1.25$ (SANA), and $0.66$--$0.95$ (FLUX) across $K \in \{1,2,4,8\}$: all ${\approx}1$, the same regime as pixel-$L^2$ ($[0.85,1.94]$), with no boundary amplification. A semantically-aware metric that up-weights perceptually salient changes still finds no categorical boundary to align with: the cross-domain gap is a property of the readout geometry, not of the choice of output-space loss.

\paragraph{Vector-quantized image readout (cross-domain mechanism check).}
A vector-quantized decoder reads out by nearest-codebook lookup, $\mathrm{code}(z)=\argmin_k\|z-e_k\|^2=\argmax_k(2e_k^\top z-\|e_k\|^2)$, a linear-argmax readout of the same form as a text decoder's $D_W(z)=\argmax_y w_y^\top z$ (here $w_k=2e_k$). We apply the margin-normal probe directly to the published MoVQGAN codebook ($16{,}384$ codes, latent dimension $d=4$), sampling latents near codebook points. A boundary-aligned perturbation flips the assigned code in $50.3\%$ of cases at $f{=}1$ versus $0.3\%$ for a norm-matched isotropic one (flip ratio $167\times$), with code cross-entropy $\DABI=42\times$: the same sharp-readout signature as the text and AR decoders, now in an image model. Because the VQ latent is only $4$-dimensional, the isotropic null space is tiny, so the $167\times$ gap cannot be attributed to null-space avoidance; it is boundary-normal alignment. A smooth image VAE has no such categorical boundary ($\DABI\approx1$); endowing an image decoder with one reproduces the failure signature, confirming the mechanism is the categorical readout, not the modality.

\paragraph{Absolute flip rates (not only ratios).}
The large $\DABI$ ratios are not ratio-inflation artifacts of a tiny denominator: the absolute structured flip rates are themselves high while the isotropic ones sit at the floor. At $\kappa{=}1$ the boundary-aligned perturbation flips $85.9\%$ of tokens on ELF (isotropic $3.1\%$), $100\%$ on LLaDA-8B and Dream-7B (isotropic $0.0\%$), ${\approx}50\%$ on FMLM (isotropic ${\approx}0\%$), and $59$--$69\%$ on LangFlow (isotropic $0.02\%$). The conclusion rests on the absolute structured flips being near-total and the isotropic flips being near-zero; the ratio only summarizes a gap that is already evident in the raw rates.

\paragraph{Row-space control (anisotropy beyond the null space).}
The realized residual is ${\approx}99\%$ in the decoder null space, so the isotropic control of Figure~\ref{fig:decsens} spends most of its norm where the readout is blind. To isolate anisotropy \emph{within} the row space, we sketch the full cross-position row space $\mathrm{row}(J)$ of the logit Jacobian $J=\partial(\text{all logits})/\partial(\text{all latents})$ (the active-pair boundary normals at every content position plus $k$ random-logit-cotangent VJPs, $k\in\{64,256\}$; per-sample SVD basis) and compare, at matched total norm, the structured margin-normal push (boundary-aligned, lying in $\mathrm{row}(J)$) against a random direction drawn \emph{within} $\mathrm{row}(J)$. On $4{,}096$ ELF positions the boundary-aligned push flips $46.9\%$ of tokens, a norm-matched random row-space direction flips $3.2\%$, and an isotropic full-space direction flips $0.9\%$. The structured-vs-isotropic ratio ($50\times$) reproduces the realized-residual DABI ($45.7\times$, a consistency check); the structured-vs-random-row ratio is $14.5\times$ ($15.3\times$ at $k{=}64$, so stable in the sketch size). Row-space membership accounts for only a ${\approx}3.5\times$ factor over isotropic; the remaining order of magnitude is specific boundary-normal alignment, so the damage is anisotropy, not mere row-space energy. We work in the first-order margin-normal regime, where the row/null decomposition is valid (the full realized residual is a large, partly nonlinear step); code is released.

\begin{figure}[t]
\centering
\includegraphics[width=0.82\textwidth]{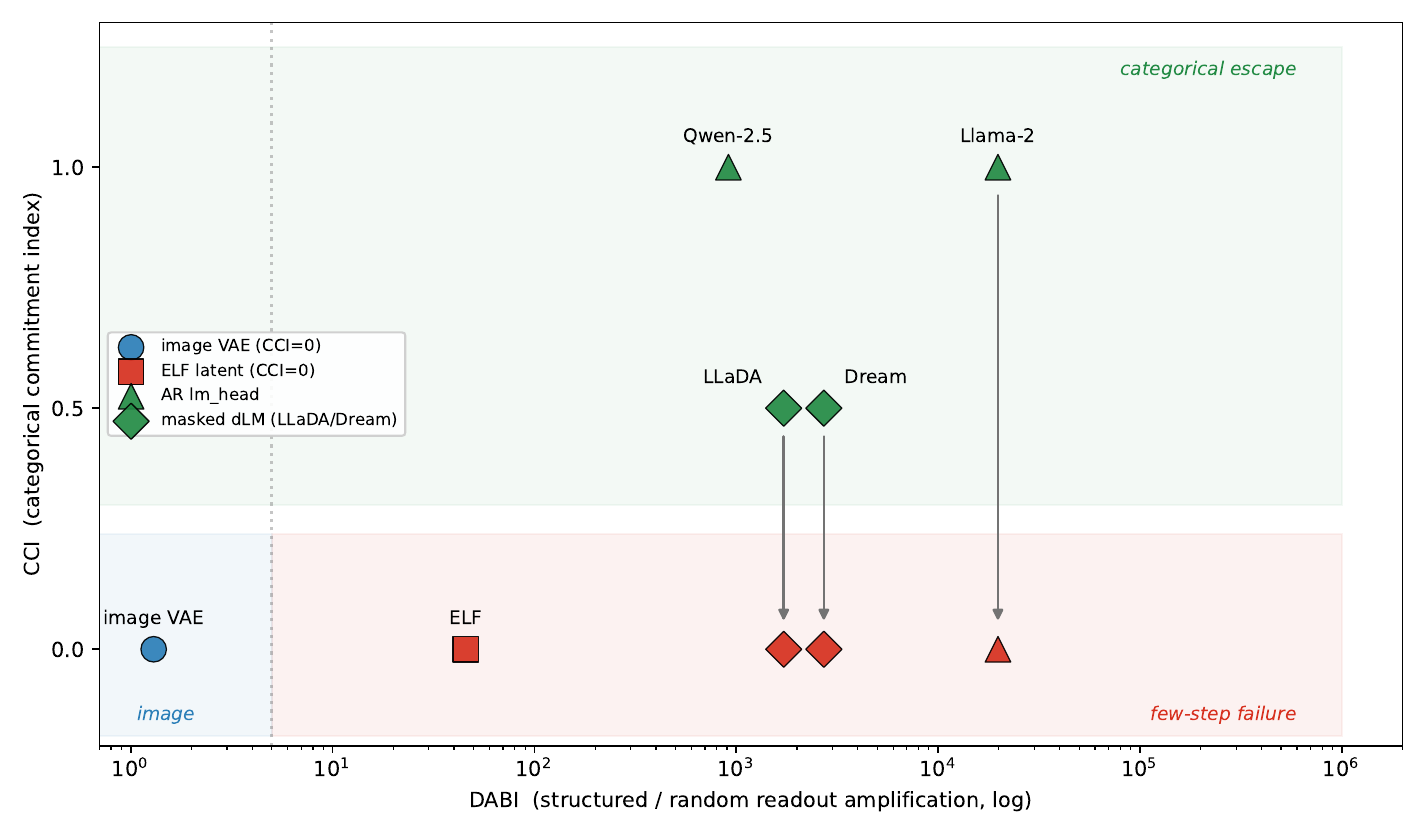}
\caption{\textbf{$\DABI \times \CCI$ taxonomy.}
Image VAEs: low $\DABI$, works. ELF: high $\DABI$, $\CCI = 0$, fails. AR/masked dLM: high $\DABI$, high $\CCI$, works. Arrows: commit-ablation collapse.}
\label{fig:taxonomy}
\end{figure}

\begin{figure}[t]
\centering
\includegraphics[width=0.9\textwidth]{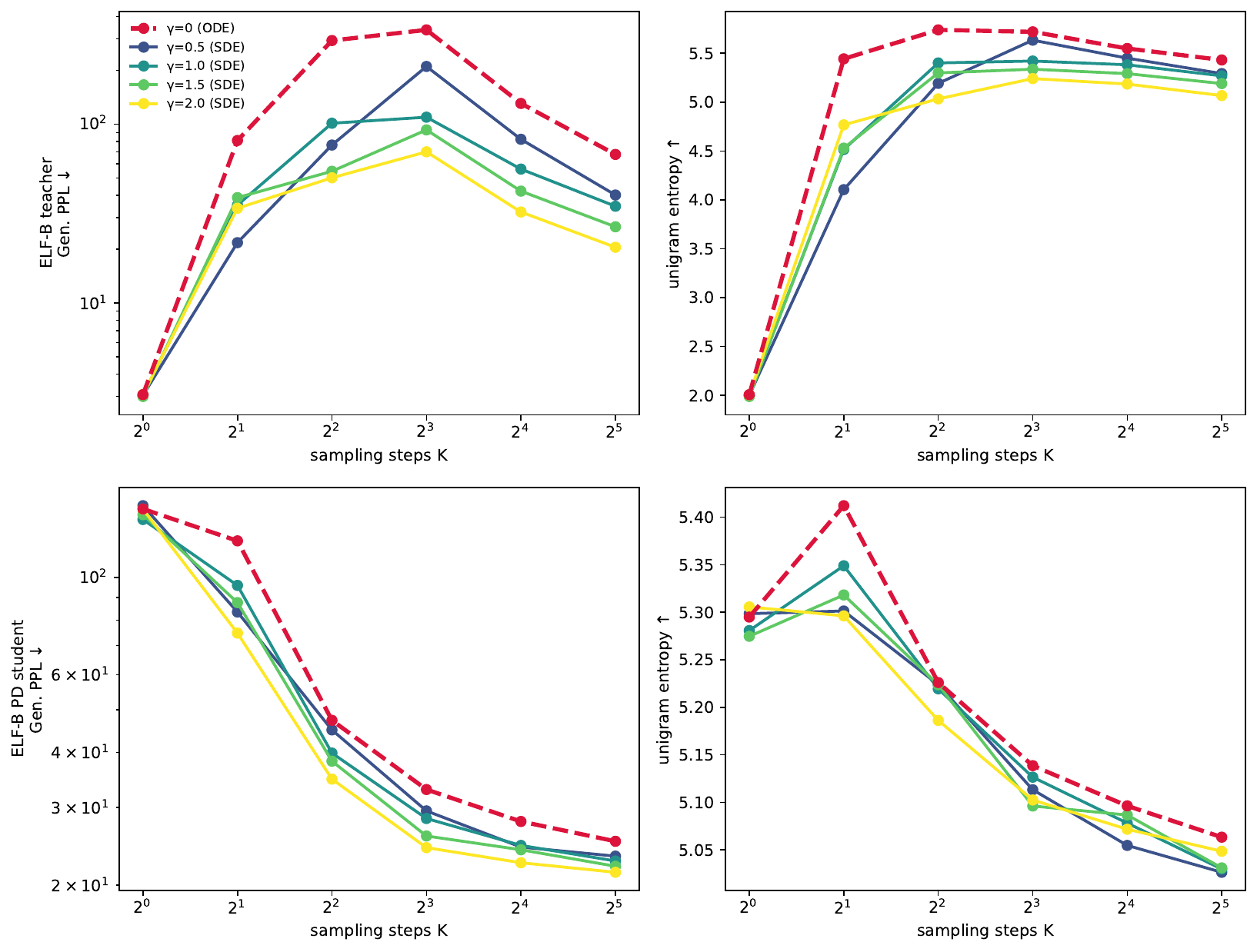}
\caption{\textbf{$\gamma$-sweep: ODE versus SDE.} ELF-B teacher (top) and PD student (bottom).
\textit{Left:} PPL versus $K$; ODE ($\gamma = 0$) stays high, SDE ($\gamma > 0$) escapes.
\textit{Right:} Entropy; teacher $K = 1$ ODE has PPL $3.07$ but entropy $2.01$ (mode collapse).
An independent $5$-seed rerun is stable: the ODE/SDE ordering and multilingual-collapse signature hold across seeds (per-cell SD ${<}10\%$).}
\label{fig:gamma}
\end{figure}

\section{Extended Related Work}\label{app:related}

\paragraph{Flow matching and rectified flow.}
\citet{lipman2023flow} and \citet{liu2023rectified} introduced flow matching and rectified flow for learning deterministic transport between distributions.
Consistency models \citep{song2023consistency} learn single-step generators via consistency training.
Progressive distillation \citep{salimans2022progressive} reduces the step count of a pretrained diffusion model.
All achieve few-step generation of images; none demonstrate few-step generation of text latents without stochastic sampling.

\paragraph{Lipschitz limitations for generative models.}
\citet{salmona2022lipschitz} showed that Lipschitz pushforward measures are limited on multimodal targets and that per-step noise injection can help.
Our Theorem~\ref{thm:gamma} provides the finer multiway, dimension-sensitive interface law.
The Lipschitz-constrained perspective also appears in Wasserstein GANs \citep{arjovsky2017wasserstein}, where the constraint is on the discriminator; here it is on the generator.

\paragraph{Adversarial robustness and margin geometry.}
DABI is, geometrically, a boundary-aligned sensitivity analysis: it displaces a latent along the readout normal $w_i - w_j$ and measures how few units flip the decoded token, precisely the small, boundary-normal perturbations that define adversarial examples~\citep{szegedy2014intriguing,goodfellow2015fgsm} and whose worst case PGD~\citep{madry2018pgd} searches for. That literature studies how to \emph{shrink} this sensitivity (large-margin and certified training); we use it as a diagnostic and in the opposite direction. A sharp text readout has small normalized margins ($\delta^*$ of order the residual scale near the boundary), so it is \emph{maximally} susceptible to a structured, boundary-aligned displacement, and the posterior-mean transport error supplies exactly such a displacement for free, an ``adversarial'' perturbation produced by the generator rather than by an attacker (Theorem~\ref{thm:noncommit}). The image-decoder $\DABI \approx 1$ is the same statement read the other way: a smooth decoder with no categorical readout has no low-margin boundary to attack. Few-step text failure is thus the generator inadvertently constructing a near-worst-case input for its own readout, which is why decoder-side margin shaping (Appendix~\ref{app:causal}) does not help: it moves the boundary the residual is already aligned to.

\paragraph{$\Gamma$-convergence and phase transitions.}
The Modica--Mortola $\Gamma$-limit \citep{modica1977esempio} and its multiwell generalizations \citep{baldo1990,fonseca1989gradient} are the soft-gradient analogue of our Theorem~\ref{thm:gamma}.
The difference is the constraint: we cap the pointwise operator norm of the gradient (a Lipschitz condition) rather than penalizing it with a double-well potential.

\paragraph{Gaussian isoperimetry and Gaussian widths.}
The Gaussian isoperimetric inequality \citep{borell1975,sudakov1978extremal} and generic chaining \citep{talagrand2005chaining} are the standard tools for our Theorem~\ref{thm:dim}.
The Milman--Neeman multi-bubble theorem \citep{milman2022gaussian} gives the exact minimizer when $M \le n + 1$; our Fourier-code construction complements this by handling $n \ll M$.

\paragraph{Text generation via continuous latents.}
ELF \citep{he2026elf} encodes text into a continuous latent and generates via flow matching.
FMLM \citep{fmlm2026} uses one-hot encoding with $\argmax$ decoding, an instantiation of categorical escape.
RAE \citep{rae2025} uses a frozen semantic encoder with a trained decoder and high-dimensional latent diffusion at $50$ NFE, consistent with our finding that many steps are needed for continuous text latents.
The concurrent ELF+PD \citep{he2026elfpd} confirms the ODE-vs-SDE gap on the same model we study, with the authors explicitly noting that SDE corrects ``deterministically amplifying imperfect trajectories.''

\paragraph{Autoregressive and masked diffusion language models.}
Autoregressive models \citep{brown2020language,touvron2023llama2} generate tokens left-to-right with categorical commitment at each position.
Masked diffusion language models \citep{sahoo2024mdlm,nie2025llada,dream2025} generate by iteratively remasking and predicting discrete tokens.
Both escape the continuous lower bounds via categorical commitment (Lemma~\ref{lem:escape}).

\paragraph{Margin geometry and interface diagnostics.}
DABI is an interface audit of the decoder: it measures sensitivity along the readout-normal direction, the same boundary-aligned geometry studied by margin-based robustness and Lipschitz-margin bounds for classifiers~\citep{szegedy2014intriguing,goodfellow2015fgsm,madry2018pgd}, where a small normalized margin certifies vulnerability to a boundary-normal perturbation. Our contribution is to turn this diagnostic on the \emph{generator}: the transport residual supplies the boundary-aligned perturbation for free, so a sharp decoder (small margins) is exactly the regime a few-step generator cannot satisfy. The image/text dichotomy and the VQ control (Appendix~\ref{app:causal}) read this interface across modalities; CCI complements it with the orthogonal commitment axis.

\paragraph{Samplers that re-inject categorical or endpoint uncertainty.}
Several continuous-text samplers restore uncertainty during sampling rather than committing deterministically: predictor--corrector and bridge-style samplers add a corrector or a stochastic bridge step that re-injects endpoint or categorical noise. In our taxonomy these are instances of the stochastic re-injection escape (Section~\ref{sec:stochastic}): leaving the deterministic transport class is precisely what defeats the lower bound, whether the injection is Gaussian (SDE), categorical (commitment), or a corrector that re-randomizes the endpoint. The bound constrains the deterministic-continuous corner; every such sampler steps out of it by construction.

\end{document}